\documentclass[preprint,12pt,authoryear]{elsarticle}

\usepackage{latexsym}
\usepackage{amsthm}
\usepackage{booktabs}
\usepackage{enumitem}
\usepackage{graphicx}
\usepackage{color}
\usepackage{times}
\usepackage{soul}
\usepackage{url}
\usepackage[utf8]{inputenc}
\usepackage{graphicx}
\usepackage{amsmath}
\usepackage{amsthm}
\usepackage{booktabs}
\usepackage{algorithm}
\usepackage{algorithmic}
\usepackage[switch]{lineno}
\usepackage{amssymb}
\usepackage[table]{xcolor}

\usepackage{tikz}
\usetikzlibrary {arrows.meta}

\usepackage{balance}
\usepackage{subcaption}

\usepackage{ stmaryrd }

\newtheorem{theorem}{Theorem}

\newtheorem{proposition}[theorem]{Proposition}

\newtheorem{definition}{Definition}
\newtheorem{notation}{Notation}
\newtheorem{example}{Example}
\newtheorem*{proposition*}{Proposition}

\newcommand{\BibTeX}{B\kern-.05em{\sc i\kern-.025em b}\kern-.08em\TeX}

\renewcommand{\vec}[1]{\mathbf{#1}}

\newcommand{\accs}{\emph{\textup{Deg}}^{S}_{A}}
\newcommand{\acc}{\emph{\textup{Deg}}_{A}}
\newcommand{\accprime}{\emph{\textup{Deg}}_{A'}}
\newcommand{\accsprime}{\emph{\textup{Deg}}^{S}_{A'}}
\newcommand{\customaccs}[1]{\emph{\textup{Deg}}_{A}^{#1}}
\newcommand{\customacc}[1]{\emph{\textup{Deg}}_{#1}}
\newcommand{\customaccbis}[2]{\emph{\textup{Deg}}_{#1}^{#2}}

\newcommand{\mj}[1]{{#1}}

\newcounter{magicrownumbers}
\newcommand\rownumber{\stepcounter{magicrownumbers}\arabic{magicrownumbers}}

\journal{Artificial Intelligence}

\begin{document}

\begin{frontmatter}



\title{Aggregative Semantics for Quantitative Bipolar Argumentation Frameworks}


\author[SU]{Yann Munro} 
\author[SU]{Isabelle Bloch} 
\author[SU]{Marie-Jeanne Lesot} 

\affiliation[SU]{organization={Sorbonne Université, CNRS, LIP6},
            city={F-75005 Paris},
            country={France}}

\begin{abstract}
Formal argumentation is being used increasingly in artificial intelligence as an effective and understandable way to model potentially conflicting pieces of information, called arguments, and identify so-called acceptable arguments depending on a chosen semantics. This paper deals with the specific context of Quantitative Bipolar Argumentation Frameworks (QBAF), where arguments have intrinsic weights and can attack or support each other. In this context, we introduce a novel family of gradual semantics, called aggregative semantics. In order to deal with situations in which attackers and supporters do not play a symmetric role, and in contrast to modular semantics, we propose to aggregate attackers and supporters separately. This leads to a three-stage computation, which consists in computing a global weight for attackers and another for supporters, before aggregating these two values with the intrinsic weight of the argument. We discuss the properties that the three aggregation functions should satisfy depending on the context, as well as their relationships with the classical principles for gradual semantics. This discussion is supported by various simple examples, as well as a final example on which five hundred aggregative semantics are tested and compared, illustrating the range of possible behaviours with aggregative semantics. Decomposing the computation into three distinct and interpretable steps leads to a more parametrisable computation: it keeps the bipolarity one step further than what is done in the literature, and it leads to more understandable gradual semantics. 
\end{abstract}



\begin{keyword}
Abstract Argumentation \sep Quantitative Bipolar Argumentation Framework \sep Gradual Semantics \sep Aggregation function



\end{keyword}

\end{frontmatter}


\section{Introduction}

Formal argumentation is a theoretical framework used to model and reason about conflicting pieces of information. These pieces of information are called \emph{arguments} and a binary relation between them, called \emph{attack relation}, is used to represent their incompatibilities. Since the work of~\citet{dung1995acceptability}, enriched versions have been proposed to model more complex situations~\citep[see for example][]{bench2002value,cayrol2005gradual,li2011probabilistic,fazzinga2023incomplete}. In this paper, two of them are considered: bipolar argumentation frameworks~\citep{cayrol2005acceptability} and weighted argumentation frameworks~\citep{amgoud2017acceptability}. In bipolar argumentation, a second binary relation, independent of the attack relation and of an opposite nature, called \emph{support}, is introduced and represents a direct support of an argument. Thus, the bipolar framework combines positive support information and negative attack information, which can be asymmetric. Weighted argumentation frameworks add a weight on each argument, representing its intrinsic strength, modifying the impact of an argument on the arguments it attacks. In this paper, we consider the combination of these two formalisms called Quantitative Bipolar Argumentation Framework, QBAF~\citep[see][for a complete introduction]{baroni2019fine}.

To determine the status of each argument in this framework, two types of approaches have been proposed: extension-based semantics~\citep{cayrol2005acceptability} look for sets of collectively acceptable arguments, whereas gradual semantics~\citep{cayrol2005gradual} assign an acceptability numerical value to each argument. In this article we consider the latter: we propose a new gradual semantics family, called \emph{aggregative} semantics, which independently aggregates attacks and supports into two distinct values, and then combines these two values with the argument weight to compute its acceptability. Unlike existing gradual semantics, this approach breaks down the calculation of an argument acceptability into three distinct steps, allowing each of them to be constrained independently and to maintain bipolarity one step further. Moreover, by connecting to the field of aggregation operators, it opens the way to new semantics for argumentation. 

After the reminders provided in Section~\ref{sec:prelim}, the main contributions introduced in this paper, detailed in Sections~\ref{sec:agreg}, \ref{sec:discu_agreg}, \ref{sec:principles} and~\ref{sec:ex_comp} respectively, are:
\begin{itemize}
    \item the definition of a new family of gradual semantics, called aggregative semantics;
    \item an axiomatisation of the aggregative semantics along with a detailed and illustrated guide to choose appropriate functions;
    \item a formal comparison with the commonly used gradual semantics in the literature;
    \item an experimental evaluation of $515$ aggregative semantics on an illustrative example.
\end{itemize}

\section{Preliminaries}
\label{sec:prelim}

This section is divided into two subsections, each introducing a formalism used in this paper. In the first subsection, the basics of Quantitative Bipolar Argumentation Frameworks (QBAF) and the associated notion of gradual semantics are presented. Then, aggregation functions are defined, along with the properties that characterise them.

\subsection{Quantitative Bipolar Argumentation Frameworks and Gradual Semantics}

We first remind the formal definition of a QBAF as well as some useful notations~\citep[see][for more details]{baroni2019fine} before introducing the notion of gradual semantics~\citep{cayrol2005gradual}. We then present the main principles proposed in the literature to characterise them as well as the most commonly used gradual semantics. We finally discuss a specific principle called modularity, because it gives birth to a subclass of gradual semantics, called modular semantics~\citep{mossakowski2018modular}, whose definition is close to what we propose in Section~\ref{sec:agreg}.

\subsubsection{Definition of Quantitative Bipolar Argumentation Frameworks}

The concept of an Abstract Argumentation Framework~(AAF) has first been introduced by~\citet{dung1995acceptability} as a couple~${AF = (\mathcal{A},\mathcal{R})}$, where~$\mathcal{A}$ is a finite set of arguments and~$\mathcal{R}$ is a set of couples made of  an attacker and a target, defined as a binary relation on~$\mathcal{A} \times \mathcal{A}$, called the \emph{attack} relation. It has then been extended in order to be applicable to different scenarios. In this article, we consider the extension called Quantitative Bipolar Argumentation Framework:

\begin{definition}[Quantitative Bipolar Argumentation Framework~\citep{baroni2019fine}]

A Quantitative Bipolar Argumentation Framework (QBAF) is a quadruplet $A = (\mathcal{A},\mathcal{R},\mathcal{S},w)$ where: \begin{itemize}
    \item $(\mathcal{A},\mathcal{R})$ is an AAF;
    \item $\mathcal{S}$ is a set of couples made of a supporter and a target, defined as a binary relation on~$\mathcal{A} \times \mathcal{A}$, called the \emph{support} relation;
    \item $w: \mathcal{A} \rightarrow I_w$ is a function that assigns an intrinsic strength to each argument, taking values in a pre-ordered set of values~$I_w$, usually $I_w= [0,1]$.
\end{itemize}

 WAG denotes the set of QBAFs. Additionally, ac-WAG denotes the subset of acyclic QBAFs using $\mathcal{R} \cup \mathcal{S}$ as the set of edges for their associated graph.   
\end{definition}

Throughout this paper, we impose the following consistency assumption: if $\mathcal{R} \neq \emptyset$ and $\mathcal{S} \neq \emptyset$, then ${\mathcal{R} \cap \mathcal{S} = \emptyset}$, i.e. no argument both attacks and supports the same argument. We assume that such potential inconsistencies have been handled beforehand when building the QBAF.

Since $\mathcal{R}$ and $\mathcal{S}$ are binary relations defined over a finite set, a QBAF can be represented as a labelled directed graph in which the vertices represent the arguments, the labels represent their intrinsic strength, and the edges are of two types, representing the attack and support relations.  Figure~\ref{fig:QBAF} illustrates such a graph for the QBAF associated with Example~\ref{ex:toy_ex}, described later in this section.

In most articles in the QBAF literature, for any argument~$a$, the associated value $w(a)$  is referred to as its intrinsic strength~\citep{amgoud2018weighted,baroni2019fine}. However, its meaning can vary. For example, in a framework for modelling online debates, \citet{rago2017quantitative} associate this value with the number of positive votes in favour of an argument. \citet{potyka2021interpreting} proposes to use QBAFs to model neural networks and the intrinsic strength to represent the bias of each neuron. Another possibility is to consider that the intrinsic strength of an argument represents the a priori confidence in the source enunciating the argument~\citep{da2011changing}. 
In this paper, we do not consider any particular meaning to keep the discussion as general as possible.

Let us introduce some notations used in the sequel:
\begin{notation} Let $A=(\mathcal{A},\mathcal{R},\mathcal{S},w)$ and $A'=(\mathcal{A}',\mathcal{R}',\mathcal{S}',w')$ be two QBAFs, and $a \in \mathcal{A}$ an argument:

\begin{itemize}
	\item $\textup{Att}_a = \{b \in \mathcal{A} \mid (b,a) \in \mathcal{R} \}$ is the set of attackers of argument $a$;
	\item $\textup{Supp}_a = \{c \in \mathcal{A} \mid (c,a) \in\mathcal{S}\}$ is the set of supporters of argument $a$;
	\item  If ${\mathcal{A} \cap \mathcal{A}' = \emptyset}$, then the union of $A$ and $A'$, denoted $A'' = A\oplus A'$, is the QBAF~${(\mathcal{A}'',\mathcal{R}'',\mathcal{S}'',w'')}$ defined by: $\mathcal{A}'' = \mathcal{A} \cup \mathcal{A}'$, ${\mathcal{R}'' = \mathcal{R} \cup \mathcal{R}'}$, ${\mathcal{S}'' = \mathcal{S} \cup \mathcal{S}'}$, and $w''(a) = w(a)$ if $a \in \mathcal{A}$ and $w''(a) = w'(a)$ if $a \in \mathcal{A}'$;
	\item An isomorphism from $A$ into $A'$ is a bijection $f: \mathcal{A} \mapsto \mathcal{A}'$ which preserves relations and labels of the graphs: $\forall a,b \in \mathcal{A}, w(a) = w'(f(a)), (b,a) \in \mathcal{R}$ iff $(f(b),f(a)) \in \mathcal{R}'$, and $(b,a) \in \mathcal{S}$ iff $(f(b),f(a)) \in \mathcal{S}'$.
\end{itemize}
\end{notation}

We now introduce the illustrative toy example used throughout the paper.

\begin{example}\label{ex:toy_ex}
    Let us consider the QBAF $A = { (\mathcal{A},\mathcal{R},\mathcal{S},w)}$, with five formal arguments represented in Figure~\ref{fig:QBAF}: one central argument $a$ called the topic, two supporters for $a$ and two attackers against $a$.     
    Formally: 
    
    $\bullet$ $\mathcal{A} = \{a,b,c,d,e\}$ \hfill $\bullet$ $\mathcal{R} = \{(b,a),(e,a)\}$ \hfill $\bullet$ $\mathcal{S} = \{(c,a),(d,a)\}$\\and $w$ is shown in Figure~\ref{fig:QBAF}. 
\end{example}

\begin{figure}[t]
	\centering
	\begin{tikzpicture}[scale=0.9,transform shape, node distance={25mm}, main/.style = {draw, circle,minimum size=12mm}]
		\node[main] (1) {$a,0.5$};
		
		\node[main] (2)  at (-2.4,0.7) {$c,0.2$};
		\node[main] (3)  at (-2.4,-0.7) {$d,0.8$};
		\node[main] (4)  at (2.4,0.7) {$b,0.9$};
		\node[main] (5)  at (2.4,-0.7) {$e,0.1$};

		\draw [arrows = {-Latex[scale=1.5]}, dashed] (2) -- (1);
		\draw [arrows = {-Latex[scale=1.5]},dashed] (3) -- (1);
		\draw [arrows = {-Latex[scale=1.5]}] (4) -- (1);
		\draw [arrows = {-Latex[scale=1.5]}] (5) -- (1);

	\end{tikzpicture}
	\caption{Argumentation graph associated with Example~\ref{ex:toy_ex}. Plain lines represent the attack relation and dotted lines the support relation. The value beside each argument name is its intrinsic strength, in the range~$[0,1]$.}
	\label{fig:QBAF}
\end{figure}
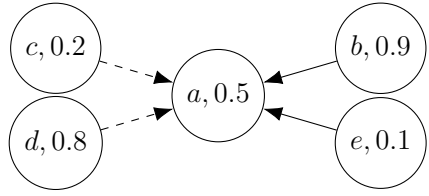

\subsubsection{Gradual Semantics}\label{sec:sem_grad}

Given a QBAF, a notion of acceptability degree of an argument is often used in order to infer an outcome, i.e. an acceptability status for each argument. This acceptability degree is computed using a function called a gradual semantics:

\begin{definition}[Gradual semantics]\label{def:sem_grad}
	
	A gradual semantics~$S$ is a function defined on WAG that outputs, for each QBAF $A$ in WAG, a function $\accs: \mathcal{A} \rightarrow I$, where $\accs(a)$ is the acceptability degree of argument $a$, taking values in the ordered set $I$, for instance $[0,1]$.	
\end{definition}

\noindent By abuse of language and for the rest of the paper, given a semantics $S$ and a WAG $A$, we also call semantics the induced acceptability function $\accs$, simply denoted $\acc$ if no ambiguity arises.

Several gradual semantics have been proposed in the literature. We present here the induced acceptability functions of the three most common ones:

$\bullet$ Discontinuity-Free Quantitative argumentation debate, DF-Quad~\citep{rago2016discontinuity}: 
\begin{align}
    \customaccs{DF}(a) &= w(a) - w(a) \max(0,-s) + (1-w(a)) \max(0,s) \nonumber \\    
    \textup{where } \displaystyle s &= \prod_{b \in \textup{Att}_a} (1-\customaccs{DF}(b)) - \prod_{c \in \textup{Supp}_a} (1-\customaccs{DF}(c)). \label{eq:df-quad}
\end{align}

$\bullet$ Euler-based, Ebs~\citep{amgoud2018weighted}: \begin{align}
    \displaystyle\customaccs{Ebs}(a) &= 1 - \frac{1-w(a)^2}{1+w(a)e^s} \nonumber\\
    \textup{where } \displaystyle s &= \sum_{c \in \textup{Supp}_a} \customaccs{Ebs}(c) - \sum_{b \in \textup{Att}_a} \customaccs{Ebs}(b). \label{eq:ebs}
\end{align}

$\bullet$ Quadratic Energy, QE~\citep{potyka2018continuous}: \begin{align}
    \displaystyle\customaccs{QE}(a) &= w(a) \left(1 - \frac{\max(0,-s)^2}{1 + \max(0,-s)^2}\right) + (1-w(a))\frac{\max(0,s)^2}{1 + \max(0,s)^2} \nonumber \\
    \textup{where }\displaystyle s &= \sum_{c \in \textup{Supp}_a} \customaccs{QE}(c) - \sum_{b \in \textup{Att}_a} \customaccs{QE}(b). \label{eq:qe}
\end{align} 

\paragraph{Remark} These three functions are defined recursively. Indeed, in order to compute the acceptability of an argument, the acceptability of other arguments in the graph must first be determined. In the case of cyclic graphs, convergence is not guaranteed a priori and requires a convergence study. Such a study has been performed for DF-Quad, Ebs and QE, concluding  that those semantics converge under certain constraints on the cyclic WAG~\citep{mossakowski2018modular,potyka2018continuous}. In the case of ac-WAG, this problem does not occur.

\begin{example}\label{ex:grad_sem}
    Considering the acyclic QBAF~$A$ represented in Figure~\ref{fig:QBAF}, these three common semantics respectively lead to the following acceptability degree for argument~$a$: \begin{itemize}
        \item $\customaccs{DF}(a) = 0.47$
        \item $\customaccs{Ebs}(a) = 0.5$
        \item $\customaccs{QE}(a) = 0.5$
    \end{itemize}

    Ebs and QE semantics return the same result because $s=0$, since the sum of the acceptability degrees of the attackers is equal to the sum of the acceptability degrees of the supporters. By contrast, DF-Quad favours the side of the strongest argument (b in this case), i.e. the attackers. Therefore, the acceptability degree of $a$ decreases as compared to its intrinsic weight.
\end{example}

In the literature~\citep{amgoud2018weighted}, arguments whose acceptability degree equals $0$ play a special role because they are considered worthless\footnote{We discuss those so-called worthless arguments later in Section~\ref{sec:discu_agreg}.}. Therefore, they are not taken into account in some properties. Consequently, we introduce the following notations:
\begin{notation} For a given semantics $S$, omitted in the notation below:

\begin{itemize}
	\item $\textup{sAtt}_a = \{b \in \textup{Att}_a \mid \acc(b) \neq 0 \}$ is the set of attackers of $a$ with non-zero acceptability degree;
	\item $\textup{sSupp}_a = \{c \in \textup{Supp}_a \mid \acc(c) \neq 0\}$ is the set of supporters of $a$ with non-zero acceptability degree.
\end{itemize}
\end{notation}

\subsubsection{Principles for Gradual Semantics}\label{sec:ax_aaf}

Definition~\ref{def:sem_grad} is very general and does not impose any constraint on the form that $\acc$ should take. An axiomatic approach has been proposed to guide the choice of this function according to a desired behaviour~\citep{amgoud2018weighted}, and was further developed by~\citet{mossakowski2018modular} and \citet{potyka2019extending}. In particular, \citet{amgoud2018weighted} formulate twelve general principles, named axioms in the seminal paper. We provide a brief description of each of the twelve original principles below, reproducing their formal definition and interpretation.

Let $A=(\mathcal{A},\mathcal{R},\mathcal{S},w),A'=(\mathcal{A}',\mathcal{R}',\mathcal{S}',w')$ be two QBAFs with ${I_w = I = [0,1]}$:

\begin{description}
	\item[(A1) --] Anonymity: The acceptability degree of an argument is invariant under isomorphism, expressing the indistinguishability of isomorphic QBAFs. \\Formally, given two isomorphic QBAFs $A$ and $A'$ and an isomorphism $f$ from  $A \textup{ to } A'$: $$\forall a \in \mathcal{A}, \acc(a) = \accprime(f(a))$$
	\item[(A2) --] Independence: The acceptability degree of an argument only depends on the vertices it is connected to. In other words, it is possible to handle each connected component of a graph independently. \\Formally, $\textup{if } \mathcal{A} \cap \mathcal{A}' = \emptyset \textup{, and then } \mathcal{R} \cap \mathcal{R}' = \mathcal{S} \cap \mathcal{S}' = \emptyset:$ $$\forall a\in \mathcal{A}, \acc(a) = \customacc{{A}\oplus A'}(a)$$
	\item[(A3) --] Directionality: The acceptability degree of an argument only depends on its ancestors, i.e. its attackers (direct or indirect) and supporters (direct or indirect). \\Formally, if $\mathcal{A} = \mathcal{A'}, w = w', \mathcal{R} \subseteq \mathcal{R}'$ and $\mathcal{S} \subseteq \mathcal{S}'$ then: \begin{center}
		$\forall a,b,x \in \mathcal{A}, \textup{ if }\mathcal{R}' \cup \mathcal{S}' = \mathcal{R} \cup\mathcal{S}\cup \{(a,b)\} \textup{ and there is no path from } b \textup{ to } x,$ then $\acc(x) = \accprime(x)$
	\end{center} 
	\item[(A4) --] Equivalence: The acceptability degree of an argument only depends on its intrinsic strength and the acceptability degree of its attackers and supporters. \\Formally, given $a,b \in \mathcal{A}$ two arguments such that: \begin{itemize}
		\item $w(a) = w(b)$ 
		\item $\exists f: \textup{Att}_a \to \textup{Att}_b$ bijective such that $\forall x \in \textup{Att}_a, \acc(x) = \acc(f(x))$
		\item $\exists f': \textup{Supp}_a \to \textup{Supp}_b$ bijective such that: $\forall y \in \textup{Supp}_a, {\acc(y) = \acc(f'(y))}$
	\end{itemize}
\smallskip
then $\acc(a) = \acc(b)$
	\item[(A5) --] Stability: The acceptability degree of an unattacked and unsupported argument is equal to its intrinsic strength. \\Formally, $\forall a \in \mathcal{A}$: \begin{center}
		if $\textup{Att}_a = \textup{Supp}_a = \emptyset$, then $\acc(a) = w(a)$
	\end{center}
	\item[(A6) --] Neutrality: Arguments whose acceptability degree equals $0$ have no impact on the acceptability degree of the arguments they are attacking or supporting.\\Formally, given  $a,b,x \in \mathcal{A}$ three arguments such that:\begin{itemize}
		\item $w(a) = w(b)$
		\item $\acc(x) = 0$ 
		\item $\textup{Att}_{a} \subseteq \textup{Att}_b, \textup{Supp}_{a} \subseteq \textup{Supp}_b$
		\item $\textup{Att}_b \cup \textup{Supp}_b = \textup{Att}_a \cup \textup{Supp}_a \cup \{x\}$
	\end{itemize}
\smallskip
then $\acc(a) = \acc(b)$ 
	\item[(A7) --] Monotony: An argument $b$ with the same intrinsic strength as another argument $a$ but having at least the same set of attackers and at most the same set of supporters has an acceptability degree less than or equal to that of $a$. \\Formally, let $a,b \in \mathcal{A}$ such that $w(a) = w(b)$. \\If $\textup{Att}_a \subseteq \textup{Att}_b$ and ${\textup{Supp}_b \subseteq \textup{Supp}_a}$, then: \begin{itemize}
		\item $\acc(a) \geq \acc(b)$ \hfill (Monotony)
    \end{itemize}
    If additionally $\acc(a)>0$ and $ \textup{sAtt}_a \subsetneq \textup{sAtt}_b$, \\ or if $\acc(b)<1$ and ${\textup{sSupp}_b \subsetneq \textup{sSupp}_a}$, then:\begin{itemize}
        \item $\acc(a)> \acc(b)$ \hfill (Strict monotony)
    \end{itemize}
	\item[(A8) --] Reinforcement: Consider that two arguments, $a$ and $b$, that have the same intrinsic strength and are attacked by the same set of arguments, and supported by the same set of arguments with only one attacker and one supporter differing between them. If this unique attacker of $a$ is less acceptable than this unique attacker of $b$, and if this unique supporter of $a$ is more acceptable than this unique supporter of $b$, then argument $a$ is at least as acceptable as argument $b$. \\
Formally, let $a,b \in \mathcal{A}$, $C, C' \subseteq \mathcal{A}$, and $x,x',y,y' \in \mathcal{A} \setminus (C \cup C')$ such that:
\begin{itemize}
	\item $w(a) = w(b)$
	\item $\textup{Att}_a = C \cup \{x\}$ and $\textup{Att}_b = C \cup \{y\}$
	\item $\textup{Supp}_a = C' \cup \{x'\}$ and $\textup{Supp}_b = C' \cup \{y'\}$
	\item $\acc(x) \leq \acc(y)$ and $\acc(x') \geq \acc(y')$
\end{itemize}
then:
\begin{itemize}
	\item $\acc(a) \geq \acc(b)$ \\ \phantom{a} \hfill (Reinforcement)
	\item If additionally $\acc(a) > 0$ and $\acc(x) < \acc(y)$, 
    \\or if ${\acc(b) < 1}$ and $\acc(x') > \acc(y')$, 
    \\then $\acc(a) > \acc(b)$ \\ \phantom{a} \hfill (Strict Reinforcement)
\end{itemize}
	\item[(A9) --] Resilience: An argument whose intrinsic strength is neither maximal nor minimal cannot have a maximal or minimal acceptability degree. \\ Formally, $\forall a \in \mathcal{A}$:
\begin{center}
	If $0 < w(a) < 1$, then $0 < \acc(a) < 1$
\end{center}

\item[(A10) --] Franklin’s Principle: Adding an attacker and a supporter with the same acceptability degree to an argument decreases or maintains the acceptability of that argument. \\Formally, let $a,b,x,y \in \mathcal{A}$ such that:
\begin{itemize}
	\item $w(a) = w(b)$
	\item $\textup{Att}_a = \textup{Att}_b \cup \{x\}$ and $\textup{Supp}_a = \textup{Supp}_b \cup \{y\}$
	\item $\acc(x) = \acc(y)$
\end{itemize}
then:
\begin{itemize}
	\item $\acc(a) \leq \acc(b)$ \hfill (Franklin’s Principle)
	\item $\acc(a) = \acc(b)$ \hfill (Strict Franklin Principle)
\end{itemize}

\item[(A11) --] Weakening: If the attackers of an argument are of ``better quality'' than the supporters of this argument, then its acceptability is less than its intrinsic strength. More precisely, if each supporter of an argument $a$ can be matched with a distinct attacker that is at least as acceptable, and if some of these attackers are more acceptable than their matched supporters or if there are additional attackers that are not matched, then the acceptability of argument~$a$ is lower than its intrinsic strength. \\Formally, let $a \in \mathcal{A}$. If $w(a) > 0$ and there exists an injective function $f$ from $\textup{Supp}_a$ to $\textup{Att}_a$ such that:
\begin{itemize}
	\item $\forall x \in \textup{Supp}_a,\ \acc(x) \leq \acc(f(x))$
	\item $\textup{sAtt}_a \setminus \{f(x)\ |\ x \in \textup{Supp}_a\} \neq \emptyset$
    \\or $\exists x \in \textup{Supp}_a$ such that $\acc(x) < \acc(f(x))$
\end{itemize}
then $\acc(a) < w(a)$.

\item[(A12) --] Strengthening: If the supporters of an argument are of ``better quality'' (in the same sense as in \textbf{A11}) than its attackers, then its acceptability is greater than its intrinsic strength. \\Formally, let $a \in \mathcal{A}$. If $w(a) > 0$ and there exists an injective function $f$ from $\textup{Att}_a$ to $\textup{Supp}_a$ such that: \begin{itemize}
		\item $\forall x \in \textup{Att}_a$, $\acc(x) \leq \acc(f(x))$
		\item $\textup{sSupp}_a \setminus  \{f(x) | x \in \textup{Att}_a \} \neq \emptyset $ 
        \\or  $\exists x \in \textup{Att}_a$ such that $\acc(x) < \acc(f(x))$
	\end{itemize}
	\smallskip
	then $\acc(a) > w(a)$.
	
\end{description}

In this paper, we study additional principles of gradual semantics and formally compare them with these original twelve axioms. 

\subsubsection{Modular semantics}\label{sec:modular_sem}

\citet{mossakowski2018modular} define additional principles for gradual semantics. Among them, \emph{modularity} specifies the form that the acceptability function induced by the semantics should take. Consequently, this principle gives birth to a subclass of gradual semantics called \emph{modular semantics}. Rather than using an argumentative formalism, Mossakowski and Neuhaus' work is based on a matrix representation of QBAFs. Later in the literature, the notion of modular semantics has been reformulated, not as a principle, but directly as a family of gradual semantics and with an argumentation-based formalisation~\citep[see for example the work by][]{potyka2019extending,degradual,kampik2024change,potyka2024balancing}. In this section we present these two definitions of a modular semantics.

We first remind the original definition.

\begin{definition}[Modular semantics~\citep{mossakowski2018modular}]\label{def:modular1}
A modular semantics is a function $S$ that maps a QBAF $A = (\mathcal{A},\mathcal{R},\mathcal{S},w)$ to an acceptability function $\accs$, taking values in any connected subset of $\mathbb{R}$, that can be written, for any $a\in \mathcal{A}$, as: $${\accs(a) = i( \alpha(\mathbf{g}_a,\mathbf{s}),w(a))}$$

where: \begin{itemize}
    \item $\mathbf{g}_a$ is a vector expressing the relations to $a$, i.e. a vector of $-1,0,1$ where $1$ stands for an attacker of $a$, $-1$ for a supporter $a$ and $0$ for neither,
    \item $\mathbf{s}$ is the vector of the acceptability degrees of all arguments,
    \item $\alpha: \{-1,0,1\}^{|\mathcal{A}|} \times \mathbb{R}^{|\mathcal{A}|} \rightarrow \mathbb{R}$ is an aggregation function,
    \item $i:\mathbb{R}^2 \rightarrow \mathbb{R}$ is a so-called influence function.
\end{itemize} 
\end{definition} 

Definition~\ref{def:modular1} is recursive as it uses the acceptability status of the other arguments of the graphs. There is no guarantee of convergence in case of cyclic graphs, and the convergence depends on the choice of $\alpha$ and $i$ ~\citep{mossakowski2018modular,potyka2018continuous}. For this reason, most applications in the literature are limited to acyclic graphs~\citep{rago2017quantitative,amgoud2018weighted,kampik2024change} for which convergence is guaranteed, and modular semantics are often defined as partial functions over ac-WAG, i.e. functions restricted to acyclic graphs.

Definition~\ref{def:modular1} has been reformulated by~\citet{kampik2024change} using this notion of partial function as well as usual notations in argumentation, as reminded in Definition~\ref{def:modular2}.

\begin{definition}[Modular semantics~\citep{kampik2024change}]\label{def:modular2}
Using the curly bracket notation~$\{\}$ to represent multisets, a modular semantics is a partial function $S$ that maps an acyclic QBAF $A = (\mathcal{A},\mathcal{R},\mathcal{S},w)$ to an acceptability function $\accs$ that can be written, for any $a\in \mathcal{A}$, as: 
$$\accs(a) = i_{w(a)}(\alpha(\{\accs(x) \mid x \in \textup{Att}_a\},\{\accs(y) \mid y \in \textup{Supp}_a\}))$$

\noindent where: \begin{itemize}
    \item $\alpha: \mathbb{R}^{\mid \textup{Att}_a\mid} \times \mathbb{R}^{\mid \textup{Supp}_a \mid}\rightarrow \mathbb{R}$ is an aggregation function,
    \item $i_{w(a)}:\mathbb{R} \rightarrow \mathbb{R}$ is a so-called parametrised influence function.
\end{itemize} 
\end{definition}


To determine the acceptability degree of an argument, a modular semantics separates the acceptability of the other arguments (whether attacking or supporting) from the intrinsic weight of the concerned argument, breaking the computation down into two distinct steps. This allows each step to be constrained independently, resulting in better customisation of the semantics depending on the context. It also increases the transparency compared to a black-box function and is a more natural procedure. This is illustrated by the fact that most of the gradual semantics used in the literature before modular semantics were defined were already modular. Examples include DF-Quad and Ebs (QE is also modular, but it was defined later). However, this decomposition does not necessarily disentangle attackers and supporters during the aggregation steps.

\subsection{Aggregation Functions}\label{sec:aggreg_func}

The second central element of the contributions of this paper is the notion of aggregation function~\citep{grabisch2009aggregation}. In this section, we first give the definition of an aggregation function used in this paper, before introducing several properties that have been studied for those functions. Then we provide examples of classes of such functions.

\begin{definition}[Aggregation function]
    An aggregation function is a function that combines a multiset of values into a single value.
\end{definition} 

\paragraph{Remark} 
In the literature, some authors distinguish between aggregation and fusion operators depending on the nature of the values to aggregate. However, in the rest of the paper, we use the expression ``aggregation function'' to refer to functions that combine multiple variables of possibly different natures into a single value.

In the rest of the paper, the discussed aggregation functions take as inputs values in $[0,1]$ and output a value in the same interval, except if otherwise specified.

Aggregation functions can take many forms and have been applied in numerous different contexts~\citep[see for example][]{dubois1985review,yager1998full,torra2007modeling,grabisch2009aggregation,bobillo2013aggregation}. Therefore, just like for gradual semantics, many properties have been proposed in the literature to characterise their behaviour \citep[see for example the reviews by][]{dubois1985review,SMC-96,konieczny2004da2}. The main ones are as follows, where $\varphi$ denotes an aggregation operator that can take any number of arguments. Variables written in bold and indexed by $n$, such as $\mathbf{x}_n= (x_1,\dotsc x_n)$,  are elements of $[0,1]^n$, and the other variables are elements of~$[0, 1]$:
\begin{description}
	\item[(P1) --] Boundary conditions: $\varphi(0,\dotsc, 0) = 0$ and $\varphi(1, \dotsc, 1) = 1$.
	\item[(P2) --] Monotony (non decreasingness): 
			$\forall \mathbf{x_n}, \mathbf{x'_n} \in [0,1]^n$ if $\exists i \in \llbracket1,n \rrbracket$ such that $\forall j \in  \llbracket1,n \rrbracket \setminus\{i\}$, $x_j = x'_j$ and $x_i\leq x'_i$, then
			$\varphi(\mathbf{x}_n) \leq \varphi(\mathbf{x}'_n)$.

            This property is usually imposed in most definitions of aggregation functions. However, in Section~\ref{sec:discu_agreg}, we discuss this property, and consequently we do not impose it in the following. 
    \item[(P3) --] Continuity:  
			$\lim\limits_{\mathbf{x}'_n \to \mathbf{x}_n} \varphi(\mathbf{x}'_n) = \varphi(\mathbf{x}_n)$.
	\item[(P4) --] Commutativity: for any permutation $\sigma$ on $\{1,\dotsc,n\}$, $\varphi(\mathbf{x}_n) = \varphi(x_{\sigma(1)},\dotsc,x_{\sigma({n})})$.
	\item[(P5) --] Idempotence: 
			$\varphi(x,\dotsc,x) = x$.
    \item[(P6) --] Associativity:
			$ \varphi(\varphi(\mathbf{x}_n),\varphi (\mathbf{y}_m)) = \varphi(\mathbf{x}_n,\mathbf{y}_m)$.
	\item[(P7) --] Weakening: $\varphi(\textbf{x}_n) \leq \min(\textbf{x}_n)$, where $\min(\textbf{x}_n) = \min(x_1, \dotsc x_n)$.
	\item[(P8) --] Reinforcement: $\varphi(\textbf{x}_n) \geq \max(\textbf{x}_n)$, where $\max(\textbf{x}_n) = \max(x_1, \dotsc x_n)$.
    \item[(P9) --] Neutral element $e_1$: 
    $\forall i \in \llbracket1,n \rrbracket$, if $x_i = e_1$, 
    \\then $\varphi(x_1,\dotsc, x_n) = \varphi(x_1,\dotsc, x_{i-1}, x_{i+1},\dotsc, x_n)$.
    \item[(P10) --] Null element $e_0$: if $\exists i \in \llbracket1,n \rrbracket$ such that $x_i = e_0$, then ${\varphi(x_1,\dotsc, x_n) = e_0}$.
	\item[(P11) --] Composition: $\forall \mathbf{x_n}, \mathbf{y_m} \in [0,1]^n \times [0,1]^m$ and $\forall z \in [0,1]$, 
    \\if $\varphi(\mathbf{x}_n) \leq \varphi(\mathbf{y}_m)$, then $\varphi(\mathbf{x}_n,z) \leq \varphi(\mathbf{y}_m,z)$, for any position of $z$.
	\item[(P12) --] Decomposition: $\forall \mathbf{x_n}, \mathbf{y_m} \in [0,1]^n \times [0,1]^m$ and $\forall z \in [0,1]$, \\if $\varphi(\mathbf{x}_n,z) < \varphi(\mathbf{y}_m,z)$, then $\varphi(\mathbf{x}_n) \leq \varphi(\mathbf{y}_m)$, for any position of $z$.
    \end{description}
The interpretation of these properties is straightforward. Composition and decomposition describe the behaviour of $\varphi$ when a variable is added or suppressed from the list of arguments.

\paragraph{Remark} 
Note that the names of some properties of aggregation functions are also used for properties that may differ in argumentation~(see Section~\ref{sec:ax_aaf}). In order to distinguish between them, in the following, we use the word \emph{postulate} to refer to properties defined for aggregation functions, and we keep the term \emph{principle} for properties derived from argumentation. For example, we distinguish the monotony postulate, denoted \textbf{P2}, which gives the direction of variation of an aggregation function, from the monotony principle, denoted \textbf{A7}, described in Section~\ref{sec:ax_aaf}.

In the huge spectrum of aggregation operators, typical examples are t-norms that are conjunctive, t-conorms, that are disjunctive, and compromise operators. Let us briefly recall their definitions and properties~\citep{dubois1985review}:

\renewcommand{\arraystretch}{1.3}
\begin{table}[t]
\centering
\resizebox{\linewidth}{!}{\begin{tabular}{|c|c|c|c|}
\hline
Name             & Symbol & \multicolumn{1}{c|}{Mathematical expression} &  Family \\ \hline
Arithmetic mean  &  $\textup{avg}_\textup{am}$ & $\textup{avg}_\textup{am}(\mathbf{x_n}) = \displaystyle \frac{1}{n} \sum_{i = 1}^n x_i$        &     Compromise      \\ \hline
Geometric mean   &  $\textup{avg}_\textup{gm}$ & $\textup{avg}_\textup{gm}(\mathbf{x_n}) = \displaystyle \left( \prod_{i = 1}^n x_i \right)^{1/n}$          &                                         Compromise      \\ \hline
Product          & $\top_\textup{prod}$ &   $\top_\textup{prod}(\mathbf{x_n}) = \displaystyle  \prod_{i = 1}^{\phantom{I} n \phantom{I}} x_i$     &                                              t-norm             \\ \hline
Algebraic sum    & $\bot_\textup{prod}$&   $\bot_\textup{prod}(\mathbf{x_n}) = 1 - \displaystyle \prod_{i = 1}^n (1-x_i)$     &                                              t-conorm           \\ \hline
Minimum           &  $\min$ &    $\min(\mathbf{x_n})$    &                                               t-norm             \\ \hline
Maximum            & $\max$ &    $\max(\mathbf{x_n})$   &                                             t-conorm           \\ \hline
\L ukasiewicz      & $\top_\textup{\L uka}$ &  $\top_\textup{\L uka}(x_1,x_2) = \max(0,x_1+x_2 -1)$      &                                               t-norm             \\ \hline
Bounded sum      & $\bot_\textup{\L uka}$ &  $\bot_\textup{\L uka}(x_1,x_2) = \min(x_1+x_2,1)$     &                                               t-conorm           \\ \hline
Drastic t-norm  & $\top_\textup{D}$& $ \top_\textup{D}(x_1,x_2) = \left \{
\begin{array}{ll}
      x_2 & \textup{if } x_1 = 1  \\
      x_1 & \textup{if } x_2 = 1  \\
      0 & \textup{otherwise} 
\end{array} \right. $     &                                               t-norm             \\ \hline
Drastic t-conorm & $\bot_\textup{D}$ &  $\bot_\textup{D}(x_1,x_2) =  \left \{
\begin{array}{ll}
      x_2 & \textup{if } x_1 = 0  \\
      x_1 & \textup{if } x_2 = 0  \\
      1 & \textup{otherwise} 
\end{array} \right.$      &                                              t-conorm           \\ \hline
\end{tabular}}
\caption{Example of ten aggregation functions. The four last ones are defined from, but their definition can be extended to $[0,1]^n$ for all $n \in \mathbb{N}$ using associativity.}
\label{tab:ex_fonc_aggreg}
\end{table}

\begin{itemize}
\item A t-norm $\top$ is an operator from $[0,1] \times [0,1]$ into $[0,1]$, that satisfies {\bf P2, P4, P6, P9}, with $e_1 = 1$. It follows that {\bf P1, P7, P10}, with $e_0=0$, are also satisfied. The only t-norm that is idempotent ({\bf P5}) is the $\min$. Most usual t-norms are continuous ({\bf P3}), but there are exceptions. 

The last two properties \textbf{P11} and \textbf{P12} require to extend the definition to any number of arguments. For a t-norm, it is straightforward, thanks to associativity (\textbf{P6}), and therefore {\bf P11} and \textbf{P12} are satisfied.

\item A t-conorm $\bot$ is an operator from $[0,1] \times [0,1]$ into $[0,1]$, that satisfies {\bf P2, P4, P6, P9}, with $e_1 = 0$. It follows that {\bf P1, P8, P10}, with $e_0=1$, are also satisfied. The only t-conorm that is idempotent ({\bf P5}) is the $\max$. Most usual t-conorms are continuous ({\bf P3}), but there are exceptions. 

Just like with t-norms, the extension to any number of arguments for a t-conorm is straightforward, thanks to associativity, and {\bf P11} and \textbf{P12} are satisfied.

\item A mean operator $m$ is an operator from $[0,1] \times [0,1]$ into $[0,1]$, that satisfies {\bf P2, P4} and such that $\forall x_1,x_2, \min(x_1,x_2) \leq m(x_1,x_2) \leq \max(x_1, x_2)$ (expressing the compromise), $m \neq \min, m \neq \max$. It follows that it also satisfies {\bf P1, P5}. As the associativity is not necessarily satisfied, the extension of this definition to any number of arguments is not direct. Let us take the example of the arithmetic mean: $\frac{x_1+x_2}{2}$. It can be extended as $\frac{x_1+ ... + x_n}{n}$, which applies when all arguments are available before aggregation. If partial aggregation has to be performed when arguments arrive in a sequence, then another extension could be $\frac{\frac{x_1+x_2}{2} + {x_3}}{2} +\dotsc$ at the price of loosing \textbf{P4}.
\end{itemize}

These three families of operators are respectively modelling a conjunctive, disjunctive and compromise behaviour. Variants of these operators exist, with weaker properties. Examples of functions from those families are given in Table~\ref{tab:ex_fonc_aggreg} and the properties they satisfy are given in Table~\ref{tab:prop_fonc_aggreg}.

Other operators do not fall in one of these three categories. For instance, some symmetrical sums satisfying $1 - \varphi(x_1,x_2) = \varphi(1-x_1, 1-x_2)$ have a hybrid behaviour depending on the values to aggregate. Specifically, they either satisfy the weakening postulate (\textbf{P7}) or the reinforcement postulate (\textbf{P8}), or neither of them, depending on where in the domain the variable to aggregate are located. There are also several non-commutative (\textbf{P4}) aggregation functions in the literature, such as Choquet integrals, with asymmetric capacities~\citep{choquet1954theory}.

\renewcommand{\arraystretch}{1.2}
\begin{table}[t]
\centering
\resizebox{0.9\linewidth}{!}{\begin{tabular}{|c|c|c|c|c|c|c|c|c|c|c|c|c|}
\hline
 & \textbf{P1} & \textbf{P2} & \textbf{P3} & \textbf{P4} & \textbf{P5} & \textbf{P6} & \textbf{P7} & \textbf{P8} & \textbf{P9} & \textbf{P10} & \textbf{P11} & \textbf{P12} \\ \hline
$\textup{avg}_\textup{am}$  & $\checkmark$ & $\checkmark$ & $\checkmark$ & $\checkmark$ & $\checkmark$ & $\times$ & $\times$ & $\times$ & $\times$ & $\times$ & $\checkmark$ & $\checkmark$ \\ \hline
$\textup{avg}_\textup{gm}$  & $\checkmark$ & $\checkmark$ & $\checkmark$ & $\checkmark$ & $\checkmark$ & $\times$ & $\times$ & $\times$  & $\times$ & $e_0 = 0$ & $\checkmark$ & $\checkmark$ \\ \hline
$\top_\textup{prod}$ & $\checkmark$ & $\checkmark$ & $\checkmark$ & $\checkmark$ & $\times$ & $\checkmark$ & $\checkmark$ & $\times$ & $e_1 = 1$ & $e_0 = 0$ & $\checkmark$ & $\checkmark$ \\ \hline
$\bot_\textup{prod}$ & $\checkmark$ & $\checkmark$ & $\checkmark$ & $\checkmark$ & $\times$ & $\checkmark$ & $\times$ & $\checkmark$ & $e_1 = 0$ & $e_0 = 1$ & $\checkmark$ & $\checkmark$ \\ \hline
$\min$ & $\checkmark$ & $\checkmark$ & $\checkmark$ & $\checkmark$ & $\checkmark$ & $\checkmark$ & $\checkmark$ & $\times$ & $e_1 = 1$ & $e_0 = 0$ & $\checkmark$ & $\checkmark$ \\ \hline
$\max$ & $\checkmark$ & $\checkmark$ & $\checkmark$ & $\checkmark$ &$\checkmark$  & $\checkmark$ &  $\times$ & $\checkmark$ &  $e_1 = 0$ & $e_0 = 1$ & $\checkmark$ & $\checkmark$ \\ \hline
$\top_\textup{\L uka}$ & $\checkmark$ & $\checkmark$ & $\checkmark$ & $\checkmark$ & $\times$ & $\checkmark$ & $\checkmark$ &$\times$  & $e_1 = 1$ & $e_0 = 0$ & $\checkmark$ & $\checkmark$ \\ \hline
$\bot_\textup{\L uka}$ & $\checkmark$ & $\checkmark$ & $\checkmark$ & $\checkmark$ & $\times$ & $\checkmark$ & $\times$ & $\checkmark$ &  $e_1 = 0$ & $e_0 = 1$ & $\checkmark$ & $\checkmark$ \\ \hline
$\top_\textup{D}$ & $\checkmark$ & $\checkmark$ & $\times$ & $\checkmark$ & $\times$ & $\checkmark$ & $\checkmark$ & $\times$ & $e_1 = 1$ & $e_0 = 0$ & $\checkmark$ & $\checkmark$ \\ \hline
$\bot_\textup{D}$ & $\checkmark$ & $\checkmark$ & $\times$ & $\checkmark$ & $\times$ & $\checkmark$ &  $\times$& $\checkmark$ &  $e_1 = 0$ & $e_0 = 1$ & $\checkmark$ & $\checkmark$ \\ \hline
\end{tabular}}
\caption{Properties of ten commonly used aggregation functions: $\checkmark$ means the property holds and~$\times$ means it does not.}
\label{tab:prop_fonc_aggreg}
\end{table}

\section{Aggregative Semantics}
\label{sec:agreg}

This section describes our first contribution, the definition of a new gradual semantics family, that we call \emph{aggregative semantics} to emphasise the role of aggregation functions. First, in Section~\ref{subsec:def_aggreg_sem}, we formally define the family of aggregative semantics. Then, in Section~\ref{subsec:comp_mod_aggreg}, we discuss its links and differences with the modular semantics.

\subsection{Definition of the Proposed Aggregative Semantics}
\label{subsec:def_aggreg_sem}

The definition of gradual semantics we propose follows a principle similar to that of modular semantics, with increased modularity: it exploits the bipolarity of the attack and support relations of QBAFs by handling the aggregation of the global weight of the attackers and the global weight of the supporters separately. Indeed, this allows these two components to be considered independently and potentially asymmetrically. This principle of separation had already been proposed by~\citet{cayrol2005gradual} in early work for bipolar argumentation framework, i.e. without weights on arguments. However, the gradual semantics used afterwards dropped this step. This can be seen on the commonly used gradual semantics described in Section~\ref{sec:sem_grad} in which attackers and supporters are simply subtracted (see Equations~\ref{eq:df-quad}, \ref{eq:ebs} and \ref{eq:qe}). 

After this evaluation of the global weight of attackers and supporters, these two values are aggregated with the intrinsic weight of the argument in question. The proposed aggregative semantics thus formally comprises three aggregation functions. 

The first two functions aggregate the multisets of the acceptability degrees of the attackers and supporters, respectively. Since there is generally no specific order on the arguments, these two functions have to be commutative. This hypothesis is discussed in more detail in Section~\ref{sec:discu_agreg}. Furthermore, unlike the classical use of aggregation functions, where the set of values to be aggregated is never empty, in argumentation, the set of attackers or supporters may be empty. This requires the definition of aggregation function to be extended. This extension is discussed in detail in Section~\ref{sec:discu_agreg}. 

Next, in the final aggregation step, an ordered triplet is aggregated, corresponding to the attackers, supporters, and intrinsic weight. Formally the definition is as follows:

\begin{definition}[Aggregative semantics]~
	\label{def:sem_agg}    
	Let $A {= (\mathcal{A},\mathcal{R},\mathcal{S},w)} \in \textup{ac-WAG}$ be a QBAF, and $I_w,I_{\mathcal{R}},I_{\mathcal{S}},I$ pre-ordered sets.  	
	\\ Using the curly bracket notation $\{\}$ to represent  multisets and $\mathcal{M(I)}$ the set of all multisets of~$I$, an aggregative semantics $S$ is a partial function that maps an acyclic QBAF~$A$ to an acceptability function $\accs$ that can be written, for any $a\in \mathcal{A}$, as:
    \begin{align*}
	    \accs(a) =  \varphi_f(&\varphi_{\mathcal{R}}(\{\accs(x) \mid x \in \textup{Att}_a\}), \nonumber\\
        &\varphi_{\mathcal{S}}(\{\accs(y) \mid y \in \textup{Supp}_a\}), w(a))
	\end{align*} 
	
	\noindent where~: \begin{itemize}
		\item $\varphi_{\mathcal{R}}: \displaystyle \mathcal{M}(I) \rightarrow I_{\mathcal{R}}$ is an extended commutative aggregation function computing the global weight of the attackers of an argument;
		\item $\varphi_{\mathcal{S}}: \displaystyle \mathcal{M}(I)\rightarrow I_{\mathcal{S}}$ is an extended commutative aggregation function computing the global weight of the supporters of an argument;
		\item $\varphi_f: I_{\mathcal{R}} \times I_{\mathcal{S}} \times I_w \rightarrow I$ is an aggregation function computing the acceptability degree of an argument from the global weights of its attackers and supporters and its intrinsic strength.
	\end{itemize}

\end{definition}
Denoting the global weights of attackers and supporters as
	\begin{itemize}
		\item $\pi_{\mathcal{R}}(a) =\varphi_{\mathcal{R}}(\{\acc(x) \mid x \in \textup{Att}_a\})$~,
		\item $\pi_{\mathcal{S}}(a) = \varphi_{\mathcal{S}}(\{\acc(y) \mid y \in \textup{Supp}_a\})$,
	\end{itemize}  
the acceptability degree computed by the aggregative semantics is written as:
 \begin{center}
	$\acc(a) =  \varphi_f\left(\pi_{\mathcal{R}}(a), \pi_{\mathcal{S}}(a), w(a)\right).$
\end{center}

Unless specified otherwise, we adopt for the three pre-ordered sets the one most commonly used in the literature, i.e. $I_w = I_{\mathcal{R}} = I_{\mathcal{S}} = I = [0,1]$. 

As in the modular case, the definition of the aggregative semantics is recursive (see Section~\ref{sec:ax_aaf}) and therefore an aggregative semantics is a partial function defined over ac-WAG, the subset of acyclic WAG. We leave the extension to a total function and the convergence study in the case of cyclic graphs for future work. Here we focus the discussion on the impact and relevance of the properties of the aggregation functions.

\begin{example}\label{ex:agg_sem}
    Let us consider the toy example represented in Figure~\ref{fig:QBAF} and an aggregative semantics based on the following three aggregation functions: let us take $\top_{\textup{prod}}$ for $\varphi_{\mathcal{R}}$ and $\bot_D$ for $\varphi_{\mathcal{S}}$ extended to the empty set so that the stability principle (\textbf{A5}) holds, i.e.${\acc(a) = w(a)}$ if $\textup{Att}_a = \textup{Supp}_a = \emptyset$.
        
    For $\varphi_f$, let us take an adapted version of an arithmetic mean extended to three arguments: $\varphi_f(x,y,z) = \frac{\frac{1-x + y}{2}+z}{2}$. It takes $1-x$ in the computation as the attackers should decrease the acceptability of an argument. We discuss this later in the paper in Section~\ref{subsec:aggreg_final}.

    Since the graph is acyclic, we first consider the unattacked and unsupported arguments, $b,c,d,e$ here. For each of them, the stability principle states that their acceptability degree equals their intrinsic weights, e.g. $\acc(b) = w(b) = 0.9$.
    
    As a consequence, when propagating to $a$, we get: \begin{itemize}
        \item ${\pi_{\mathcal{R}}(a) = \varphi_{\mathcal{R}}(0.9,0.1) =\top_{\textup{prod}}(0.9,0.1) = 0.09}$
        \item $\pi_{\mathcal{S}}(a) = \varphi_{\mathcal{S}}(0.2,0.8) = \bot_D(0.2,0.8) = 1$
        \item $\acc(a) = \frac{\frac{1-0.09+1}{2} + 0.5}{2} = 0.73$
    \end{itemize}
    
    In contrast to DF-Quad, Ebs, and QE for which $\acc(a)$ is around $0.5$ (see Example~\ref{ex:grad_sem} in Section~\ref{sec:sem_grad}), this aggregative semantics favours the supporters, as shown by the acceptability degree of $a$, which is much higher than $0.5$. 
\end{example}

An advantage of using aggregation functions to compute the acceptability degree of an argument is that these functions have been applied in many areas of research with a variety of input and output formats. Consequently, this definition is compatible with many different types of variables to aggregate, corresponding to a great diversity in the pre-ordered sets used. As a result, this family of gradual semantics can be used for enriched versions of QBAF. For example, if one wants to add a timestamp, for instance in $\mathbb{R}^+$, to the arguments, then $I$ can be set to~${[0,1] \times \mathbb{R}^+}$.

In order to be able to build an appropriate aggregative semantics in a given context, we propose in Sections~\ref{subsec:phi_etoile} and \ref{subsec:aggreg_final} a discussion to guide the choice of the three aggregation functions based on the properties they verify.

\subsection{Links and Differences with Modular Semantics}
\label{subsec:comp_mod_aggreg}

Breaking down the calculation of the acceptability degree of an argument into several ``elementary'' steps allows us to explicitly disentangle attackers and supporters, offering multiple advantages: keeping the bipolarity one step further and creating a more understandable gradual semantics. Therefore, it extends the idea of modular semantics~(see Section~\ref{sec:modular_sem}).

Since modular semantics impose no restriction on $\alpha$ and $i$, it can lead to any function, if it is built in an ad hoc way. This makes a formal comparison between the modular and aggregative semantics irrelevant. However, this ad hoc definition leads to a loss of intelligibility compared to the explicit disentanglement offered by an aggregative semantics. 

If $\alpha$ is restricted to a set of aggregation functions with intelligible behaviour for computing a global weight for the parents of an argument, then the proposed aggregative approach is more general. Indeed, the aggregative semantics decomposes the $\alpha$ function into two components that can be designed independently, preserving the specificities of attack and support relations, and choosing their respective properties according to the desired behaviour. Conversely, the most commonly used semantics can be reformulated in terms of an aggregative semantics. It can be proved directly by syntactic manipulation of the formulas, as stated in the following proposition. 
 
\begin{proposition}\label{prop:mod_to_agg}
    DF-Quad~\citep{rago2016discontinuity}, Euler-based~\citep{amgoud2018weighted} and Quadratic Energy~\citep{potyka2018continuous} semantics are aggregative semantics.

\end{proposition}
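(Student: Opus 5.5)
For each of the three semantics, the plan is to exhibit explicit functions $\varphi_{\mathcal{R}}$, $\varphi_{\mathcal{S}}$ and $\varphi_f$ and to check that composing them gives back the defining formula. The check is purely syntactic. The key observation is that in Equations~\ref{eq:df-quad}, \ref{eq:ebs} and~\ref{eq:qe} the quantity $s$ is always a difference. Its first term depends only on the multiset of attackers' degrees, and its second term depends only on the multiset of supporters' degrees. Each term is commutative, since it is a product or a sum, so it defines an extended commutative aggregation function on $\mathcal{M}(I)$. On the empty multiset it takes the usual empty-product or empty-sum value. The final function $\varphi_f$ then subtracts the two global weights to recover $s$ and applies the influence formula with $w(a)$.

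For DF-Quad, I take the probabilistic t-conorm $\varphi_{\mathcal{R}} = \varphi_{\mathcal{S}} = \bot_{\textup{prod}}$, extended by $\bot_{\textup{prod}}(\emptyset)=0$. Since $\prod_{b}(1-\acc(b)) = 1-\pi_{\mathcal{R}}(a)$ and similarly for supporters, we get $s = (1-\pi_{\mathcal{R}}) - (1-\pi_{\mathcal{S}}) = \pi_{\mathcal{S}} - \pi_{\mathcal{R}}$. Hence I set
\[
\varphi_f(x,y,z) = z - z\max(0,x-y) + (1-z)\max(0,y-x),
\]
which maps $[0,1]^3$ into $[0,1]$. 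I must match the sign convention of Equation~\ref{eq:df-quad} carefully. As printed there, $s$ is the attacker product minus the supporter product, which equals $\pi_{\mathcal{S}}-\pi_{\mathcal{R}}$, and that is what I use. If the intended convention is reversed, I simply swap $x$ and $y$ in $\varphi_f$; the decomposition argument is unaffected.

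For Ebs and QE, I take $\varphi_{\mathcal{R}} = \varphi_{\mathcal{S}} = \Sigma$, the plain sum, with $\Sigma(\emptyset)=0$. Its range is $\mathbb{R}^+$ rather than $[0,1]$, so I set $I_{\mathcal{R}} = I_{\mathcal{S}} = \mathbb{R}^+$, which Definition~\ref{def:sem_agg} explicitly allows through arbitrary pre-ordered sets. For Ebs I define $\varphi_f(x,y,z) = 1 - \frac{1-z^2}{1+z e^{y-x}}$. For QE I define $\varphi_f(x,y,z) = z\bigl(1 - h(\max(0,x-y))\bigr) + (1-z)\,h(\max(0,y-x))$ with $h(t)=t^2/(1+t^2)$. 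In each case substituting $\pi_{\mathcal{R}}(a),\pi_{\mathcal{S}}(a),w(a)$ reproduces the defining equation verbatim.

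I do not expect a real obstacle, because the substitutions are routine. The one point needing care is that the definitions match as partial functions on ac-WAG. Both the original recursive formulas and the aggregative form determine $\acc$ uniquely by induction along a topological order of the acyclic graph. Since the local update rules coincide at every argument, including unattacked and unsupported ones, where the empty-multiset conventions give $s=0$ and $\varphi_f(0,0,z)=z$ in all three cases, the two acceptability functions are equal.
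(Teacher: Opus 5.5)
Your proposal is correct and follows the paper's proof: you exhibit explicit $\varphi_{\mathcal{R}},\varphi_{\mathcal{S}},\varphi_f$ for each semantics and check by substitution, and your witnesses for Ebs and QE (plain sums into $\mathbb{R}^+$) are identical to the paper's. The only difference is DF-Quad. The paper takes $\varphi_{\mathcal{R}}(M)=\varphi_{\mathcal{S}}(M)=\prod_{m\in M}(1-m)$, so that $s=x-y$ and $\varphi_f(x,y,z)=z-z\max(0,y-x)+(1-z)\max(0,x-y)$, whereas you use the complemented choice $\bot_{\textup{prod}}$ and swap the arguments of $\varphi_f$; this is equally valid, and it is the version consistent with the paper's later use of $\bot_{\textup{prod}}$ and of a $\varphi_f$ non-increasing in $x$ (Tables~\ref{tab:ex_fonc} and~\ref{tab:res_agg}).
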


\begin{proof}The proof is done directly by defining the three aggregation functions corresponding to the associated gradual semantics. 

Given a multiset $M$ of values in $I= [0,1]$ and a semantics $S \in \{\textup{DF-Quad, Ebs, QE}\}$, the induced acceptability function $\accs$ can respectively be written using the following aggregation functions: \begin{itemize}
    \item DF-Quad (see Equation~\ref{eq:df-quad}):\begin{itemize}
        \item $\varphi_{\mathcal{R}}(M) = \varphi_{\mathcal{S}}(M) = \displaystyle \prod_{m \in M} (1-m)$, with $I_\mathcal{R} = I_\mathcal{S} = [0,1]$
        \item $\varphi_f^{\textup{DF-Quad}}(x,y,z) = z - z\cdot\max(0,y-x)+ (1-z)\cdot\max(0,-(y-x))$
    \end{itemize}
    \item Ebs (see Equation~\ref{eq:ebs}):\begin{itemize}
        \item $\varphi_{\mathcal{R}}(M) = \varphi_{\mathcal{S}}(M) = \displaystyle \sum_{m \in M} m$, with $I_\mathcal{R} = I_\mathcal{S} = \mathbb{R}^+$
        \item $\varphi_f^{\textup{Ebs}}(x,y,z) = \displaystyle 1 - \frac{1-z^2}{1 + z\cdot e^{y-x}}$
    \end{itemize}
    \item QE (see Equation~\ref{eq:qe}):\begin{itemize}
        \item $\varphi_{\mathcal{R}}(M) = \varphi_{\mathcal{S}}(M) = \displaystyle \sum_{m \in M} m$, with $I_\mathcal{R} = I_\mathcal{S} = \mathbb{R}^+$
        \item $\varphi_f^{\textup{QE}}(x,y,z) = z \cdot\left( 1 - \frac{\max(0,-(y-x))^2}{1+\max(0,-(y-x))^2}\right) + (1-z)\cdot \frac{\max(0,y-x)^2}{1+\max(0,y-x)^2}$
    \end{itemize}
\end{itemize}
\end{proof}

A first observation induced by this proposition is that, in all three cases, attackers and supporters are aggregated together through the difference between the global weight of the supporters and that of the attackers. Moreover, these two values are obtained from the same function, i.e. $\varphi_{\mathcal{R}} = \varphi_{\mathcal{S}}$. Therefore, attackers and supporters are treated symmetrically in these three common gradual semantics of the literature.

We argue that imposing $\varphi_{\mathcal{R}} = \varphi_{\mathcal{S}}$ is a restrictive condition and that in some scenarios we need to be able to treat attackers and supporters differently. For example, to model a judge's reasoning during a trial, an asymmetry between an argument supporting a person's innocence and another supporting their guilt, i.e. attacking their innocence, is required. Under the principle of the presumption of innocence, the prosecution must provide evidence of the suspect's guilt beyond a reasonable doubt. Using an aggregative semantics, this asymmetry can be taken into account at two levels. First, when computing the global strength of attackers and supporters, stronger or a greater number of attackers can be required to get the same or a higher value than the global strength of the supporters. Secondly, when calculating the acceptability of the argument, the role of attackers and supporters may be different. Thus, in contrast to the commonly used gradual semantics, we maintain bipolarity in the final stage of the calculation, with one value for the attackers and one for the supporters.

Aggregative semantics open the way to new gradual semantics for QBAFs, depending on the aggregation functions chosen to instantiate Definition~\ref{def:sem_agg}. The next section examines the impact of classical properties of these functions \mj{on the induced semantics}. 
Section~\ref{sec:principles}, conversely, examines the correspondence of the classical principles of argumentative semantics with these properties. 

\section{A Discussion on the Desirable Properties of the Aggregation Functions}
\label{sec:discu_agreg}

In this section we examine some properties of aggregation functions in the context of argumentation, in order to guide the instantiation of Definition~\ref{def:sem_agg}, i.e. to allow for the choice of the most suitable aggregation functions among the multitude of existing ones: we discuss the relevance of each of these properties in the specific context of argumentation; in particular, we identify and illustrate whether these properties are required or desirable only in certain cases.

We first discuss functions $\varphi_{\mathcal{R}}$ and $\varphi_{\mathcal{S}}$ before turning to the case of $\varphi_{f}$.

\subsection{The Case of $\varphi_{\mathcal{R}}$ and $\varphi_{\mathcal{S}}$}
\label{subsec:phi_etoile}

This section focuses on the desirable properties for the computation of the global weight of the attackers and supporters of an argument, i.e.~$\varphi_{\mathcal{R}}$ and $\varphi_{\mathcal{S}}$. In the following, these two functions are grouped under the notation $\varphi_*$ with $* \in \{\mathcal{R},\mathcal{S}\}$. Moreover, unless the distinction between attackers and supporters is necessary, we use the case of attackers to explain and discuss the postulates. We consider $A = (\mathcal{A}, \mathcal{R}, \mathcal{S}, w)$ an acyclic QBAF, and $a \in \mathcal{A}$, one of its arguments.

{\textbf{(P1): Boundary condition of $\varphi_{*}$ --}} This postulate deals with the expected behaviour at the limits of the definition domain, i.e. the values of $\varphi_{*}$ for ${\vec{0} = (0,\dotsc,0)}$ and $\vec{1} = (1,\dotsc,1)$, when $I = [0,1]$. In argumentation, if an argument is only attacked by arguments with acceptability $0$, there is no a priori reason for the global weight of the attackers to take a value other than zero. Thus, we expect $\varphi_*(\vec{0}) = 0$. Similarly, if an argument is only attacked by arguments with the highest acceptability degree, i.e. $1$, then we expect the highest global weight for the attackers, formally $\varphi_*(\vec{1}) = 1$. 

In argumentation, there is another boundary case to consider, namely the case of an empty set of values to aggregate. It corresponds to arguments that have either no attackers or no supporters. However, by definition, an aggregation function combines a non-empty multiset of values into a single value. For this reason, Definition~\ref{def:sem_agg} uses the notion of extended aggregation functions, dealing with the case of an empty set as well. The choice of the value taken by~$\varphi_*(\emptyset)$ cannot be treated independently of the choice of $\varphi_f$. Indeed, if an argument has no attackers, adding an attacker can, at best, have no effect on the acceptability degree of the argument or must decrease it. A similar reasoning applies when an argument has no supporters. Formally, $\forall x,y,z \in [0,1]^3$: \[\varphi_f(\varphi_{\mathcal{R}}(\emptyset),y,z) \geq \varphi_f(x,y,z) \geq \varphi_f(x,\varphi_{\mathcal{S}}(\emptyset),z)\]

As explained later in Section~\ref{subsec:aggreg_final}, $\varphi_f$ is non-increasing according to its first variable and non-decreasing according to its second variable. Therefore, it makes sense to take, if it exists, the minimum of the codomain of $\varphi_{\mathcal{R}}$ as the value for $\varphi_{\mathcal{R}}(\emptyset)$. Similarly, for the value of $\varphi_{\mathcal{S}}(\emptyset)$, if it exists, the minimum of the codomain of $\varphi_{\mathcal{S}}$ can be considered.

For all the functions listed in Table~\ref{tab:ex_fonc_aggreg}, these values are $0$.
  
\textbf{(P2): Non-decreasingness of $\varphi_{*}$ --} In argumentative terms, this postulate states that if the acceptability of one of the attackers of $a$ increases, then the total weight of its attackers increases or remains constant. This is the behaviour generally expected in argumentation~\citep{cayrol2005gradual}. 

\textbf{(P3): Continuity of $\varphi_{*}$ --} Imposing that $\varphi_{\mathcal{R}}$ satisfies the continuity postulate means that a small change in the acceptability of one attacker can induce only a small change in the global weight of the attackers. This behaviour can indeed be considered desirable.

However, in certain scenarios, threshold effects may be desirable: above a certain global weight for the attackers, for example, there may be saturation,  leading to a sudden jump corresponding to a discontinuity. Assuming a maximum countable number of such thresholds, a desirable postulate is that the $\varphi_{*}$ functions are piecewise continuous. We give an example of such a behaviour in Section~\ref{subsec:aggreg_final}. 

\textbf{(P4): Commutativity of $\varphi_{*}$ --} 
In abstract argumentation and for a classical QBAF, this is an expected property for gradual semantics~\citep{cayrol2005gradual}.

However, if for instance temporality has to be taken into account, attackers do not play symmetric roles. Indeed, in the construction of a debate for instance, the order in which arguments are enunciated plays a crucial role: 
an argument enunciated at the very beginning or the very end of a debate most often has more impact than the ones enunciated in the middle~\citep{miller1959recency}. In such cases, the arguments cannot be swapped, the commutativity property is then undesirable. 
An order on the arguments may also be induced by moral values, as in value-based argumentation frameworks~\citep{bench2002value}.

Thus, Definition~\ref{def:sem_agg} needs to be relaxed, when an order between arguments should be taken into account.

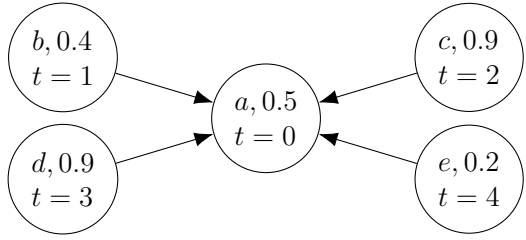
\begin{figure}[t]
	\centering
	\begin{tikzpicture}[scale=0.9,transform shape, node distance={25mm}, main/.style = {draw, circle,minimum size=12mm}]
		\node[main, align=left] (1) {$a,0.5$ \\ $t = 0$};
		
		\node[main,align=left] (2)  at (-3,0.9) {$b,0.4$ \\ $t=1$};
		\node[main,align=left] (3)  at (-3,-0.9) {$d,0.9$ \\ $t=3$};
		\node[main,align=left] (4)  at (3,0.9) {$c,0.9$ \\ $t=2$};
		\node[main,align=left] (5)  at (3,-0.9) {$e,0.2$ \\ $t=4$};
		
		\draw [arrows = {-Latex[scale=1.5]}] (2) -- (1);
		\draw [arrows = {-Latex[scale=1.5]}] (3) -- (1);
		\draw [arrows = {-Latex[scale=1.5]}] (4) -- (1);
		\draw [arrows = {-Latex[scale=1.5]}] (5) -- (1);

	\end{tikzpicture}
	\caption{Illustrative example for the commutativity postulate (\textbf{P4}).}
	\label{fig:commutativity}
\end{figure}

\begin{example}
    To illustrate the impact commutativity can have when computing the weight of the attackers of an argument, we consider the toy example represented in Figure~\ref{fig:commutativity}. In this QBAF, four attackers are enunciated at different dates~$t$ between~$1$ and~$4$, here in the order b, c, d, e. We consider two aggregation function for $\varphi_\mathcal{R}$: a commutative average and a non-commutative weighted average.
    
    For the commutative case, $\varphi_{\mathcal{R}}^c(\mathbf{x}_n) = \textup{avg}_\textup{am}(\mathbf{x}_n)$ leading to $\varphi_{\mathcal{R}}(a) = 0.6$.
    
    For the non-commutative case, we consider $\varphi_{\mathcal{R}}^{nc}(\mathbf{x_n}) = \frac{1}{\sum \alpha_i} \sum \alpha_i x_i$ with $ \alpha_i = (1/10)^i$  if $i \leq n/2$ and $\alpha_i = \alpha_{n-i}$ otherwise. It gives more importance to arguments at the beginning or at the end.  It leads to $\varphi_{\mathcal{R}}^{nc}(a) = 0.35$. Now, if instead of b, c, d, e, we consider a different order, for instance c, b, e, d, we get that~$\varphi_{\mathcal{R}}^{nc}(a) = 0.85$.

\end{example}

\textbf{(P6): Associativity of $\varphi_{*}$ --} This is mainly a structural characteristic, that makes it possible to aggregate existing data and then update the result when new data  arrive. In argumentation, when the argument temporal order is not taken into account, all the information is already present in the graph when aggregation takes place. In real-time application scenarios where the speed of computation matters, using the associativity of $\varphi_{*}$ can save time. However, this remains negligible in the case of acyclic graphs, where convergence is guaranteed in a maximum of $N_{arg}$ operations, where $N_{arg}$ is the total number of arguments present in the graph~\citep{potyka2019extending}. Since imposing associativity restricts the possible operators~\citep{grabisch2009aggregation} for almost no impact on the computation of $\acc$, we do not impose associativity in general.

\textbf{(P5): Idempotence of $\varphi_{*}$ --}  The idempotence property is generally desirable when the sources of the values to be aggregated are not independent: in this case, if the information is redundant and all attackers have the same value, then the aggregated value should be the same. By contrast, if the sources are independent, then redundant information can reinforce or weaken each other. We discuss this behaviour together with postulate \textbf{P6}.

In argumentation, the notion of information redundancy depends on the nature of the arguments and the modelled context. An example of such a situation is a debate during which an argument is repeated by several independent speakers.

Furthermore, if $\varphi_{*}$ is associative and idempotent, the repetition of an argument has no effect:  $\varphi_{*}(x,x,y) = \varphi_{*}(\varphi_{*}(x,x),y) = \varphi_{*}(x,y)$. More generally, for such a $\varphi_*$, giving arguments with identical degrees of acceptability is useless. Therefore, the negation of at least the idempotence postulate (\textbf{P5}) or the associativity postulate (\textbf{P6}) is necessary.

\begin{example}
    To illustrate this principle, we take the  simple example represented in Figure~\ref{fig:idem}. We consider two functions: first $\varphi_\mathcal{R}=\max$, that is idempotent, leads to $\pi_{\mathcal{R}}(a) = 0.4$. Secondly, $\varphi_\mathcal{R}= \bot_{\textup{\L uka}}$, that is not idempotent, leads to $\pi_\mathcal{R}(a) = 0.8$. This difference reflects the fact that the $\max$ function does not account for the number of repeated arguments or the number of arguments in general. Conversely, the bounded sum $\bot_{\textup{\L uka}}$ does take this into account until the aggregation reaches $1$.
\end{example}

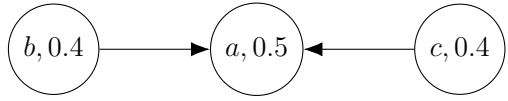
\begin{figure}[t]
	\centering
	\begin{tikzpicture}[scale=0.9,transform shape, node distance={25mm}, main/.style = {draw, circle,minimum size=12mm}]
		\node[main, align=left] (1) {$a,0.5$};
		
		\node[main,align=left] (2)  at (-3,0) {$b,0.4$};
		\node[main,align=left] (3)  at (3,0) {$c,0.4$};
		
		\draw [arrows = {-Latex[scale=1.5]}] (2) -- (1);
		\draw [arrows = {-Latex[scale=1.5]}] (3) -- (1);

	\end{tikzpicture}
	\caption{Illustrative example for the idempotence postulate (\textbf{P5}).}
	\label{fig:idem}
\end{figure}

\textbf{(P7-8): Weakening and Reinforcement of $\varphi_{*}$ --} The weakening (\textbf{P7}) and reinforcement (\textbf{P8}) postulates determine the global behaviour of the function, i.e. if the function is conjunctive, disjunctive or a compromise. In particular, those properties characterise the behaviour of an aggregation function in the presence of \textit{weak} and \textit{strong} values, where the meaning of strong and weak varies according to the context. When applied to argumentation, the weakening postulate for instance reflects the influence that arguments with a low degree of acceptability have over other attackers or supporters of the same argument. Formally, the global weight of the attackers is less than or equal to the degree of acceptability of the weakest attacker.

A related property exists in argumentation in the case of arguments having a degree of acceptability close to the limits of the domain of definition: the monotony principle~(\textbf{A7}). In contrast to the monotony postulate~(\textbf{P2}) that considers a constant number of attackers and supporters, \textbf{A7} expresses the fact that an argument that is less attacked or more supported has a stronger acceptability, all other things being equal. It implies that adding an attacker, regardless of its degree of acceptability, can only increase or leave unchanged the global weight of the attackers. We believe that this behaviour is not always desirable, and depends in particular on the additional attacker: a ``stupid'' or unacceptable argument could backfire on its own side by reducing the credibility of the attacker's side. Therefore, another possible behaviour is that the addition of such an argument may weaken the set of attackers, i.e. reduce the value of the global weight of the set of arguments. In this case, the weakening postulate is desired.
Similarly, the reinforcement postulate states that ``brilliant'' arguments reinforce each other, increasing their global weight.

\begin{example} We consider the QBAF represented in Figure~\ref{fig:weak_reinf} and three aggregation functions for $\varphi_\mathcal{R}$: a t-norm $\varphi_{\mathcal{R}}^1 = \min$, a t-conorm $\varphi_{\mathcal{R}}^2 = \max$, and an average operator $\varphi_{\mathcal{R}}^3 = \textup{avg}_{\textup{am}}$. Applying these three functions to compute the global weight of the attackers of $a$ leads to: $\pi_{\mathcal{R}}^1(a) = 0.1$, $\pi_{\mathcal{R}}^2(a) = 0.9$ and $\pi_{\mathcal{R}}^3(a) = 0.5$, ranging over the whole interval.

    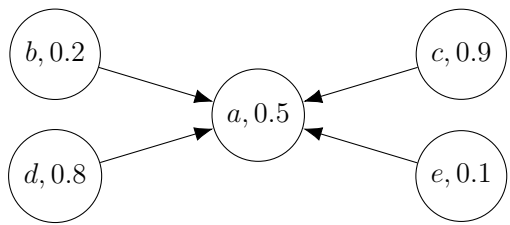
\begin{figure}[t]
	\centering
	\begin{tikzpicture}[scale=0.9,transform shape, node distance={25mm}, main/.style = {draw, circle,minimum size=12mm}]
		\node[main, align=left] (1) {$a,0.5$};
		
		\node[main,align=left] (2)  at (-3,0.9) {$b,0.2$};
		\node[main,align=left] (3)  at (-3,-0.9) {$d,0.8$};
		\node[main,align=left] (4)  at (3,0.9) {$c,0.9$};
		\node[main,align=left] (5)  at (3,-0.9) {$e,0.1$};
		
		\draw [arrows = {-Latex[scale=1.5]}] (2) -- (1);
		\draw [arrows = {-Latex[scale=1.5]}] (3) -- (1);
		\draw [arrows = {-Latex[scale=1.5]}] (4) -- (1);
		\draw [arrows = {-Latex[scale=1.5]}] (5) -- (1);

	\end{tikzpicture}
	\caption{Illustrative example for the weakening and reinforcement postulates (\textbf{P7-8}).}
	\label{fig:weak_reinf}
    \end{figure}

In certain contexts, a hybrid behaviour can be more desirable. Let us define the following associative symmetrical sum: \[ \varphi_\mathcal{R}^4(x_1,x_2) = \displaystyle \frac{x_1x_2}{1 - x_1-x_2+2x_1x_2}\] With this function, the weakening postulate holds if $x_1 < 0.5$ and $x_2 <0.5$ whereas the reinforcement postulate holds if $x_1>0.5$ and $x_2 > 0.5$. In the middle, neither of the two holds and the function behaves as a compromise operator: $\pi_\mathcal{R}^4(a) = \varphi_\mathcal{R}^4(0.2,0.9,0.8,0.1) = \varphi_\mathcal{R}^4(\varphi_\mathcal{R}^4(0.2,0.9),\varphi_\mathcal{R}^4(0.8,0.1)) = 0.5.$

\end{example}

\textbf{(P9): Neutral element of $\varphi_*$ --} By definition, a neutral element is a variable that has no effect on the final result of the aggregation. In argumentation this corresponds to situations where enunciating or not enunciating an argument makes no difference. This property lends a special meaning to one specific degree of acceptability, implying the definition of a special value in the domain $I$ of the acceptability degrees.

A first solution is to create an out-of-scale value, $e$, corresponding to this special meaning. It requires to extend the definition of $\varphi_f$ so that this value $e$ can be reached and used.

Another option is to select a value from $I$, the domain of $\acc$, as the neutral element. For example, if the  neutrality principle~(\textbf{A6}) is verified, the behaviour of the semantics with respect to ``useless'' arguments is ignorance, i.e.  an attacker with an acceptability degree zero has no effect, as if it were not present. In our formalism, this is equivalent to choosing an operator $\varphi_*$ for which $0$ is a neutral element~(see Proposition~\ref{prop:qbaf_neutre}), excluding the choice of t-norms for $\varphi_*$. However, this choice implies that there is no difference between unacceptable arguments and useless ones, as both have an acceptability degree equal to $0$. On another note, if, in addition, $\varphi_*$ is continuous at $0$, it leads to either the weakening (\textbf{P7}) postulate not holding or $\varphi_*$ being the constant function $0$. If, instead, $\varphi_*$ is not continuous at $0$, then it brings us back to the first solution where $e=0$. 

To conclude about this postulate, the existence of a neutral element is context dependant. 

\textbf{(P10): Null element of $\varphi_*$ --} A null element is a variable that determines the outcome of the aggregation, regardless of the values of the other variables. In argumentation, it corresponds to an argument whose acceptability prevails over those of the other arguments. It can, for example, represent irrefutable arguments.

As with the previous property, there are two alternatives: either adding an out-of-scale value or selecting a specific one from the domain. A natural candidate would be to take the value $1$ and consequently mix up the notion of irrefutability and total acceptability. In contrast to the discussion on the neutral element, the difference between these two types of argument lies in their nature rather than their expected effect. An irrefutable argument is one that cannot be attacked and whose acceptability is always at the maximum level. Conversely, a standard argument may have an acceptability value of $1$ at one point in the debate, but then become less acceptable as the discussion progresses. In both cases, those two arguments can have an acceptability degree that serve as null element.

As a conclusion, the necessity of this postulate depends on the context.

\textbf{(P11): Composition of $\varphi_*$ --} Consider the addition of an argument $c$ that attacks both $a$ and $b$. None of the previous postulates tells us how the global weight of the attackers of $a$ will change in comparison to that of $b$. If the composition postulate holds, and if $\pi_\mathcal{R}(a)\geq\pi_\mathcal{R}(b)$, then the addition of argument $c$ will not change the relative order between $\pi_\mathcal{R}(a)$ and $\pi_\mathcal{R}(b)$. We argue that this is a desirable behaviour. Indeed, the relative order between $\pi_\mathcal{R}(a)$ and $\pi_\mathcal{R}(b)$ being reversed after adding a common attacker is not intuitive for a user. It could lead to a loss of understandability for the aggregative semantics. Imposing the composition postulate prevents this scenario from happening. 

\begin{example} To illustrate what could happen if the composition postulate does not hold, we consider the QBAF represented on the left side of Figure~\ref{fig:composition} and an aggregation function $\varphi_\mathcal{R}$ that does not verify the composition principle, defined as:
\[   \varphi_\mathcal{R}(x_1,\dotsc, x_n) = \left\{
\begin{array}{ll}
      \text{avg}_\textup{am}{(x_1,\dotsc,x_n)} & \textup{if } \text{ avg}_\textup{am}{(x_1,\dotsc,x_n)} > 0.5  \\
      \min(x_1,\dotsc,x_n) & \textup{otherwise.} 
\end{array} 
\right. \]

\noindent One then gets $\pi_{\mathcal{R}}(a_1) = 0.2$ and $\pi_\mathcal{R}(a_2) = 0.3$, i.e. $a_2$ is more attacked than $a_1$.

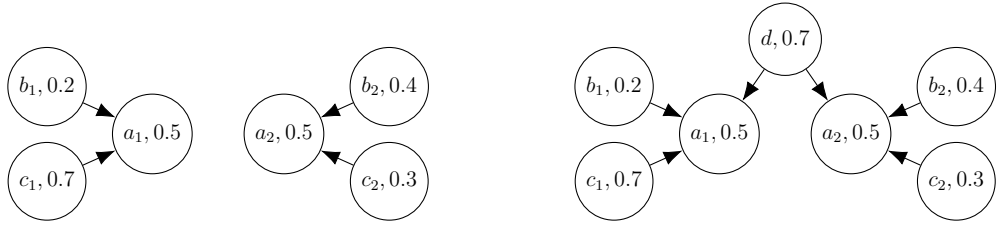
\begin{figure}[t]
	\centering

    \begin{subfigure}{0.45\textwidth}
    \centering
        \begin{tikzpicture}[scale=0.7,transform shape, node distance={25mm}, main/.style = {draw, circle,minimum size=12mm}]
		\node[main, align=left] (1) {$a_1,0.5$};
        \node[main, right of = 1] (10) {$a_2,0.5$};
		
		\node[main,align=left] (2)  at (-2,0.9) {$b_1,0.2$};
		\node[main,align=left] (3)  at (-2,-0.9) {$c_1,0.7$};
		\node[main,align=left] (4)  at (4.5,0.9) {$b_2,0.4$};
		\node[main,align=left] (5)  at (4.5,-0.9) {$c_2,0.3$};
		
		\draw [arrows = {-Latex[scale=1.5]}] (2) -- (1);
		\draw [arrows = {-Latex[scale=1.5]}] (3) -- (1);
		\draw [arrows = {-Latex[scale=1.5]}] (4) -- (10);
		\draw [arrows = {-Latex[scale=1.5]}] (5) -- (10);
				
	\end{tikzpicture}
    \end{subfigure}%
    \hfill
    \begin{subfigure}{0.45\textwidth}
    \centering
    \begin{tikzpicture}[scale=0.7,transform shape, node distance={25mm}, main/.style = {draw, circle,minimum size=12mm}]
		\node[main, align=left] (1) {$a_1,0.5$};
        \node[main, right of = 1] (10) {$a_2,0.5$};
		
		\node[main,align=left] (2)  at (-2,0.9) {$b_1,0.2$};
		\node[main,align=left] (3)  at (-2,-0.9) {$c_1,0.7$};
		\node[main,align=left] (4)  at (4.5,0.9) {$b_2,0.4$};
		\node[main,align=left] (5)  at (4.5,-0.9) {$c_2,0.3$};
		\node[main,align=left] (6)  at (1.25,1.8) {$d,0.7$};
		
		\draw [arrows = {-Latex[scale=1.5]}] (2) -- (1);
		\draw [arrows = {-Latex[scale=1.5]}] (3) -- (1);
		\draw [arrows = {-Latex[scale=1.5]}] (4) -- (10);
		\draw [arrows = {-Latex[scale=1.5]}] (5) -- (10);
		\draw [arrows = {-Latex[scale=1.5]}] (6) -- (1);
		\draw [arrows = {-Latex[scale=1.5]}] (6) -- (10);

	\end{tikzpicture}
    \end{subfigure}%

	\caption{Illustrative example for the composition postulate (\textbf{P11}).}
	\label{fig:composition}
\end{figure}

Now, a new argument $d$ attacking both $a_1$ and $a_2$ is enunciated, as represented on the right side of Figure~\ref{fig:composition}. Then, we get that $\pi_{\mathcal{R}}(a_1) = 0.54$ and $\pi_\mathcal{R}(a_2) = 0.3$. Thus, $a_1$ has become more attacked than $a_2$ even if the same attacker has been added, which is not intuitive for a user.
\end{example}

\textbf{(P12): Decomposition of $\varphi_*$ --} Reciprocally, the decomposition postulate implies that if an attacker of two arguments is withdrawn, then the order between the global weights of the attackers of each of these two arguments is preserved. In the case where temporality is taken into account, such a scenario occurs when a forgetting mechanism by argument withdrawal is desired. An argument that has been enunciated ``a long time ago'' may be forgotten after a certain period if it has not been enunciated again, i.e. removed from the values to be aggregated. 

Another context in which this situation can occur is when $0$ is a neutral element of $\varphi_*$ functions. Making an argument unacceptable, i.e. setting its acceptability to zero, then amounts to removing the argument from the aggregation: ${\varphi_*(\vec{x_n},0) = \varphi_{*}(\vec{x_n})}$. In these two situations, the order between the global strengths of each of the sets is expected to be preserved, which is equivalent to the decomposition postulate holding. 

Consequently, the decomposition principle should hold in contexts where arguments can be removed; otherwise, this property does not matter.

\begin{example}
To illustrate what could happen if the composition postulate does not hold, we consider the QBAF represented on the left side of Figure~\ref{fig:composition} and an aggregation function $\varphi_\mathcal{R}$ that does not verify the decomposition principle, defined as:

\[   \varphi_\mathcal{R}(x_1,\dotsc, x_n) = \left\{
\begin{array}{ll}
      \max(x_1,\dotsc,x_n) & \textup{if } \textup{ avg}_\textup{am}{(x_1,\dotsc,x_n)} > 0.5  \\
      \text{avg}_\textup{am}{(x_1,\dotsc,x_n)} & \textup{otherwise.} 
\end{array} 
\right. \]

\noindent One gets $\pi_{\mathcal{R}}(a_1) = 0.6$ and $\pi_\mathcal{R}(a_2) = 0.8$. So, $a_2$ is more attacked than~$a_1$.

Now, let us consider that argument $d$ is removed, for instance $d$ is forgotten, as represented on the right side of Figure~\ref{fig:decomposition}. Then, we get $\pi_{\mathcal{R}}(a_1) = 0.6$ and $\pi_\mathcal{R}(a_2) = 0.5$. Thus, $a_1$ has become more attacked than $a_2$ even if it is a common attacker that has been removed.
\end{example}

\begin{figure}[t]
	\centering

    \begin{subfigure}{0.45\textwidth}
    \centering
        \begin{tikzpicture}[scale=0.7,transform shape, node distance={25mm}, main/.style = {draw, circle,minimum size=12mm}]
		\node[main, align=left] (1) {$a_1,0.5$};
        \node[main, right of = 1] (10) {$a_2,0.5$};
		
		\node[main,align=left] (2)  at (-2,0.9) {$b_1,0.6$};
		\node[main,align=left] (3)  at (-2,-0.9) {$c_1,0.6$};
		\node[main,align=left] (4)  at (4.5,0.9) {$b_2,0.8$};
		\node[main,align=left] (5)  at (4.5,-0.9) {$c_2,0.2$};
		\node[main,align=left] (6)  at (1.25,1.8) {$d,0.6$};
		
		\draw [arrows = {-Latex[scale=1.5]}] (2) -- (1);
		\draw [arrows = {-Latex[scale=1.5]}] (3) -- (1);
		\draw [arrows = {-Latex[scale=1.5]}] (4) -- (10);
		\draw [arrows = {-Latex[scale=1.5]}] (5) -- (10);
		\draw [arrows = {-Latex[scale=1.5]}] (6) -- (1);
		\draw [arrows = {-Latex[scale=1.5]}] (6) -- (10);
				
	\end{tikzpicture}
    \end{subfigure}%
    \hfill
    \begin{subfigure}{0.45\textwidth}
    \centering
    \begin{tikzpicture}[scale=0.7,transform shape, node distance={25mm}, main/.style = {draw, circle,minimum size=12mm}]
		\node[main, align=left] (1) {$a_1,0.5$};
        \node[main, right of = 1] (10) {$a_2,0.5$};
		
		\node[main,align=left] (2)  at (-2,0.9) {$b_1,0.6$};
		\node[main,align=left] (3)  at (-2,-0.9) {$c_1,0.6$};
		\node[main,align=left] (4)  at (4.5,0.9) {$b_2,0.8$};
		\node[main,align=left] (5)  at (4.5,-0.9) {$c_2,0.2$};
		
		\draw [arrows = {-Latex[scale=1.5]}] (2) -- (1);
		\draw [arrows = {-Latex[scale=1.5]}] (3) -- (1);
		\draw [arrows = {-Latex[scale=1.5]}] (4) -- (10);
		\draw [arrows = {-Latex[scale=1.5]}] (5) -- (10);

	\end{tikzpicture}
    \end{subfigure}%

	\caption{Illustrative example for the composition principle (\textbf{P12}).}
	\label{fig:decomposition}
\end{figure}
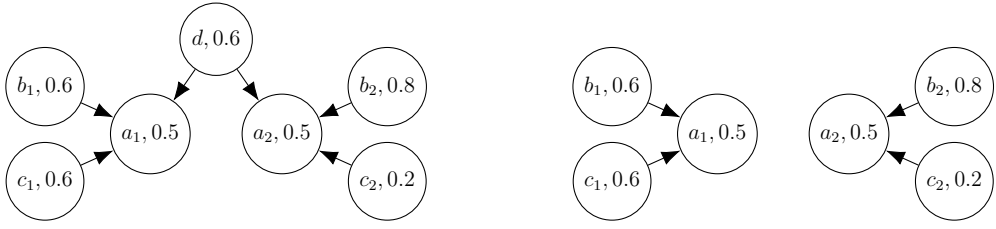
\paragraph{Partial conclusion} The boundary conditions (\textbf{P1}), monotony (\textbf{P2}), composition~(\textbf{P11}) and decomposition (\textbf{P12}) are postulates that we consider desirable regardless of the context. Associativity (\textbf{P6}) is not a property that is necessarily expected and it is never desirable in cases where the postulate of idempotence is verified. The postulates of continuity (\textbf{P3}), commutativity (\textbf{P4}), idempotence (\textbf{P5}), weakening (\textbf{P7}), reinforcement (\textbf{P8}), neutral element (\textbf{P9}) and null element (\textbf{P10}) depend on the cases in which the aggregative semantics is applied. Finally, if the neutral element (\textbf{P9}) and null element (\textbf{P10}) postulates hold, then their application depends on the choice of $\varphi_f$. 

\subsection{The Case of $\varphi_f$}\label{subsec:aggreg_final}

This subsection discusses desirable postulates for $\varphi_f$, which computes the degree of acceptability of an argument as a function of the global weight of its attackers, the global weight of its supporters, and its intrinsic weight.
Unlike~$\varphi_{\mathcal{R}}$ and~$\varphi_{\mathcal{S}}$  that process a varying number of variables, $\varphi_f$ has exactly three variables. Hence, the postulates 
related to adding or removing a variable, i.e. commutativity (\textbf{P4}), associativity (\textbf{P6}), the existence of a neutral element (\textbf{P9}), composition (\textbf{P11}) and decomposition~(\textbf{P12}), are irrelevant.

\textbf{(P2): Monotony of $\varphi_f$ --} Each of the variables in $\varphi_f$ has a specific meaning. The first one corresponds to the weight of the attackers $\pi_\mathcal{R}$ of an argument: the greater it is, the lower the acceptability of the argument, which requires a non-increasing function. The second and third variables correspond to the weight of the supporters $\pi_\mathcal{R}$ and the intrinsic strength of the argument $w$. The greater they respectively are, the higher the acceptability, calling for non-decreasing functions. Thus, $\varphi_f$ is expected to be non-increasing according to its first variable and non-decreasing according to the last two.

\textbf{(P1): Boundary condition of $\varphi_f$ --} Due to the monotony of $\varphi_f$, its boundary conditions are written as \[\varphi_f(0,1,1)= 1 \textup{ and } {\varphi_f(1,0,0) = 0}.\] We also add the condition: $\varphi_f(0,0,z) = z$. This is a classic property of argumentation, corresponding to the principle of stability (\textbf{A5}): an argument that is neither attacked nor supported has an acceptability equal to its intrinsic weight.

Depending on the choices made for $\varphi_*$ regarding the neutral element (\textbf{P9}) and the null element (\textbf{P10}), other boundary conditions may be necessary. Indeed, out-of-scale values must be reachable. Thus, those supplementary boundary conditions depend on the existence and value of the different neutral and null elements, with one instance of each element per function.

For example, if $\varphi_{\mathcal{R}}$ and $\varphi_{\mathcal{S}}$ both have a neutral element and if these two neutral elements are the same, for example $e_1 \neq 0$, we propose the following: ${\varphi_f(1,1,z) = e_1}$. This condition states that an argument with maximum global weight from both attackers and supporters is useless. Indeed, in this situation, this argument is both completely attacked and completely supported, and should therefore be left undecided. 

Similarly, we can for example choose $\varphi_{\mathcal{R}}$ and $\varphi_{\mathcal{S}}$ such that they have the same null element $e_0$ and $e_0 \neq 1$. Following the discussion in the previous section regarding attackers and supporters of such arguments, it could be required that, in this case, $\varphi_f(0,0,z) = e_0$. However, this condition contradicts the stability principle (\textbf{A5}). Moreover, not being attacked and supported does not necessarily mean that the argument is irrefutable. Thus, this condition does not make sense. This is because irrefutable arguments are a specific type of argument. One way to model this would be to define a specific value for $w(a)$, but we leave this discussion for future work.

\begin{figure}[t!]
    \centering
    
    \begin{subfigure}{0.49\textwidth}
    \centering
        \begin{tikzpicture}[scale=0.8,transform shape, node distance={25mm}, main/.style = {draw, circle,minimum size=12mm}]
		\node[main] (1) {$a,0.5$};
		
		\node[main] (2)  at (-1.8,1.4) {$b,0.1$};
		\node[main] (3)  at (1.8,1.4) {$c,0.6$};
		\node[main] (5)  at (-1.8,-1.4) {$e,0.59$};
		\node[main] (4)  at (1.8,-1.4) {$d,0.3$};

		\draw [arrows = {-Latex[scale=1.5]}, dashed] (2) -- (1);
		\draw [arrows = {-Latex[scale=1.5]}] (3) -- (1);
		\draw [arrows = {-Latex[scale=1.5]}] (4) -- (1);
		\draw [arrows = {-Latex[scale=1.5]}] (5) -- (1);
		
	\end{tikzpicture}
    \end{subfigure}%
    \hfill
    \begin{subfigure}{0.49\textwidth}
    \centering
        \begin{tikzpicture}[scale=0.8,transform shape, node distance={25mm}, main/.style = {draw, circle,minimum size=12mm}]
		\node[main] (1) {$a,0.5$};
		
		\node[main] (2)  at (-1.8,1.4) {$b,0.1$};
		\node[main] (3)  at (1.8,1.4) {$c,0.6$};
		\node[main] (5)  at (-1.8,-1.4) {$e,0.6$};
		\node[main] (4)  at (1.8,-1.4) {$d,0.3$};

		\draw [arrows = {-Latex[scale=1.5]}, dashed] (2) -- (1);
		\draw [arrows = {-Latex[scale=1.5]}] (3) -- (1);
		\draw [arrows = {-Latex[scale=1.5]}] (4) -- (1);
		\draw [arrows = {-Latex[scale=1.5]}] (5) -- (1);
		
	\end{tikzpicture}
    \end{subfigure}
    \caption{Illustrative example for the continuity postulate (\textbf{P3}) for $\varphi_f$.}
    \label{fig:continuity_f}
\end{figure}
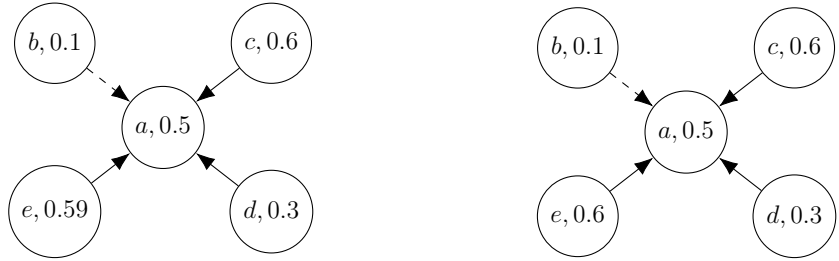

\textbf{(P3): Continuity of $\varphi_{f}$ --} As for $\varphi_{\mathcal{R}}$ and $\varphi_{\mathcal{S}}$, enforcing the continuity of $\varphi_f$ depends on whether threshold and saturation effects are desired. Consider, for example, the case where different pieces of evidence incriminate a suspect in a trial, i.e. attack his or her innocence. As long as a certain number of pieces of evidence of a certain quality have not been presented, i.e. as long as the global weight of the attackers has not reached a certain value, the judge does not consider this suspect to be guilty. Consequently, the various supports on the suspect are taken into account and increase the level of innocence, that is, the degree of acceptability of the argument. However, if enough evidence is presented, the suspect is judged guilty and the acceptability of the argument representing his or her innocence decreases to $0$. In this case, the function $\varphi_f$ is not continuous.

\begin{example} To illustrate the effect of the continuity for $\varphi_f$, we consider the QBAF represented on the left in Figure~\ref{fig:continuity_f}. We take the following function:
\[   \varphi_f(x,y,z) = \left\{
\begin{array}{ll}
      \bot_{\textup{\L uka}}(y,z) = \min(y+z,1) & \textup{if } x < 0.5  \\
      \max\left(0,\frac{1}{3} (-x+y+z)\right) & \textup{otherwise.} 
\end{array} 
\right. \]

\noindent and $\varphi_{\mathcal{R}} = \varphi_{\mathcal{S}} = \textup{avg}_\textup{am}$. We consider two scenarios in which the intrinsic strength of $e$ varies as represented on the left and on the right of Figure~\ref{fig:continuity_f}. 
In the first case, $w(e) = 0.59$ and $\pi_{\mathcal{R}}(a) = 0.497$, hence $\acc(a) = 0.6$. In the second scenario, $w(e) = 0.6$. Consequently,  $\pi_{\mathcal{R}}(a) = 0.5$ and ${\acc(a) = 0.03}$. Thus, a small variation of $\pi_\mathcal{R}(a)$ leads to a big variation of $\acc(a)$. This illustrates the saturation effect that one might want to model the presumption of innocence for instance.

\end{example}



\textbf{(P5): Idempotence of $\varphi_{f}$ --} This property applied to $\varphi_f$ is a special case of a more general one corresponding to a form of symmetry between attackers and supporters: 
\[\forall (x,z) \in J \times I, \varphi_f(x,x,z)=z.\] If this postulate holds, then an argument $a$ being equally attacked and supported, i.e. $\pi_\mathcal{R}(a) = \pi_\mathcal{S}(a)$, has an acceptability degree equal to its intrinsic weight~$w(a)$: the attackers and supporters of $a$ compensate each other. This is a desired behaviour when they both share the same importance in the acceptability of an argument. Note that this does not mean that they play a symmetrical role since $\varphi_\mathcal{R}$ and $\varphi_\mathcal{S}$ can be different.

\textbf{(P7-8): Weakening and reinforcement of $\varphi_f$ --} Due to the monotony of~$\varphi_f$, the weakening postulate becomes, for given meanings of high and low: the combination of a high global weight of attackers with a low weight of supporters and a low intrinsic weight for the argument provides an even lower acceptability. Similarly, an argument that is strong, weakly attacked and strongly supported has a reinforced acceptability value, i.e. even stronger. 
If $\varphi_f$ has a hybrid behaviour, i.e. verifies these two postulates, then the acceptability values are spread, leading to more contrast between strong and weak arguments.


\textbf{(P10): Null element of $\varphi_{f}$ --} The meaning of a null element for $\varphi_f$ depends on its position, resulting in different expected outputs. More precisely, a null element for the first variable $x$ corresponds to being attacked so much, respectively so less, that no matter how supported and intrinsically strong the argument is, the aggregated value would remain the same, for example $0$, respectively $1$. With a symmetrical yet similar meaning for the second variable $y$, a conflict would arise if $x$ and $y$ are both equal to their respective null element. This would create a new boundary condition, for which a value has to be chosen in case this conflict were to happen. 

Finally, a null element for the third variable $z$ would correspond to special arguments. Its meaning could be diverse, ranging from irrefutable argument (see the discussion about \textbf{P10} for $\varphi_*$), to arguments whose intrinsic strength has not been evaluated. As a consequence, the choice of the existence of a null element strongly depends on the application.

\paragraph{Partial conclusion} The monotony postulate (\textbf{P2}) of $\varphi_f$ is expected in any context. Continuity (\textbf{P3}), idempotence (\textbf{P5}), weakening (\textbf{P7}), strengthening (\textbf{P8}) and the existence of a null element (\textbf{P10}) strongly depend on the context. Finally, the boundary conditions~(\textbf{P1}) are expected in any context, but they also depend on the choices of $\varphi_{\mathcal{R}}$ and $\varphi_{\mathcal{S}}$.

\section{A Comparison with the Principles of the Gradual Semantics}
\label{sec:principles}

Section~\ref{sec:discu_agreg}, this section proposes a formal comparison of the principles (\textbf{A1-12}) used in the literature, as well as some commonly used gradual semantics, with respect to aggregative semantics. In particular, we examine the QBAF principles introduced by~\citet{amgoud2018weighted}, whose formal definition is recalled in Section~\ref{sec:ax_aaf}, and discuss how they relate to the postulates detailed in Section~\ref{sec:discu_agreg}. We distinguish between two types of principles: those that are formally related to the postulates of the aggregation functions, for which we give a sufficient condition, and those that describe properties involving the three aggregation functions, imposing a joint choice of~$\varphi_{\mathcal{R}},\varphi_{\mathcal{S}}$ and~$\varphi_f$. For this second category, we propose a reformulation for aggregative semantics. The proofs of all propositions are straightforward and therefore only one of them is given in this section while the others can be found in the appendix. 

\subsection{Relation between Argumentative Principles and Postulates of Aggregation Operators}

The proposed aggregation semantics defines a particular structure for calculating the degree of acceptability of an argument. Indeed, only the degrees of acceptability of the direct attackers and direct supporters of an argument and its own weight are taken into account in this calculation. For this reason, some principles of argumentation are verified without any additional hypothesis. Throughout this section $A = (\mathcal{A},\mathcal{R},\mathcal{S},w)$ is an acyclic QBAF and $\accs$ is an aggregative semantics, whose aggregation function are $\varphi_\mathcal{R},\varphi_\mathcal{S},\varphi_f$.

\begin{proposition}
	\label{th:no_hyp}
Let $\accs$ be an aggregative semantics defined in Definition~\ref{def:sem_agg}. Then, $\accs$ satisfies the following principles: anonymity (\textbf{A1}), independence (\textbf{A2}), directionality (\textbf{A3}) and equivalence (\textbf{A4}).
\end{proposition}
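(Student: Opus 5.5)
The plan is to argue by well-founded induction along the acyclic graph, since on ac-WAG the recursive equation of Definition~\ref{def:sem_agg} determines $\accs$ uniquely. Fix a topological order of $\mathcal{A}$ with respect to $\mathcal{R}\cup\mathcal{S}$. Every argument's value is then obtained by applying the fixed triple $(\varphi_{\mathcal{R}},\varphi_{\mathcal{S}},\varphi_f)$ to its own weight and to the multisets of values of its direct attackers and supporters. Each principle will then follow by showing that two computations receive identical inputs at every node.

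Equivalence (\textbf{A4}) needs no induction. The bijections $f$ and $f'$ show that the multisets $\{\acc(x)\mid x\in\textup{Att}_a\}$ and $\{\acc(x)\mid x\in\textup{Att}_b\}$ coincide, and likewise for the supporters. Since $\varphi_{\mathcal{R}}$ and $\varphi_{\mathcal{S}}$ take multisets as inputs (this is where their commutativity is used), we get $\pi_{\mathcal{R}}(a)=\pi_{\mathcal{R}}(b)$ and $\pi_{\mathcal{S}}(a)=\pi_{\mathcal{S}}(b)$. Together with $w(a)=w(b)$, applying $\varphi_f$ gives $\acc(a)=\acc(b)$.

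Anonymity (\textbf{A1}) follows by induction along the topological order of $A$; note that $f$ preserves acyclicity and maps the order onto one for $A'$. For an argument $a$, we have $f(\textup{Att}_a)=\textup{Att}_{f(a)}$ and $f(\textup{Supp}_a)=\textup{Supp}_{f(a)}$, because $f$ preserves $\mathcal{R}$ and $\mathcal{S}$. The induction hypothesis gives $\acc(x)=\accprime(f(x))$ for all these parents, so the multisets agree. Combined with $w(a)=w'(f(a))$, this yields the same $\varphi_f$ output. The base case consists of the parentless arguments, whose value is $\varphi_f(\varphi_{\mathcal{R}}(\emptyset),\varphi_{\mathcal{S}}(\emptyset),w)$ in both frameworks.

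Independence (\textbf{A2}) is handled the same way. In $A\oplus A'$, an argument $a\in\mathcal{A}$ has exactly the same attackers, supporters and weight as in $A$, since the argument sets are disjoint and no relation crosses between them. Induction on the topological order of $A$ then gives $\acc(a)=\customacc{A\oplus A'}(a)$.

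For directionality (\textbf{A3}), let $X$ be the set of arguments $x$ with no path from $b$ to $x$ in $A'$. The only difference between the two frameworks is the edge $(a,b)$, which changes only $\textup{Att}_b$ or $\textup{Supp}_b$, and $b\notin X$. Moreover, $X$ is closed under taking parents in $A'$: if $y$ is a parent of $x\in X$ and there were a path from $b$ to $y$, extending it by the edge from $y$ to $x$ would give a path from $b$ to $x$. So for every $x\in X$, its parents and weight are identical in $A$ and $A'$, and all its parents lie in $X$. Induction on $X$ along the topological order then gives $\acc(x)=\accprime(x)$.

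The only real obstacle is this closure argument in (\textbf{A3}), together with stating clearly that acyclicity makes the recursive definition a well-defined unique assignment, so that the induction is legitimate. Everything else is direct substitution into Definition~\ref{def:sem_agg}.
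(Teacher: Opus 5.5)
Your proposal is correct and follows essentially the paper's route: direct substitution into Definition~\ref{def:sem_agg}, with the commutativity of $\varphi_{\mathcal{R}}$ and $\varphi_{\mathcal{S}}$ giving (\textbf{A4}), and induction along the acyclic structure giving (\textbf{A1}). The paper phrases the latter as a contradiction that climbs the chain of parents until it reaches a parentless argument. Your explicit induction for (\textbf{A2}) and (\textbf{A3}), including the closure of $X$ under taking parents, also makes rigorous a step the paper's one-level argument leaves implicit there, namely that the parents' acceptability degrees coincide in both frameworks.
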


\begin{proof}

    Let ${A},{A}' \in \textup{ac-WAG}$.
	
	\textbf{(A1)}: If $A$ and $A'$ are isomorph, let $f$ be an isomorphism between ${A}$ and ${A}'$.
	
	Let $a \in \mathcal{A}$ be an argument, let us prove that:  $\accsprime(f(a))= \accs(a)$. 
	
	By definition: $\accsprime(f(a)) = \varphi_f(\pi_{\mathcal{R}}(f(a)),\pi_{\mathcal{S}}(f(a)),w'(f(a)))$.
	
     First, let us consider the case where $\text{Supp}_a = \text{Att}_a = \emptyset$.
	
	As $f$ is an isomorphism, $\text{Supp}_{f(a)} = \text{Att}_{f(a)} = \emptyset$ and $w'(f(a)) = w(a)$.
	
	Therefore: \begin{align*}
	    \varphi_f(\pi_{\mathcal{R}}(f(a)),\pi_{\mathcal{S}}(f(a)),w'(f(a))) &= \varphi_f(0,0,w(a))\\
        &= \varphi_f(\pi_{\mathcal{R}}(a),\pi_{\mathcal{S}}(a),w(a)) \\
        &= \accs(a).
	\end{align*}

	Let us now denote $\tilde{\mathcal{A}} = \{a \in \mathcal{A} \mid \accs(a) = \accsprime(f(a))\}$. 
	
	As $A$ is acyclic, $\exists a \in \mathcal{A}$ such that $\text{Supp}_a = \text{Att}_a = \emptyset$. Therefore, $\tilde{\mathcal{A}} \neq \emptyset$. 
	
	Let us prove that $\tilde{\mathcal{A}} = \mathcal{A}$. 
    By definition $\tilde{\mathcal{A}} \subseteq \mathcal{A}$, let us prove by contradiction that $\tilde{\mathcal{A}} \supseteq \mathcal{A}$: let us suppose that $\exists a \in \mathcal{A}\setminus \tilde{\mathcal{A}}$. Two cases may occur:
        \begin{enumerate}
            \item[(i)]  $\text{Att}_a \subseteq \tilde{\mathcal{A}}$ and $\text{Supp}_a \subseteq \tilde{\mathcal{A}}$
            \item[(ii)]  $\text{Att}_a \not\subseteq \tilde{\mathcal{A}}$ or $\text{Supp}_a \not\subseteq \tilde{\mathcal{A}}$
        \end{enumerate}
		
		(i) If $\text{Att}_a \subseteq \tilde{\mathcal{A}}$ and $\text{Supp}_a \subseteq \tilde{\mathcal{A}}$, then by definition: \begin{align*}
	    \accsprime(f(a)) &= \varphi_f(\pi_{\mathcal{R}}(f(a)),\pi_{\mathcal{S}}(f(a)),w'(f(a)))\\
        &= \varphi_f(\pi_{\mathcal{R}}(f(a)),\pi_{\mathcal{S}}(f(a)),w(a)).
	\end{align*}  
		
		As $ \forall b_i \in \text{Att}_a, b_i \in \tilde{\mathcal{A}}$ then, $\accsprime(f(b_i)) = \accs(b_i)$. 
		
		Therefore: \begin{align*}
	    &\varphi_{\mathcal{R}}(\accsprime(f(b_1)), \dotsc, \accsprime(f(b_n)))=~\varphi_{\mathcal{R}}(\accs(b_1),\dotsc,\accs(b_n)).
	\end{align*}
    
        
        Hence, $\pi_{\mathcal{R}}(f(a)) = \pi_{\mathcal{R}}(a)$. 
		
		Similarly, $\pi_{\mathcal{S}}(f(a)) = \pi_{\mathcal{S}}(a)$.
		
		Consequently, $a \in \tilde{\mathcal{A}}$ which is a contradiction.

        (ii) If $\text{Att}_a \not\subseteq \tilde{\mathcal{A}}$ or $\text{Supp}_a \not\subseteq \tilde{\mathcal{A}}$, let us first consider the case $\text{Att}_a \not\subseteq \tilde{\mathcal{A}}$ and denote $a_1 \in \text{Att}_{a} \setminus \tilde{\mathcal{A}}$. 

        $a_1 \not \in \tilde{\mathcal{A}}$ implies $\text{Supp}_{a_1} \not\subseteq \tilde{\mathcal{A}}$ or $\text{Att}_{a_1} \not\subseteq \tilde{\mathcal{A}}$.
			
		Let us suppose that $\text{Att}_{a_1} \not\subseteq \tilde{\mathcal{A}}$, then $\exists a_2 \in \text{Att}_{a_1} \setminus \tilde{\mathcal{A}}$. 
			
        Because the graph is acyclic and $\mathcal{A}$ is finite, applying this reasoning at most~$n$ times, where $n$ is the number of arguments, leads to an argument $a_0$ such that $\text{Supp}_{a_0} = \text{Att}_{a_0} = \emptyset$ and $a_0 \in \tilde{\mathcal{A}}$. This is a contradiction.

        A similar contradiction can be obtained if $\text{Supp}_a \not\subseteq \tilde{\mathcal{A}}$.

	Therefore, we proved by contradiction that $\mathcal{A}= \tilde{\mathcal{A}}$.
	\smallskip
	
	\textbf{(A2)}: If $\mathcal{A} \cap \mathcal{A'} = \emptyset$ then $\forall a \in \mathcal{A}, \text{Att}_a^{A} = \text{Att}_a^{A\oplus A'}$ and $\text{Supp}_a^{A} = \text{Supp}_a^{A\oplus A'}$.
	
	Hence, $\pi_{\mathcal{R}}^{A}(a) = \pi_{\mathcal{R}}^{A \oplus A'}(a)$ and $\pi_{\mathcal{S}}^{A}(a) = \pi_{\mathcal{S}}^{A \oplus A'}(a)$.
	
	As $w^A(a) = w^{A \oplus A'}(a)$, then: \begin{align*}
	    \accs(a) &= \varphi_f(\pi_{\mathcal{R}}^{A}(a), \pi_{\mathcal{S}}^{A}(a), w^A(a))\\
        &=\varphi_f(\pi_{\mathcal{R}}^{A \oplus A'}(a), \pi_{\mathcal{S}}^{A \oplus A'}(a), w^{A \oplus A'}(a)) = \customaccbis{{A}\oplus A'}{S}(a).
	\end{align*}
    
	\smallskip
	
	\textbf{(A3)}: Let us assume that $\mathcal{A} = \mathcal{A'}, w = w', \mathcal{R} \subseteq \mathcal{R}'$ and~$\mathcal{S} \subseteq \mathcal{S}'$.
	
	Let  $a,b,x \in \mathcal{A}$ be arguments, such that $\mathcal{R}' \cup \mathcal{S}' = \mathcal{R} \cup\mathcal{S}\cup \{(a,b)\}$ and there is no path between $b$ and $x$. The latter condition implies that $b \notin \text{Att}_x \cup \text{Supp}_x$.
	
	Therefore, $\text{Att}_x^{A} = \text{Att}_x^{A'}$ and $\text{Supp}_x^{A} = \text{Supp}_x^{A'}$.

	Hence, $\pi_{\mathcal{R}}^{A}(x) = \pi_{\mathcal{R}}^{A'}(x)$ and $\pi_{\mathcal{S}}^{A}(x) = \pi_{\mathcal{S}}^{A'}(x)$.
	
	As $w(x) = w'(x)$, then:\begin{align*}
	    \accs(x) &= \varphi_f(\pi_{\mathcal{R}}^{A}(x), \pi_{\mathcal{S}}^{A}(x), w(x))\\
        &=\varphi_f(\pi_{\mathcal{R}}^{A'}(x), \pi_{\mathcal{S}}^{A'}(x), w'(x))\\
        &= \accsprime(x).
	\end{align*}
    
	\smallskip
	
	\textbf{(A4)}: Let $a,b \in \mathcal{A}$ be arguments such that: \begin{itemize}
		\item $w(a) = w(b)$ ,
		\item $\exists f: \text{Att}_a \to \text{Att}_b$ bijective such that: $\forall x \in \text{Att}_a, \accs(x) = \accs(f(x))$,
		\item $\exists f': \text{Supp}_a \to \text{Supp}_b$ bijective such that: $\forall y \in \text{Supp}_a, \accs(y) = \accs(f'(y))$.
	\end{itemize} 
	
	For the sake of legibility, we note $\pi_{\mathcal{R}}(f(a)) = \varphi_{\mathcal{R}}(\{\accs(f(x)) \mid x \in \text{Att}_a\}$ and similarly for $f'$ and for $\varphi_{\mathcal{S}}$.
	
	Since $\forall x \in \text{Att}_a, \accs(x) = \accs(f(x))$ and $\varphi_{\mathcal{R}}$ is commutative (see Definition~\ref{def:sem_agg}), then ${\pi_{\mathcal{R}}(a) = \pi_{\mathcal{R}}(f(a)) = \pi_{\mathcal{R}}(b)}.$
	
	Similarly, $\pi_{\mathcal{S}}(a) = \pi_{\mathcal{S}}(f'(a)) = \pi_{\mathcal{S}}(b)$.
	
	Hence: \begin{align*}
	    \accs(a) &= \varphi_f(\pi_{\mathcal{R}}(a), \pi_{\mathcal{S}}(a), w(a))\\
        &= \varphi_f(\pi_{\mathcal{R}}(b), \pi_{\mathcal{S}}(b), w(b))\\
        &=\accs(b).
	\end{align*}
	
\end{proof}

For the stability principle (\textbf{A5}) to hold, we need to impose additional constraints to $\accs$. A sufficient condition is the following: 
\begin{proposition}[Stability]
	\label{th:equiv}
Let $\acc$ be an aggregative semantics defined in Definition~\ref{def:sem_agg}. If $\varphi_\mathcal{R},\varphi_\mathcal{S}$ and $\varphi_f$ verify the boundary conditions discussed in Section~\ref{sec:discu_agreg}, then $\accs$ verifies the stability principle (\textbf{A5}).
\end{proposition}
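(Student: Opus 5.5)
The plan is to unfold Definition~\ref{def:sem_agg} at an argument with no parents and then apply the two boundary conditions of Section~\ref{sec:discu_agreg} that concern the empty set and the stability clause for $\varphi_f$. Fix an acyclic QBAF $A = (\mathcal{A},\mathcal{R},\mathcal{S},w)$ and an argument $a \in \mathcal{A}$ with $\textup{Att}_a = \textup{Supp}_a = \emptyset$. We must show $\accs(a) = w(a)$. By Definition~\ref{def:sem_agg},
\[\accs(a) = \varphi_f\left(\varphi_{\mathcal{R}}(\emptyset), \varphi_{\mathcal{S}}(\emptyset), w(a)\right),\]
since both multisets of acceptability degrees are empty. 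No recursion is involved here, because the value of $a$ does not depend on any other argument.

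The first step is to evaluate the two inner extended aggregations on the empty multiset. The discussion of \textbf{(P1)} for $\varphi_*$ in Section~\ref{subsec:phi_etoile} prescribes $\varphi_{\mathcal{R}}(\emptyset)$ and $\varphi_{\mathcal{S}}(\emptyset)$ to be the minimum of the respective codomains. With the default choice $I_{\mathcal{R}} = I_{\mathcal{S}} = [0,1]$, and for every function in Table~\ref{tab:ex_fonc_aggreg}, both values equal $0$. Hence $\pi_{\mathcal{R}}(a) = \pi_{\mathcal{S}}(a) = 0$.

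The second step uses the boundary condition \textbf{(P1)} for $\varphi_f$ stated in Section~\ref{subsec:aggreg_final}, namely $\varphi_f(0,0,z) = z$ for all $z \in I_w$. Taking $z = w(a)$ gives $\accs(a) = \varphi_f(0,0,w(a)) = w(a)$, which is exactly \textbf{A5}.

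The only delicate point is making precise what ``the boundary conditions'' mean when the codomains are not $[0,1]$, as with the sum used for Ebs and QE, where the codomain is $\mathbb{R}^+$. I will state the hypothesis in its general form: $\varphi_{\mathcal{R}}(\emptyset) = m_{\mathcal{R}}$ and $\varphi_{\mathcal{S}}(\emptyset) = m_{\mathcal{S}}$ are the minima of the codomains, and $\varphi_f(m_{\mathcal{R}}, m_{\mathcal{S}}, z) = z$. The same two-line argument then goes through verbatim. For the default $[0,1]$ setting, and for $\mathbb{R}^+$, both minima are $0$.
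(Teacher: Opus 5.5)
Your proof is correct and is essentially the paper's: both take $\varphi_{\mathcal{R}}(\emptyset)=\varphi_{\mathcal{S}}(\emptyset)=0$ and then apply the boundary condition $\varphi_f(0,0,z)=z$ at an argument with no attackers and no supporters. Your general-codomain remark is a harmless extension, though all that is really needed is $\varphi_f(\varphi_{\mathcal{R}}(\emptyset),\varphi_{\mathcal{S}}(\emptyset),z)=z$; this weaker condition also covers the DF-Quad rewriting of Proposition~\ref{prop:mod_to_agg}, where the empty product gives $\varphi_{\mathcal{R}}(\emptyset)=\varphi_{\mathcal{S}}(\emptyset)=1$ rather than a codomain minimum, and $\varphi_f^{\textup{DF-Quad}}(1,1,z)=z$.
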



The neutrality principle (\textbf{A6}) deals with the effect of attackers and supporters with degree of acceptability equal to zero. They are qualified as useless~\citep{amgoud2018weighted} and therefore must have no effect on the acceptability of an argument: with an aggregative semantics, they must not change the value of the weight of the attackers or supporters. This is not the case by default, and it requires imposing that $0$ is a neutral element for $\varphi_{\mathcal{R}}$ and~$\varphi_{\mathcal{S}}$, at the cost of confusing useless arguments and inacceptable ones.

\begin{proposition}[Neutrality]\label{prop:qbaf_neutre}
Let $\accs$ be an aggregative semantics defined in Definition~\ref{def:sem_agg}. If $0$ is a neutral element of $\varphi_{\mathcal{R}}$ and $\varphi_{\mathcal{S}}$, then $\accs$ verifies the neutrality principle (\textbf{A6}).
\end{proposition}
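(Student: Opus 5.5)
The plan is to unfold Definition~\ref{def:sem_agg} at the two arguments $a$ and $b$ in the hypothesis of \textbf{A6}. Then we show that their global attacker and supporter weights coincide, and conclude by equality of the three inputs of $\varphi_f$. No induction over the graph is needed here. The degrees $\accs(y)$ of the parents are simply numbers already determined in the acyclic QBAF, so the argument is purely local.

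Let $a,b,x \in \mathcal{A}$ with $w(a)=w(b)$, $\accs(x)=0$, $\textup{Att}_a \subseteq \textup{Att}_b$, $\textup{Supp}_a \subseteq \textup{Supp}_b$ and $\textup{Att}_b \cup \textup{Supp}_b = \textup{Att}_a \cup \textup{Supp}_a \cup \{x\}$.

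First we dispose of the degenerate case $x \in \textup{Att}_a \cup \textup{Supp}_a$. Then the inclusions together with the union equality give $\textup{Att}_b = \textup{Att}_a$ and $\textup{Supp}_b = \textup{Supp}_a$. (The consistency assumption $\mathcal{R}\cap\mathcal{S}=\emptyset$ prevents $x$ from moving between the two sets.) The conclusion then follows directly, exactly as in the proof of \textbf{A4} in Proposition~\ref{th:no_hyp}.

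Otherwise $x \notin \textup{Att}_a \cup \textup{Supp}_a$. By the consistency assumption, $x$ lies in exactly one of $\textup{Att}_b$ or $\textup{Supp}_b$. Say $x \in \textup{Att}_b$, so $\textup{Att}_b = \textup{Att}_a \cup \{x\}$ and $\textup{Supp}_b = \textup{Supp}_a$; the support case is symmetric.

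Then $\pi_{\mathcal{S}}(b)=\pi_{\mathcal{S}}(a)$ immediately. For the attackers, the multiset $\{\accs(y) \mid y\in\textup{Att}_b\}$ equals $\{\accs(y)\mid y \in \textup{Att}_a\}$ with one extra value $0$ added. Since $0$ is a neutral element of $\varphi_{\mathcal{R}}$ (postulate \textbf{P9}) and $\varphi_{\mathcal{R}}$ is commutative, so that the position of $0$ does not matter, we get $\pi_{\mathcal{R}}(b)=\pi_{\mathcal{R}}(a)$.

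Finally, $\accs(b)=\varphi_f(\pi_{\mathcal{R}}(b),\pi_{\mathcal{S}}(b),w(b))=\varphi_f(\pi_{\mathcal{R}}(a),\pi_{\mathcal{S}}(a),w(a))=\accs(a)$.

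The only delicate point, which I expect to be the main obstacle, is the case $\textup{Att}_a=\emptyset$. There \textbf{P9} as stated only removes a variable from a list of length $n\ge 2$, whereas we need $\varphi_{\mathcal{R}}(\{0\})=\varphi_{\mathcal{R}}(\emptyset)$. I would handle this through the extension convention of Section~\ref{subsec:phi_etoile}. The neutral element is understood for the extended function, so $\varphi_{\mathcal{R}}(\{0\})=\varphi_{\mathcal{R}}(\emptyset)$. This is consistent with the choice of $\varphi_{\mathcal{R}}(\emptyset)$ as the minimum of the codomain, namely $0$, when \textbf{P1} gives $\varphi_{\mathcal{R}}(0)=0$. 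I would state this interpretation explicitly as part of the hypothesis.
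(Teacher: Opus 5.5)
Your proposal is correct and follows the same route as the paper's proof. Both use the consistency assumption to reduce to $\textup{Att}_b = \textup{Att}_a \cup \{x\}$, $\textup{Supp}_b = \textup{Supp}_a$ (or the symmetric support case), apply the neutral element $0$ of $\varphi_{\mathcal{R}}$ (resp. $\varphi_{\mathcal{S}}$) to get equal global weights, and conclude from the equality of the three inputs of $\varphi_f$. Your separate treatment of the degenerate case $x \in \textup{Att}_a \cup \textup{Supp}_a$, and your explicit reading of \textbf{P9} for the extended function so that $\varphi_{\mathcal{R}}(\{0\}) = \varphi_{\mathcal{R}}(\emptyset)$, cover details the paper glosses over; its proof uses that reading implicitly.
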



The bivariate monotony (\textbf{A7}) and bivariate reinforcement~(\textbf{A8}) principles concern the evolution of the degree of acceptability when attackers or supporters are added (monotony) or when their acceptability is changed (reinforcement).

\begin{proposition}[Reinforcement]\label{prop:qbaf_renf}
Let $\accs$ be an aggregative semantics defined in Definition~\ref{def:sem_agg}. If $\varphi_{\mathcal{R}}$ and $\varphi_{\mathcal{S}}$ are increasing (resp. strictly increasing) and $\varphi_f$ is decreasing in $x$ and increasing in $y$ (resp. strictly monotonous in $x$ and $y$), $\accs$ satisfies the reinforcement principle (resp. strict reinforcement)~(\textbf{A8}).
\end{proposition}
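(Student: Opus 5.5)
The plan is to unfold the three-stage computation at $a$ and $b$ and compare them stage by stage. Fix $a,b$, the sets $C,C'$ and the arguments $x,x',y,y'$ as in the hypothesis of \textbf{A8}, with $w(a)=w(b)$. By Definition~\ref{def:sem_agg},
\begin{align*}
\acc(a) &= \varphi_f\bigl(\varphi_{\mathcal{R}}(\{\acc(c) \mid c\in C\}\cup\{\acc(x)\}),\ \varphi_{\mathcal{S}}(\{\acc(c) \mid c\in C'\}\cup\{\acc(x')\}),\ w(a)\bigr),\\
\acc(b) &= \varphi_f\bigl(\varphi_{\mathcal{R}}(\{\acc(c) \mid c\in C\}\cup\{\acc(y)\}),\ \varphi_{\mathcal{S}}(\{\acc(c) \mid c\in C'\}\cup\{\acc(y')\}),\ w(b)\bigr).
\end{align*}
The two attacker multisets differ in a single entry, and so do the two supporter multisets. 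Since $\varphi_{\mathcal{R}}$ and $\varphi_{\mathcal{S}}$ are commutative, each differing entry can be placed in the same position. The monotony postulate \textbf{P2} (one coordinate varies, the others fixed) then applies directly.

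\textbf{Stage 1.} From $\acc(x)\le\acc(y)$ and non-decreasingness of $\varphi_{\mathcal{R}}$ we get $\pi_{\mathcal{R}}(a)\le\pi_{\mathcal{R}}(b)$. From $\acc(x')\ge\acc(y')$ and non-decreasingness of $\varphi_{\mathcal{S}}$ we get $\pi_{\mathcal{S}}(a)\ge\pi_{\mathcal{S}}(b)$.

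\textbf{Stage 2.} We compare through the intermediate value $\varphi_f(\pi_{\mathcal{R}}(b),\pi_{\mathcal{S}}(a),w(a))$. Since $\varphi_f$ is non-increasing in $x$,
\[\acc(a)=\varphi_f(\pi_{\mathcal{R}}(a),\pi_{\mathcal{S}}(a),w(a))\ge\varphi_f(\pi_{\mathcal{R}}(b),\pi_{\mathcal{S}}(a),w(a)).\]
Since $\varphi_f$ is non-decreasing in $y$ and $w(a)=w(b)$, this intermediate value is at least $\varphi_f(\pi_{\mathcal{R}}(b),\pi_{\mathcal{S}}(b),w(b))=\acc(b)$. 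Together these give $\acc(a)\ge\acc(b)$.

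\textbf{Strict version.} Suppose first $\acc(x)<\acc(y)$. Strict increase of $\varphi_{\mathcal{R}}$ gives $\pi_{\mathcal{R}}(a)<\pi_{\mathcal{R}}(b)$, and strict monotony of $\varphi_f$ in $x$ makes the first inequality of Stage 2 strict. The second inequality stays non-strict, so the chain yields $\acc(a)>\acc(b)$. The case $\acc(x')>\acc(y')$ is symmetric: strictness now comes from $\varphi_{\mathcal{S}}$ and from $\varphi_f$ in $y$.

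The only delicate point I expect is the side conditions $\acc(a)>0$ and $\acc(b)<1$ appearing in \textbf{A8}. Under the proposition's hypothesis, which is unconditional strict monotony of $\varphi_f$, these conditions are simply not needed, so the conclusion holds a fortiori. I would remark that a weaker hypothesis would suffice, namely strictness only away from the boundary values. A minor further point is reading ``increasing'' for a function of a multiset with varying arity: I would interpret it as \textbf{P2}, applied at a fixed arity $|C|+1$ (resp. $|C'|+1$), which is all the argument uses.
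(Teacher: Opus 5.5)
Your proof is correct and follows essentially the same route as the paper. Both arguments first use monotony of $\varphi_{\mathcal{R}}$ and $\varphi_{\mathcal{S}}$ (after aligning the differing entry by commutativity, which the paper leaves implicit) to get $\pi_{\mathcal{R}}(a)\le\pi_{\mathcal{R}}(b)$ and $\pi_{\mathcal{S}}(a)\ge\pi_{\mathcal{S}}(b)$, and then conclude from the monotonicity of $\varphi_f$ in its first two variables; your intermediate point $\varphi_f(\pi_{\mathcal{R}}(b),\pi_{\mathcal{S}}(a),w(a))$ and your explicit strict case (where the side conditions $\acc(a)>0$ and $\acc(b)<1$ are indeed superfluous) just spell out what the paper summarises as ``the proof is the same''.
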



For the monotony principle (\textbf{A7}), an additional condition is needed. Indeed, there is no postulate for aggregation functions that makes explicit the behaviour when an argument is added. Indeed, the composition postulate (\textbf{P9}) concerns the addition of the same set of arguments to two sets; the weakening and strengthening postulates (\textbf{P7-8}) give a lower or upper bound on the result. Thus, none of them gives any information about the evolution compared to the absence of an argument in the set of attackers. However, the following proposition can be established:

\begin{proposition}[Monotony]\label{prop:qbaf_monotonie}
Let $\accs$ be an aggregative semantics defined in Definition~\ref{def:sem_agg}. If $0$ is a neutral element for~$\varphi_{\mathcal{R}}$ and $\varphi_{\mathcal{S}}$, and if the three aggregation functions are monotonous (resp. strictly monotonous), then  $\accs$ verifies the monotony (resp. strict monotony) principle~(\textbf{A7}).
\end{proposition}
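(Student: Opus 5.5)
We work directly from Definition~\ref{def:sem_agg}. Take $a,b\in\mathcal{A}$ with $w(a)=w(b)$, $\textup{Att}_a\subseteq\textup{Att}_b$ and $\textup{Supp}_b\subseteq\textup{Supp}_a$. Since $\accs(a)=\varphi_f(\pi_{\mathcal{R}}(a),\pi_{\mathcal{S}}(a),w(a))$, and $\varphi_f$ is non-increasing in its first variable and non-decreasing in its second, it suffices to show $\pi_{\mathcal{R}}(a)\le\pi_{\mathcal{R}}(b)$ and $\pi_{\mathcal{S}}(a)\ge\pi_{\mathcal{S}}(b)$. Both are statements of the form: adding elements to the multiset does not decrease $\varphi_*$. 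We treat $\varphi_{\mathcal{R}}$; the supporter case is symmetric.

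\textbf{Key step (adding arguments).} Write $\textup{Att}_b=\textup{Att}_a\cup\{x_1,\dots,x_k\}$, with multiset of values $\{\accs(y)\mid y\in\textup{Att}_a\}\cup\{\accs(x_1),\dots,\accs(x_k)\}$. By monotony (\textbf{P2}), lowering each added value $\accs(x_j)$ to $0$ can only decrease $\varphi_{\mathcal{R}}$. Since $0$ is a neutral element (\textbf{P9}), each such $0$ can then be deleted, giving $\varphi_{\mathcal{R}}(\{\accs(y)\mid y\in\textup{Att}_a\})$. Hence $\pi_{\mathcal{R}}(b)\ge\pi_{\mathcal{R}}(a)$.

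The main obstacle is the boundary case $\textup{Att}_a=\emptyset$, where neutrality must bridge to the empty multiset. We use the convention of Section~\ref{subsec:phi_etoile}: $\varphi_*(\emptyset)$ is the minimum of the codomain, namely $0$, and $\varphi_*(\{0\})=0$ by \textbf{P1}. This is consistent with reading $0$ as neutral even for singletons, so the chain also applies when $\textup{Att}_a$ is empty. This gives non-strict monotony.

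\textbf{Strict case.} Suppose $\accs(a)>0$ and $\textup{sAtt}_a\subsetneq\textup{sAtt}_b$. Then some added attacker $x$ has $\accs(x)>0$. In the chain above, the step lowering $\accs(x)$ to $0$ is strict, by strict monotony of $\varphi_{\mathcal{R}}$. Attackers with value $0$ are removed by neutrality in both $a$ and $b$, so only the strictly positive ones matter. This yields $\pi_{\mathcal{R}}(b)>\pi_{\mathcal{R}}(a)$. Strict decrease of $\varphi_f$ in $x$ then gives $\accs(b)<\accs(a)$.

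The supporter case, under the hypothesis $\accs(b)<1$ and $\textup{sSupp}_b\subsetneq\textup{sSupp}_a$, is symmetric and uses strict increase of $\varphi_f$ in $y$. The conditions $\accs(a)>0$ and $\accs(b)<1$ are not used beyond matching the principle's statement, because strictness of $\varphi_f$ is assumed outright.
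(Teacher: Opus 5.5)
Your proof is correct and follows the paper's argument. You lower the added attackers' (resp.\ supporters') values to $0$ using non-decreasingness of $\varphi_{\mathcal{R}}$ (resp.\ $\varphi_{\mathcal{S}}$), delete them by neutrality of $0$ to get $\pi_{\mathcal{R}}(a)\le\pi_{\mathcal{R}}(b)$ and $\pi_{\mathcal{S}}(b)\le\pi_{\mathcal{S}}(a)$, and then conclude with the monotonicity of $\varphi_f$. You go slightly further than the paper by spelling out the strict case and the empty-multiset case; for the latter, neutrality applied with $n=1$ already gives $\varphi_*(\{0\})=\varphi_*(\emptyset)$, so your appeal to \textbf{P1} and the codomain-minimum convention is unnecessary.
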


\begin{proof}[Sketch of the proof]
	Since $0$ is a neutral element and $\varphi_{*}$ is increasing, adding an attacker (or supporter) cannot lower the total weight of the attackers (or the supporters). Furthermore, if the functions are strictly monotonous, this value can only increase: \[\forall z \in [0,1], \varphi_\mathcal{R}(\mathbf{x}_n) = \varphi_\mathcal{R}(\mathbf{x}_n, 0) \leq \varphi_\mathcal{R}(\mathbf{x}_n,z).\] Given this property, the proof is straightforward by writing the inequalities between the different functions.
\end{proof}

The resilience principle (\textbf{A9}) gives a special role to the extreme values of the intrinsic strength and acceptability of an argument. The maximum and minimum degrees of acceptability can only be reached by arguments that already have this value as their intrinsic strength. Thus, an argument with an intrinsic strength other than~$1$ can never be fully accepted, regardless of the quality of its supporters. This is an optional principle, the relevance of which depends on the nature of the arguments~\citep{amgoud2018weighted}. A reformulation of this principle for aggregative semantics is fairly straightforward: \[\forall z \not\in \{0,1\}, \forall x,y \in [0,1]^2, 0<\varphi_f(x,y,z)<1.\]

In order to get most of the principles, very few postulates, in terms of quantity and diversity, are required. Therefore, the other postulates are complementary.

\subsection{About the Interactions between the three Aggregation Operators}

The last three principles have no connection with the postulates of aggregation functions. In particular, unlike these local properties, the principles 
(\textbf{A10}), (\textbf{A11}) and (\textbf{A12}) are global properties that give information about the interactions between $\varphi_{\mathcal{R}}$, $\varphi_{\mathcal{S}}$ and~$\varphi_f$, such as the desired behaviour when an attacker and a supporter of equal acceptability are added simultaneously. In order to better guide the choice of the three aggregation functions, we propose to rewrite these three principles in the formalism of aggregative semantics.

\paragraph{Franklin's principle (\textbf{A10})} It indicates how the attackers and supporters of an argument should be aggregated. In particular, this principle governs situations in which an attacker and a supporter of equal acceptability are added and more precisely considers that the attacker dominates the supporter and therefore the acceptability decreases. In the strict case, the attacker and the supporter compensate each other, 
leading to an unchanged acceptability value.
In the proposed aggregative formalism, where asymmetry is allowed by independent choices for $\varphi_{\mathcal{R}}$ and $\varphi_{\mathcal{S}}$, the addition of an attacker and a supporter of equal strength does not necessarily have the same effect on the global weight of attackers and supporters. Similarly, if the global weight of the attacker is greater than the global weight of the supporters, it does not imply that the induced acceptability degree is lower than the intrinsic weight of an argument. Therefore, this principle clearly controls the overall behaviour of the semantics. Rewriting this principle within the aggregative formalism leads to
\[{\varphi_f(\varphi_{\mathcal{R}}(\mathbf{x_n},x),\varphi_{\mathcal{S}}(\mathbf{y_m},x),z) \leq \varphi_f(\varphi_{\mathcal{R}}(\mathbf{x_n}),\varphi_{\mathcal{S}}(\mathbf{y_m}),z)}\] and equality in the strict case. 

\paragraph{Weakening (\textbf{A11})} Informally, (\textbf{A11}) deals with the case where the attackers dominate the supporters, i.e., basically, for each supporter, there is an attacker with a greater degree of acceptability. In this situation, the degree of acceptability of the argument must be not greater than its intrinsic weight. Using the aggregative formalism with the notation $\pi_{\mathcal{R}}$ and $\pi_{\mathcal{S}}$, \textbf{A11} may be written, under the same conditions, as:

\[
\varphi_f(\pi_{\mathcal{R}}(a),\pi_{\mathcal{S}}(a),w(a)) < w(a).
\]

\noindent This is again a property that binds the three aggregation functions together. 

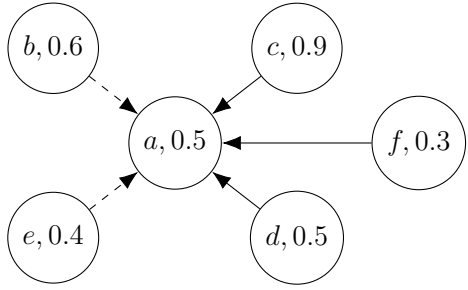
\begin{figure}[t!]
    
    \centering
        \begin{tikzpicture}[scale=0.9,transform shape, node distance={25mm}, main/.style = {draw, circle,minimum size=12mm}]
		\node[main] (1) {$a,0.5$};
		
		\node[main] (2)  at (-1.8,1.4) {$b,0.6$};
		\node[main] (3)  at (1.8,1.4) {$c,0.9$};
		\node[main] (5)  at (-1.8,-1.4) {$e,0.4$};
		\node[main] (4)  at (1.8,-1.4) {$d,0.5$};
		\node[main] (6)  at (3.6,0) {$f,0.3$};

		\draw [arrows = {-Latex[scale=1.5]}, dashed] (2) -- (1);
		\draw [arrows = {-Latex[scale=1.5]}] (3) -- (1);
        \draw [arrows = {-Latex[scale=1.5]}] (6) -- (1);
		\draw [arrows = {-Latex[scale=1.5]}] (4) -- (1);
		\draw [arrows = {-Latex[scale=1.5]}, dashed] (5) -- (1);
		
	\end{tikzpicture}
    \caption{Illustrative example of the entanglement of the weakening axiom (\textbf{A11}).}
    \label{fig:ax_weak}
\end{figure}%

\begin{example} To illustrate the entanglement between the three aggregation function that the weakening axiom imposes, let us consider the QBAF represented in Figure~\ref{fig:ax_weak}.

Let us take $\varphi_{\mathcal{R}} =  \varphi_{\mathcal{S}} = \min$ and $$\varphi_f(x,y,z) = \min\left(1,(1+y)\max(0,z-x)\right).$$ 

With the induced aggregative semantics, it is possible to define an injective function $F$ from $\textup{Supp}_a$ to $\textup{Att}_a$ such that $\forall x \in  \textup{Supp}_a$, $\acc(x) < \acc(F(x))$. Indeed, one can take $F$ such that $F(b) = c$ and $F(e) = d$. Hence, the attackers of~$a$ ``dominate'' the supporters of $b$.

However, $\pi_{\mathcal{R}}(a) = 0.3$ and $\pi_{\mathcal{S}}(a) = 0.4$, i.e. $\pi_{\mathcal{R}}(a) < \pi_{\mathcal{S}}(a)$. Thus, even though the attackers ``dominate'' the supporters, the order between their respective global strength is reversed because $\acc(f)$ is ``weak'' and $f\in \textup{Att}_a$.

Now, regarding the acceptability of argument $a$, $\acc(a) = \varphi_f(0.3,0.4,0.5) = 0.28 < w(a) = 0.5$. Thus, the weakening principle holds for this aggregative semantics even if $\pi_{\mathcal{R}}(a) < \pi_{\mathcal{S}}(a)$.

\end{example}

\paragraph{Strengthening (\textbf{A12})} Similarly, the principle of strengthening deals with the case where supporters dominate attackers, and requires that the acceptability of the argument is strictly greater than its intrinsic weight. Keeping the same hypotheses, it is expressed in the formalism of aggregative semantics as follows:
\[
\varphi_f(\pi_{\mathcal{R}}(a),\pi_{\mathcal{S}}(a),w(a)) > w(a).
\]

These three principles define how gradual semantics behave at a global level with respect to sets of attackers and supporters. In the case of aggregative semantics, these properties are complementary to the various postulates discussed above, which govern the behaviour of each function at a local level. Combining these two level properties to improve the granularity of the design of gradual semantics leads to a joint selection of these three functions rather than an independent one.

\subsection{Instantiation of some Gradual Semantics}
\label{sec:discu}

In this section we consider the three gradual semantics commonly used in the literature, DF-Quad, Ebs and QE, whose rewriting in the formalism of aggregative semantics is given in the proof of Proposition~\ref{prop:mod_to_agg}. We discuss the postulates verified by their implementation, summarised in Table~\ref{tab:ex_fonc}.

\paragraph{Postulates verified by $\varphi_{\mathcal{R}}/\varphi_{\mathcal{S}}$} A first remark concerns the domain of definition of the functions used: throughout the article we considered ${I=J=[0,1]}$. In the case of the sum function used for $\varphi_\mathcal{R}$ and $\varphi_\mathcal{S}$ by Ebs and QE, $I = \mathbb{R}$ which does not allow the boundary conditions \textbf{(P1)}. They are therefore replaced by limits in~$\pm \infty$. Regarding the other postulates for calculating the global weight of attackers and supporters, the only differences concern the existence of a null element (\textbf{P10}): the sum does not have a null element while the algebraic sum has one. These two postulate lead to a difference in the behaviour of those functions. Both of these functions satisfy the reinforcement postulate (\textbf{P9}), so the more added arguments, the better. However, unlike the algebraic sum $\bot_\text{prod}$, the sum function models the idea that the quantity of arguments matters more than their individual strength. For example, it is better to be attacked or supported by six arguments with an acceptability degree of $0.2$ than by one argument with maximum acceptability, i.e. $1$, whereas the algebraic sum induces the reverse behaviour.

In the light of the discussion in Section~\ref{sec:discu_agreg}, some important postulates are absent. First, all these functions are commutative \textbf{(P4)}, so none of them can be used in a meaningful way in contexts where the order of the arguments has to be taken into account. Secondly, they are disjunctive operators, i.e. they satisfy the reinforcement postulate (\textbf{P8}) and the weakening one (\textbf{P7}) does not hold. Therefore, if one wants to adopt a compromise or a pessimistic behaviour, neither the sum nor the algebraic sum can be used. Finally, we observe in the literature an essentially symmetrical management of attackers and supporters for the calculation of a global weight of the relations of an argument.
\renewcommand{\arraystretch}{1.3}
\begin{table}[t]
    \centering
    \caption{Verified ($\checkmark$), violated ($\times$) and inapplicable (-) postulates for the aggregation operators used in the three commonly used gradual semantics of the literature. }
    \vspace{2mm}
	\resizebox{0.95\linewidth}{!}{\Large\begin{tabular}{|c|cccccccccccc|}
			\hline
			\multicolumn{1}{|c|}{$\varphi_{\mathcal{R}}/\varphi_{\mathcal{S}}$} & \multicolumn{1}{c|}{(P1)} & \multicolumn{1}{c|}{(P2)} & \multicolumn{1}{c|}{(P3)} & \multicolumn{1}{c|}{(P4)} & \multicolumn{1}{c|}{(P5)} & \multicolumn{1}{c|}{(P6)} & \multicolumn{1}{c|}{(P7)} & \multicolumn{1}{c|}{(P8)} & \multicolumn{1}{c|}{(P9)} & \multicolumn{1}{c|}{(P10)} & \multicolumn{1}{c|}{(P11)} & (P12) \\ \hline
			\multicolumn{1}{|c|}{Sum} & \multicolumn{1}{c|}{$\times$} & \multicolumn{1}{c|}{$\checkmark$} & \multicolumn{1}{c|}{$\checkmark$} & \multicolumn{1}{c|}{$\checkmark$} & \multicolumn{1}{c|}{$\times$} & \multicolumn{1}{c|}{$\checkmark$} & \multicolumn{1}{c|}{$\times$} & \multicolumn{1}{c|}{$\checkmark$} & \multicolumn{1}{c|}{$\checkmark$} &\multicolumn{1}{c|}{$\times$} & \multicolumn{1}{c|}{$\checkmark$} & $\checkmark$ \\ \hline
			\multicolumn{1}{|c|}{$\bot_\textup{prod}$} &\multicolumn{1}{c|}{$\checkmark$} & \multicolumn{1}{c|}{$\checkmark$} & \multicolumn{1}{c|}{$\checkmark$} & \multicolumn{1}{c|}{$\checkmark$} & \multicolumn{1}{c|}{$\times$} & \multicolumn{1}{c|}{$\checkmark$} & \multicolumn{1}{c|}{$\times$} & \multicolumn{1}{c|}{$\checkmark$} & \multicolumn{1}{c|}{$\checkmark$} &\multicolumn{1}{c|}{$\checkmark$}& \multicolumn{1}{c|}{$\checkmark$} & $\times$ \\ \hline \hline
			\multicolumn{1}{|c|}{$\varphi_f^{\textup{DF-Quad}}$} & \multicolumn{1}{c|}{$\checkmark$} & \multicolumn{1}{c|}{$\checkmark$} & \multicolumn{1}{c|}{$\checkmark$} & \multicolumn{1}{c|}{-} & \multicolumn{1}{c|}{$\checkmark$} & \multicolumn{1}{c|}{-} & \multicolumn{1}{c|}{$\times$} & \multicolumn{1}{c|}{$\times$} & \multicolumn{1}{c|}{-} &\multicolumn{1}{c|}{$\times$}& \multicolumn{1}{c|}{-} & - \\ \hline
			\multicolumn{1}{|c|}{$\varphi_f^{\textup{Ebs}}$} & \multicolumn{1}{c|}{$\checkmark$} & \multicolumn{1}{c|}{$\checkmark$} & \multicolumn{1}{c|}{$\checkmark$} & \multicolumn{1}{c|}{-} & \multicolumn{1}{c|}{$\checkmark$} & \multicolumn{1}{c|}{-} & \multicolumn{1}{c|}{$\times$} & \multicolumn{1}{c|}{$\times$}& \multicolumn{1}{c|}{-} &\multicolumn{1}{c|}{$\checkmark$} & \multicolumn{1}{c|}{-} & - \\ \hline
			\multicolumn{1}{|c|}{$\varphi_f^{\textup{QE}}$} & \multicolumn{1}{c|}{$\checkmark$} & \multicolumn{1}{c|}{$\checkmark$} & \multicolumn{1}{c|}{$\checkmark$} & \multicolumn{1}{c|}{-} & \multicolumn{1}{c|}{$\checkmark$} & \multicolumn{1}{c|}{-} & \multicolumn{1}{c|}{$\times$} & \multicolumn{1}{c|}{$\times$}& \multicolumn{1}{c|}{-} &\multicolumn{1}{c|}{$\times$} & \multicolumn{1}{c|}{-} & - \\ \hline
	\end{tabular}}
	\label{tab:ex_fonc}
\end{table}

\paragraph{Postulates verified by $\varphi_{f}$} Except for the existence of a null element (\textbf{P10}), there is no diversity in the properties verified by the three considered functions. In particular, regarding the three postulates for which there is a choice (idempotence, weakening and strengthening), all are idempotent (and verify ${\varphi_f(x,x,z) = z}$), and satisfy neither the weakening nor the strengthening principles. The idempotence (\textbf{P5}) satisfaction is due to the fact that a preliminary step is taken to calculate the difference between the weights of attackers and supporters, and that $0$ is a neutral element of the corresponding influence function. 

However, this lack of diversity does not mean that those postulates are trivial as they are not obviously violated nor satisfied. As an example, let us consider the following functions. First, $\varphi_f(x,y,z)= \min(1-x,y,z)$. It is an adapted version of the minimum operator.  Similarly for the maximum, ${\varphi_f(x,y,z)= \max(1-x,y,z)}$. Finally, $\varphi_f$ could behave differently depending on the interval, taking the minimum or the maximum if the values are extreme and taking an arithmetic mean otherwise: \[   \varphi_f(x,y,z) = \left\{
\begin{array}{ll}
      \max(1-x,y,z) & \textup{if } x<0.2 \textup{ and } y,z>0.8 \\
      \min(1-x,y,z) & \textup{if } x>0.8 \textup{ and } y,z<0.2 \\
      \frac{1 - x + y + z}{3} & \textup{otherwise.} 
\end{array} 
\right. \]
The three functions verify \textbf{P1}, \textbf{P2} and \textbf{P3}. The minimum verifies \textbf{P7}, the maximum \textbf{P8} and the hybrid version verifies both.

\section{Illustrative Example}\label{sec:ex_comp}

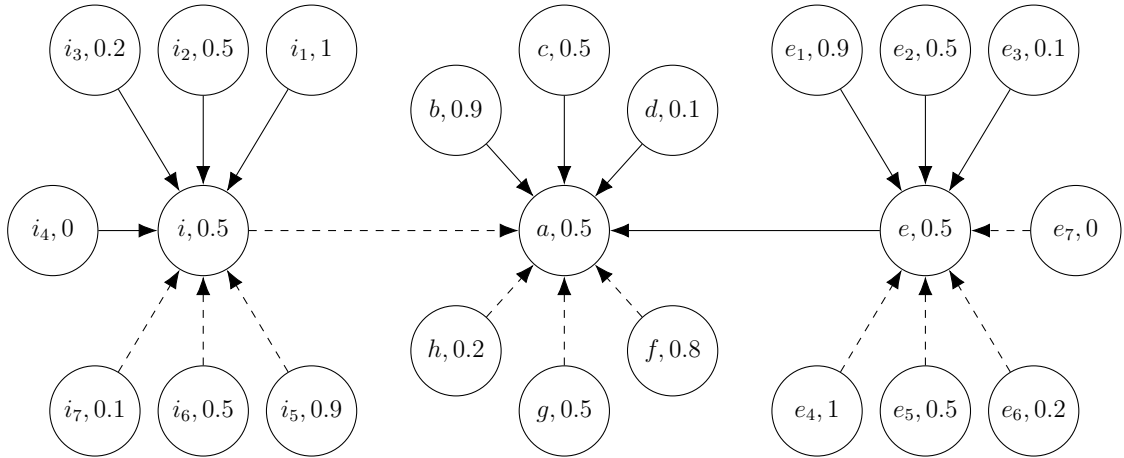
\begin{figure}[t!]
    
    \centering
        \begin{tikzpicture}[scale=0.8,transform shape, node distance={25mm}, main/.style = {draw, circle,minimum size=15mm}]
		\node[main] (1) {$a,0.5$};
		
		\node[main] (8)  at (0,3) {$c,0.5$};
		\node[main] (3)  at (1.8,2) {$d,0.1$};
		\node[main] (6)  at (6,0) {$e,0.5$};
		\node[main] (4)  at (1.8,-2) {$f,0.8$};
        
		\node[main] (9)  at (0,-3) {$g,0.5$};
		\node[main] (5)  at (-1.8,-2) {$h,0.2$};
		\node[main] (7)  at (-6,0) {$i,0.5$};
		\node[main] (2)  at (-1.8,2) {$b,0.9$};

        \node[main] (611)  at (4.2,3) {$e_1,0.9$};
        \node[main] (612)  at (6,3) {$e_2,0.5$};
        \node[main] (613)  at (7.8,3) {$e_3,0.1$};
        \node[main] (621)  at (4.2,-3) {$e_4,1$};
        \node[main] (622)  at (6,-3) {$e_5,0.5$};
        \node[main] (623)  at (7.8,-3) {$e_6,0.2$};
        \node[main] (624)  at (8.5,0) {$e_7,0$};

        \node[main] (711)  at (-4.2,3) {$i_1,1$};
        \node[main] (712)  at (-6,3) {$i_2,0.5$};
        \node[main] (713)  at (-7.8,3) {$i_3,0.2$};
        \node[main] (714)  at (-8.5,0) {$i_4,0$};
        \node[main] (721)  at (-4.2,-3) {$i_5,0.9$};
        \node[main] (722)  at (-6,-3) {$i_6,0.5$};
        \node[main] (723)  at (-7.8,-3) {$i_7,0.1$};

        
		\draw [arrows = {-Latex[scale=1.5]}] (2) -- (1);
		\draw [arrows = {-Latex[scale=1.5]}] (3) -- (1);
        \draw [arrows = {-Latex[scale=1.5]}] (6) -- (1);
		\draw [arrows = {-Latex[scale=1.5]}, dashed] (4) -- (1);
		\draw [arrows = {-Latex[scale=1.5]}, dashed] (5) -- (1);
		\draw [arrows = {-Latex[scale=1.5]}, dashed] (7) -- (1);
		\draw [arrows = {-Latex[scale=1.5]}] (8) -- (1);
		\draw [arrows = {-Latex[scale=1.5]}, dashed] (9) -- (1);

        \draw [arrows = {-Latex[scale=1.5]}] (611) -- (6);
        \draw [arrows = {-Latex[scale=1.5]}] (612) -- (6);
        \draw [arrows = {-Latex[scale=1.5]}] (613) -- (6);
        
        \draw [arrows = {-Latex[scale=1.5]}, dashed] (621) -- (6);
        \draw [arrows = {-Latex[scale=1.5]}, dashed] (622) -- (6);
        \draw [arrows = {-Latex[scale=1.5]}, dashed] (623) -- (6);
        \draw [arrows = {-Latex[scale=1.5]}, dashed] (624) -- (6);

        \draw [arrows = {-Latex[scale=1.5]}] (711) -- (7);
        \draw [arrows = {-Latex[scale=1.5]}] (712) -- (7);
        \draw [arrows = {-Latex[scale=1.5]}] (713) -- (7);
        \draw [arrows = {-Latex[scale=1.5]}] (714) -- (7);
        \draw [arrows = {-Latex[scale=1.5]}, dashed] (721) -- (7);
        \draw [arrows = {-Latex[scale=1.5]}, dashed] (722) -- (7);
        \draw [arrows = {-Latex[scale=1.5]}, dashed] (723) -- (7);
		
	\end{tikzpicture}
    \caption{Illustrative example analysed in Section~\ref{sec:ex_comp}.}
    \label{fig:ex_final}
\end{figure}%

\begin{figure}
    \centering
    \includegraphics[width=0.9\linewidth]{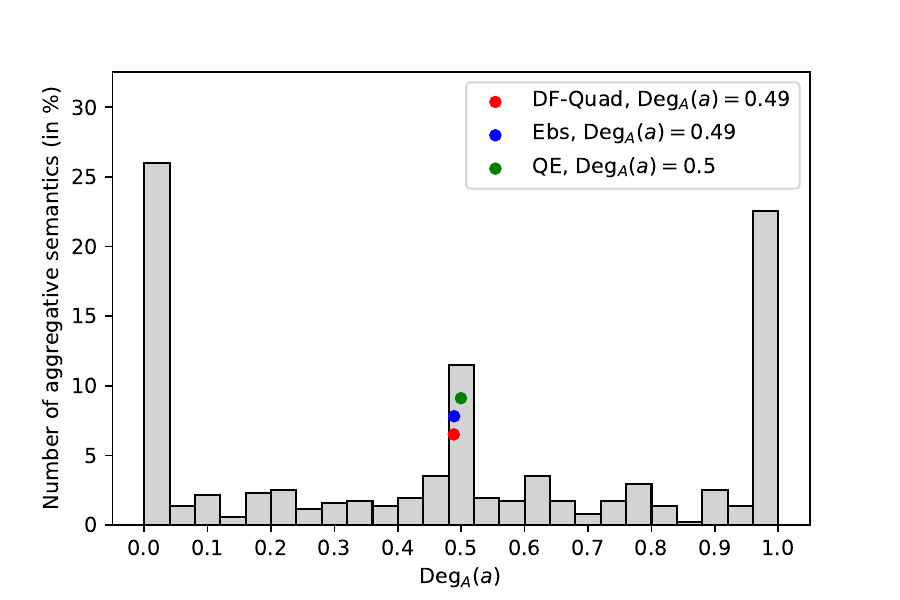}
    \vspace{-2mm}
    \caption{Distribution of acceptability degrees of argument $a$ for the QBAF in Figure~\ref{fig:ex_final}.}
    \label{fig:histo_acc_a}
\end{figure}

In this section we illustrate the diversity of behaviours that can be obtained using aggregative semantics. For that, we use the example represented in Figure~\ref{fig:ex_final}. This QBAF revolves around three central arguments, $i$, $a$ and $e$, that are used to illustrate some specific properties of the aggregative semantics. Argument $a$ is the subject, and its acceptability degree reflects the effect of favouring the supporters or the attackers. Argument $e$ and $i$ have attackers or supporters whose acceptability equals $0$ or $1$, to illustrate the impact of a null element. Finally, there is two injections $F_1: \textup{Att}_i \rightarrow \textup{Supp}_i$ and $F_2: \textup{Supp}_e \rightarrow \textup{Att}_e$, such that $\acc(i_k)\geq \acc(F_1(i_k))$ and $\acc(e_k)\geq \acc(F_2(e_k))$. Therefore, we can observe the behaviour of the aggregative semantics regarding the weakening (\textbf{A11}) and the strengthening (\textbf{A12}) principles.

We use this QBAF to compute the acceptability degree of every argument in the graph for the $512$ aggregative semantics that can be built using the aggregation functions defined in Table~\ref{tab:ex_fonc_aggreg}, except $\top_D$ and $\bot_D$. Indeed, these drastic choices result in an acceptability value of either $0$ or $1$ for a majority of arguments. We also consider the three common gradual semantics: DF-Quad, Ebs, and QE, described in Section~\ref{sec:sem_grad}, leading to $515$ functions in total. The distribution of the obtained values for $\acc(a)$ is shown in the histogram represented in Figure~\ref{fig:histo_acc_a}, with bin size of $0.04$, and is obtained using the code available here~\footnote{\url{https://gitlab.lip6.fr/munro/aggregative-semantics-for-ac-qbafs}}. The three coloured points represent the values returned by the three gradual semantics from the literature: DF-Quad, Ebs and QE. The main conclusions are the following: \begin{itemize}
    \item Each bin can be reached by an aggregative semantics.
    \item The weakening and strengthening postulates (\textbf{P7-8}) for $\varphi_{\mathcal{R}}$ and $\varphi_\mathcal{S}$ determine which side should be favoured.
    \item A null element has a big impact, even inside a same family of functions (e.g. compromise).
\end{itemize}

The first observation concerns the range of values that can be achieved using only the aggregation functions presented in Table~\ref{tab:ex_fonc_aggreg}. In contrast to DF-Quad, Ebs and QE, that produce similar results, there are aggregative semantics for each part of the codomain. Furthermore, the behaviour of these functions can be chosen according to the discussion in Section~\ref{sec:discu_agreg}. It leads to an aggregative semantics that is transparent and can be understood by examining the different postulates they verify. To illustrate it, Table~\ref{tab:res_agg} presents some examples of aggregative semantics and the results they return. 

\textbf{The first three rows (S$_{1-3}$)} show the results when using three gradual semantics from the literature. 

Before analysing the others rows, we remind that, because of the monotonicity postulate (\textbf{P1}) for $\varphi_f$, the functions in Table~\ref{tab:ex_fonc_aggreg} have to be adapted for $\varphi_f(x,y,z)$. This has been illustrated in Example~\ref{ex:agg_sem}. In this section, the results are obtained by computing $\varphi_f(x,y,z) = \varphi(1-x,y,z)$, where $\varphi$ is the chosen aggregation function in Table~\ref{tab:ex_fonc_aggreg}. Consequently, when an argument $a$ is strongly attacked, then a low value is aggregated with the global strength of the supporters of $a$ and $w(a)$.

\begin{table}[t]
\resizebox{\linewidth}{!}{\begin{tabular}{|l|c|c|c|c|c||c|c|c|}
\hline
&$\varphi_{\mathcal{R}}$ & $\varphi_{\mathcal{S}}$ & $\varphi_f$ & $\acc(i)$ & $\acc(e)$ & $\pi_\mathcal{R}(a)$ & $\pi_\mathcal{S}(a)$ & $\acc(a)$ \\ \hline
\small (S$_{\rownumber}$) &$\bot_{\textup{prod}}$ & $\bot_{\textup{prod}}$ & $\varphi_f^{\textup{DF-Quad}}$& $0.48$  &  $0.52$ & $0.98$ & $0.96$  & $0.49$  \\ \hline
\small (S$_{\rownumber}$) &Sum & Sum &    $\varphi_f^{\textup{Ebs}}$    & $0.48$  & $0.53$  & $2.03$ & $1.97$ & $0.49$     \\ \hline
\small (S$_{\rownumber}$) &Sum & Sum &   $\varphi_f^{\textup{QE}}$    &  $0.48$  & $0.52$ & $2.02$ & $1.98$ & $0.50$  \\ \hline \hline
\small (S$_{\rownumber}$) &$\textup{avg}_{\textup{am}}$ & $\textup{avg}_{\textup{am}}$ & $\textup{avg}_{\textup{am}}$ & $0.53$  & $0.48$  & $0.49$ & $0.50$ & $0.50$     \\ \hline
\small (S$_{\rownumber}$) &$\textup{avg}_{\textup{am}}$ & $\textup{avg}_{\textup{am}}$ & $\top_{\textup{prod}}$ & $0.14$  & $0.11$  & $0.40$ & $0.41$ & $0.12$ \\ \hline
\small (S$_{\rownumber}$) &$\textup{avg}_{\textup{am}}$ & $\textup{avg}_{\textup{am}}$ & $\bot_{\textup{prod}}$ & $0.89$  & $0.86$  & $0.59$ & $0.60$ & $0.88$ \\ \hline \hline
\small (S$_{\rownumber}$) &$\textup{avg}_{\textup{am}}$ & $\max$ & $\top_{\textup{prod}}$ & $0.26$  & $0.25$  & $0.48$ & $0.80$ & $0.23$ \\ \hline
\small (S$_{\rownumber}$) &$\bot_{\textup{\L uka}}$ & $\textup{avg}_{\textup{am}}$ & $\top_{\textup{prod}}$ & $0$  & $0$  & $1$ & $0.38$ & $0$ \\ \hline \hline
\small (S$_{\rownumber}$) &$\textup{avg}_{\textup{am}}$ & $\max$ & $\bot_{\textup{prod}}$ & $0.98$  & $1$  & $0.63$ & $0.98$ & $0.99$ \\ \hline
\small (S$_{\rownumber}$) &$\bot_{\textup{\L uka}}$ & $\textup{avg}_{\textup{am}}$ & $\bot_{\textup{prod}}$ & $0.75$  & $0.71$  & $1$ & $0.57$ & $0.78$ \\ \hline \hline
\small (S$_{\rownumber}$) &$\min$ & $\bot_{\textup{prod}}$ & $\textup{avg}_{\textup{am}}$ & $0.82$  & $0.80$  & $0.10$ & $0.99$ & $0.80$ \\ \hline
\small (S$_{\rownumber}$) &$\bot_{\textup{prod}}$ & $\min$ & $\textup{avg}_{\textup{am}}$ & $0.20$  & $0.18$  &$0.96$ & $0.20$ & $0.25$ \\ \hline \hline
\small (S$_{\rownumber}$) &$\top_{\textup{\L uka}}$ & $\top_{\textup{prod}}$ & $\textup{avg}_{\textup{am}}$ & $0.52$  & $0.51$  & $0$ & $0.04$ & $0.51$ \\ \hline
\small (S$_{\rownumber}$) &$\top_{\textup{\L uka}}$ & $\top_{\textup{prod}}$ & $\textup{avg}_{\textup{gm}}$ & $0.28$  & $0$  & $0$ & $0.02$ & $0.22$ \\ \hline \hline
\small (S$_{\rownumber}$) & $\max$ & $\bot_{\textup{prod}}$ & $\textup{avg}_{\textup{am}}$ & $0.49$  & $0.53$  & $0.9$ & $0.96$ & $0.52$ \\ \hline
\end{tabular}}
\caption{The first three rows are respectively DF-Quad, Ebs and QE.}
\label{tab:res_agg}
\end{table}

\textbf{The following three semantics (S$_{4-6}$)} use an arithmetic mean operator ($\textup{avg}_{\textup{am}}$) for both the attackers and the supporters. Taking the same aggregation function allows to understand the effect $\varphi_f$ has on the final result. In the first case (S$_4$), $\textup{avg}_{\textup{am}}$, which is a compromise operator (\textbf{P7} and \textbf{P8} do not hold), has been chosen.  As \textbf{P7} and \textbf{P8} do not hold, the aggregated value is between the lowest and the highest value to aggregate. Therefore, neither the attackers nor the supporters are favoured. Thus, as $a$ is attacked and supported with almost the same global strength, i.e. ${\pi_{\mathcal{R}}(a) \approx \pi_{\mathcal{S}}(a)}$, $\acc(a) \approx 0.5$. 

The second function (S$_5$) uses the product~($\top_{\text{prod}}$), a pessimistic operator. Indeed, this function verifies the weakening postulate (\textbf{P7}) meaning that the result is inferior to the lowest aggregated value. For argument $a$, $\pi_{\mathcal{S}}(a) = 0.38$ so $\acc(a) \leq 0.38$. For an argument to be highly acceptable using this function, it must be weakly attacked, strongly supported, and intrinsically strong.

The third function (S$_6$) uses the dual of the previous one, the algebraic sum~($\bot_{\text{prod}}$). This is an optimistic operator: for an argument to be highly acceptable, it is enough for it to be either weakly attacked, strongly supported, or intrinsically strong. In the example, argument $a$ is strongly supported, i.e. $\pi_\mathcal{S}(a) = 0.63$. It leads to $\acc(a) \geq 0.63$.

\textbf{The following four aggregation functions (S$_{7-10}$)} are a modification of the fifth and sixth one. The final aggregation function remains either a pessimistic operator (S$_{7-8}$) or an optimistic operator (S$_{9-10}$). The difference lies in how the attackers and supporters are aggregated. In (S$_7$) and (S$_9$), we want to favour the supporters: instead of taking a compromise for the global strength of the supporter, we choose the $\max$ function that is optimistic. Therefore, unlike the attackers for whom it is still a compromise, weaker supporters are required to have a high global strength. This change should increase the acceptability of the arguments, as the supporters now have more weight, or at worst it should not make any difference. This leads to $\acc(a)$ going from $0.12$ for (S$_5$) and $0.88$ for (S$_6$) to $0.23$ for (S$_7$) and~$0.99$ for (S$_9$). By contrast, (S$_8$) and (S$_{10}$) favour the attackers because the bounded sum is an optimistic function. As a consequence, $\acc(a)$ becomes $0$ for (S$_8$), respectively $0.78$ for (S$_{10}$).

It is also possible to favour one side by disadvantaging the other. \textbf{The next two semantics (S$_{11-12}$)} combine both. The final aggregation function is $\textup{avg}_{\textup{am}}$, i.e. a compromise. In (S$_{11}$), $\varphi_\mathcal{R}$ is pessimistic and $\varphi_\mathcal{S}$ is optimistic. Hence, we get that $\acc(a) = 0.8$. This can, for example, model a stubborn person who mostly listens to arguments in favour of their own opinion. By contrast, the functions in~(S$_{12}$) can model someone who is more versatile and favours attacking arguments over supporting ones, leading to $\acc(a) = 0.25$.

\textbf{The next two semantics (S$_{13-14}$)} illustrate the impact of other postulates, in particular the existence of a null element. In these two semantics, both the attackers and the supporters are aggregated using a pessimistic function. The final aggregation is then performed using a compromise operator. The only difference lies in the chosen mean function: unlike the arithmetic mean ($\textup{avg}_{\textup{am}}$), the geometric mean ($\textup{avg}_{\textup{gm}}$) has a null element equal to $0$. This can be seen with argument~$e$, for which, in both cases, $\pi_{\mathcal{S}}(e) = 0$. This results in $\acc^{S_{13}}(e) = 0.51$ while $\acc^{S_{14}}(e) = 0$. Therefore, according to the semantics (S$_{14}$), the acceptability of an argument that is fully attacked ($\pi_{\mathcal{R}} = 1$), uselessly supported ($\pi_{\mathcal{S}} = 0$), or intrinsically null ($w = 0$), is unacceptable, regardless of the other values. 

\textbf{The last aggregative semantics (S$_{15}$)} is the only one, except for DF-Quad,Ebs and QE, that does not obviously violate either the weakening or the strengthening principles~{(\textbf{A11-12})}. Indeed, for the other semantics, at least one of those principle does not hold. This can be visualised using arguments $e$ for the strengthening and $i$ for the weakening. For instance, using function (S$_{4}$), there is an injective function $F$ from $\textup{Supp}_e$ to $\textup{Att}_e$ such that $\acc(F(x))\leq \acc(x)$. However, $\acc(e)<w(e)$.  In (S$_{15}$), $\varphi_\mathcal{R}$ and $\varphi_\mathcal{S}$ are both optimistic but not symmetric. Indeed, the $\max$ is the lowest t-conorm meaning that all the other t-conorms are more optimistic than the $\max$. Therefore, the supporter are a bit more favoured. Using $\textup{avg}_{\textup{am}}$ for $\varphi_f$, it leads to $\acc(a) = 0.52 > w(a)$.

\section{Conclusion}

In this paper we have proposed a new gradual semantics family, which we call aggregative semantics. It is based on decomposing the computation of the acceptability degree of an argument into three distinct aggregation stages: computing the global weight of the attackers, the global weight of the supporters, and aggregating these two weights with the intrinsic strength of the argument. This decomposition has several advantages. First, constraining each step of the computation independently leads to a more parametrisable function. Second, by leveraging the large literature on aggregation functions, it allows for the construction of new gradual semantics using the variety of existing aggregation functions. We proposed an axiomatic approach by discussing the desirable postulates and principles for the three functions, depending on the context, and taking into account the specificities of the argumentative setting. This discussion is also a guide to help users to select appropriate aggregation functions depending on the context. Finally, each of these three steps is easily understandable, based on the behaviour of each chosen aggregation function.  This makes the whole process more transparent and interpretable. 

A first perspective aims at taking advantage of this understandable process to generate explanations of the acceptability degree of arguments. Indeed, this work on aggregative semantics paves the way for different levels of explanation: a first one uses the global strengths of the attackers and of the supporters of an argument; a second one goes back to the degree of acceptability of each of the attackers and supporters; and a final one goes up the chain of attackers and supporters of the argument to be explained until reaching the intrinsic strength of the arguments that are not attacked nor supported. Another perspective is to further extend this work by adding weights on the binary relations as well. This can be done keeping the same formalism, modifying the domain of the different aggregation functions as well as the discussion. A third perspective consists in studying the convergence of an aggregative semantics for cyclic QBAFs, by, for example, identifying necessary conditions on the three aggregation functions. 

\bibliographystyle{elsarticle-harv} 
\bibliography{mybibfile}
\newpage

\section{Appendix}

In all this section we assume $A$ to be an acyclic QBAF and $\accs$ an aggregative semantics.

\begin{proposition*}[Stability]
	Let $\accs$ be an aggregative semantics defined in Definition~\ref{def:sem_agg}. If $\varphi_\mathcal{R},\varphi_\mathcal{S}$ and $\varphi_f$ verify the boundary conditions~(\textbf{P1}) discussed in Section~\ref{sec:discu_agreg}, then $\accs$ verifies the stability principle (\textbf{A5}).
\end{proposition*}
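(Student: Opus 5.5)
The argument is a direct unfolding of Definition~\ref{def:sem_agg} at an argument with no parents, using the boundary conditions of Section~\ref{sec:discu_agreg}. Let $a \in \mathcal{A}$ with $\textup{Att}_a = \textup{Supp}_a = \emptyset$. Since $A$ is acyclic, $\accs(a)$ is well defined and is given by
\[
\accs(a) = \varphi_f\bigl(\varphi_{\mathcal{R}}(\emptyset), \varphi_{\mathcal{S}}(\emptyset), w(a)\bigr).
\]
The recursion plays no role here, because the two multisets to be aggregated are empty. The value of $\accs(a)$ is therefore fixed entirely by the extended values $\varphi_*(\emptyset)$ and by $\varphi_f$.

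The first step is to identify $\varphi_{\mathcal{R}}(\emptyset)$ and $\varphi_{\mathcal{S}}(\emptyset)$. The boundary discussion in Section~\ref{subsec:phi_etoile} extends $\varphi_*$ to the empty multiset by the minimum of its codomain. With the standing convention $I_{\mathcal{R}} = I_{\mathcal{S}} = [0,1]$, this minimum is $0$, which is also the value $\varphi_*(\vec{0})=0$ required by \textbf{P1}. Hence $\varphi_{\mathcal{R}}(\emptyset) = \varphi_{\mathcal{S}}(\emptyset) = 0$.

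The second step uses the boundary condition imposed on $\varphi_f$ in Section~\ref{subsec:aggreg_final}, namely $\varphi_f(0,0,z) = z$ for all $z \in I_w$. Substituting gives $\accs(a) = \varphi_f(0,0,w(a)) = w(a)$, which is exactly \textbf{A5}.

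The only delicate point is the interpretation of the hypothesis ``boundary conditions'' for the empty set. If the codomain of $\varphi_*$ has a different minimum $m_*$, as with sum-based operators on $\mathbb{R}^+$ where $m_*=0$ still holds, the condition on $\varphi_f$ must be read as $\varphi_f(m_{\mathcal{R}}, m_{\mathcal{S}}, z) = z$. I would state this convention explicitly and then run the same two-line computation. No induction over the graph is needed, unlike in Proposition~\ref{th:no_hyp}, since the principle only concerns arguments without parents.
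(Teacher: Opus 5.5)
Your proposal is correct and takes essentially the same route as the paper's proof. It sets $\varphi_{\mathcal{R}}(\emptyset)=\varphi_{\mathcal{S}}(\emptyset)=0$ via the empty-multiset convention of Section~\ref{subsec:phi_etoile}, then applies the boundary condition $\varphi_f(0,0,z)=z$ to an argument with no attackers and no supporters, and your version is slightly more explicit about where the value $0$ comes from.
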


\begin{proof}
	Following the discussion regarding the boundary conditions of $\varphi_\mathcal{R}$ and $\varphi_\mathcal{S}$ in Section~\ref{subsec:phi_etoile}, let us suppose that $\varphi_\mathcal{R}(\emptyset) = 0$ and $\varphi_\mathcal{S}(\emptyset) = 0$.

    Then, $\varphi_f(\varphi_\mathcal{R}(\emptyset),\varphi_\mathcal{S}(\emptyset),z) = \varphi_f(0,0,z) = z$ according to the boundary conditions of $\varphi_f$ (as discussed in Section~\ref{subsec:aggreg_final}).
\end{proof}

\begin{proposition*}[Neutrality]
Let $\accs$ be an aggregative semantics defined in Definition~\ref{def:sem_agg}. If $0$ is a neutral element of $\varphi_{\mathcal{R}}$ and $\varphi_{\mathcal{S}}$, then $\accs$ verifies the neutrality principle (\textbf{A6}).
\end{proposition*}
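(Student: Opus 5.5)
The plan is to unfold the definition of the aggregative semantics at the two arguments $a$ and $b$ in the hypothesis of \textbf{A6} and compare the three inputs of $\varphi_f$ one by one. Fix $a,b,x \in \mathcal{A}$ with $w(a)=w(b)$, $\accs(x)=0$, $\textup{Att}_a \subseteq \textup{Att}_b$, $\textup{Supp}_a \subseteq \textup{Supp}_b$ and $\textup{Att}_b \cup \textup{Supp}_b = \textup{Att}_a \cup \textup{Supp}_a \cup \{x\}$. Since the third arguments of $\varphi_f$ coincide, it suffices to show $\pi_{\mathcal{R}}(a)=\pi_{\mathcal{R}}(b)$ and $\pi_{\mathcal{S}}(a)=\pi_{\mathcal{S}}(b)$. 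Then
\[
\accs(a)=\varphi_f(\pi_{\mathcal{R}}(a),\pi_{\mathcal{S}}(a),w(a))=\varphi_f(\pi_{\mathcal{R}}(b),\pi_{\mathcal{S}}(b),w(b))=\accs(b).
\]

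First I will settle the combinatorics of the sets, splitting into two cases. If $x \in \textup{Att}_a \cup \textup{Supp}_a$, the two unions coincide. Together with the two inclusions, this forces $\textup{Att}_a=\textup{Att}_b$ and $\textup{Supp}_a=\textup{Supp}_b$: an element of $\textup{Att}_b\setminus\textup{Att}_a$ would have to lie in $\textup{Supp}_a\subseteq\textup{Supp}_b$, which contradicts the consistency assumption $\mathcal{R}\cap\mathcal{S}=\emptyset$. The conclusion then follows directly, as in the proof of \textbf{A4} in Proposition~\ref{th:no_hyp}. If $x\notin \textup{Att}_a\cup\textup{Supp}_a$, then, again using $\mathcal{R}\cap\mathcal{S}=\emptyset$ (so $x$ cannot be both attacker and supporter of $b$), exactly one of the following holds:
\begin{itemize}
\item $\textup{Att}_b=\textup{Att}_a\cup\{x\}$ and $\textup{Supp}_b=\textup{Supp}_a$;
\item $\textup{Supp}_b=\textup{Supp}_a\cup\{x\}$ and $\textup{Att}_b=\textup{Att}_a$.
\end{itemize}

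In the first subcase the multiset of attacker degrees of $b$ is that of $a$ with one extra $0$. Since $0$ is a neutral element of $\varphi_{\mathcal{R}}$ (\textbf{P9}), and $\varphi_{\mathcal{R}}$ is commutative so the position of that value is irrelevant, we get $\varphi_{\mathcal{R}}(\{\accs(y)\mid y\in\textup{Att}_a\}\cup\{0\})=\varphi_{\mathcal{R}}(\{\accs(y)\mid y\in\textup{Att}_a\})$, i.e.\ $\pi_{\mathcal{R}}(b)=\pi_{\mathcal{R}}(a)$. The supporter weights are trivially equal. The second subcase is symmetric, using the neutral element of $\varphi_{\mathcal{S}}$.

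The only delicate point, and the step I expect to need care, is when $\textup{Att}_a=\emptyset$ (or $\textup{Supp}_a=\emptyset$). There the neutral element axiom must be read for the extended function, namely $\varphi_{\mathcal{R}}(\{0\})=\varphi_{\mathcal{R}}(\emptyset)$. I will interpret \textbf{P9} for the extended aggregation functions of Definition~\ref{def:sem_agg} as including this one-element case, and note that it is consistent with the choice $\varphi_{*}(\emptyset)=0$ made in the discussion of \textbf{P1}. With that convention, the argument goes through uniformly.
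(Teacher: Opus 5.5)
Your proposal is correct and follows essentially the same route as the paper. Both arguments use the consistency assumption $\mathcal{R}\cap\mathcal{S}=\emptyset$ to reduce to two cases: either $\textup{Att}_b=\textup{Att}_a\cup\{x\}$ with $\textup{Supp}_b=\textup{Supp}_a$, or the symmetric one. Both then use the neutral element $0$ to get $\pi_{\mathcal{R}}(a)=\pi_{\mathcal{R}}(b)$ and $\pi_{\mathcal{S}}(a)=\pi_{\mathcal{S}}(b)$.

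You also treat two points that the paper's proof passes over silently. The first is the case $x\in\textup{Att}_a\cup\textup{Supp}_a$, where appending $\accs(x)$ to the multiset would double count. The second is the empty-multiset case, which needs $\varphi_{\mathcal{R}}(\{0\})=\varphi_{\mathcal{R}}(\emptyset)$.
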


\begin{proof}
	Let $A \in \textup{ac-WAG}$, be an acyclic QBAF and $\accs$ an aggregative semantics.
	
	Let $a,b,x \in \mathcal{A}$ be arguments such that: \begin{itemize}
	    \item $w(a) = w(b)$
        \item $\accs(x) = 0$
        \item $\text{Att}_{a} \subseteq \text{Att}_b$
        \item $\text{Supp}_{a} \subseteq \text{Supp}_b$
        \item $\text{Att}_b \cup \text{Supp}_b = \text{Att}_a \cup \text{Supp}_a \cup \{x\}$.
	\end{itemize} 
	
	By definition: \begin{align*}
	    \accs(b) = \varphi_f(&\varphi_{\mathcal{R}}(\{\accs(c) \mid c \in \text{Att}_b\}),\varphi_{\mathcal{S}}(\{\accs(d) \mid d \in \text{Supp}_b\}),w(b)).
	\end{align*}
	
	First, $w(b) = w(a)$.
	
	Then, $\text{Att}_b \cup \text{Supp}_b = \text{Att}_a \cup \text{Supp}_a \cup \{x\}$ is equivalent to: 
    
    $(1)$ Either $\text{Att}_b = \text{Att}_a \cup \{x\}$ and $\text{Supp}_b = \text{Supp}_a$
    
    $(2)$ Or $\text{Supp}_b = \text{Supp}_a \cup \{x\}$ and $\text{Att}_b = \text{Att}_a$ \\ due to the consistency hypothesis $\textup{Att}_b \cap \textup{Supp}_b = \emptyset$.
    
    \begin{itemize}
		\item If $\text{Att}_b = \text{Att}_a \cup \{x\}$ and $\text{Supp}_b = \text{Supp}_a$ then: \[\varphi_{\mathcal{S}}(\{\accs(d) \mid d \in \text{Supp}_b\}) = \varphi_{\mathcal{S}}(\{\accs(d) \mid d \in \text{Supp}_a\}).\]
		
		Moreover, by definition:\begin{align*}
		    \varphi_{\mathcal{R}}(\{\accs(c) \mid c \in \text{Att}_b\}) = \varphi_{\mathcal{R}}(\{\accs(c) \mid c \in \text{Att}_a\}, \accs(x)).
		\end{align*}
		
		As $\accs(x) = 0$ and $0$ is  a neutral element of $\varphi_{\mathcal{R}}$, then: \begin{align*}
		    \varphi_{\mathcal{R}}(\{\accs(c) \mid c \in \text{Att}_a\}, \accs(x)) &=\varphi_{\mathcal{R}}(\{\accs(c) \mid c \in \text{Att}_a\}, 0) \\
            &=\varphi_{\mathcal{R}}(\{\accs(c) \mid c \in \text{Att}_a\}).
		\end{align*}

        \item Similarly, if $\text{Supp}_b = \text{Supp}_a \cup \{x\}$, $\text{Att}_b = \text{Att}_a$ and $0$ is a neutral element for $\varphi_\mathcal{S}$ then: \begin{align*}
		    &\varphi_{\mathcal{S}}(\{\accs(d) \mid d \in \text{Supp}_b\}) = \varphi_{\mathcal{S}}(\{\accs(d) \mid d \in \text{Supp}_a\}) \\
            \textup{and }&\varphi_{\mathcal{R}}(\{\accs(c) \mid c \in \text{Att}_b\}) =\varphi_{\mathcal{R}}(\{\accs(c) \mid c \in \text{Att}_a\}).
		\end{align*}   
	\end{itemize}

In both cases, if $0$ is a neutral element of $\varphi_{\mathcal{R}}$ and $\varphi_{\mathcal{S}}$, then $\accs(a) = \accs(b)$.
\end{proof}

\begin{proposition*}[Reinforcement]
	If $\varphi_{\mathcal{R}}$ and $\varphi_{\mathcal{S}}$ are increasing (resp. strictly increasing) and $\varphi_f$ is decreasing in $x$ and increasing in $y$ (resp. strictly monotonous in $x$ and $y$), $\accs$ satisfies the reinforcement axiom (resp. strict reinforcement)~(\textbf{A8}).
\end{proposition*}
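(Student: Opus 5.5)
The argument is a direct chain of inequalities through the three aggregation steps. Take $a,b \in \mathcal{A}$, $C,C'$, $x,x',y,y'$ as in the hypotheses of \textbf{A8}, so that $w(a)=w(b)$, $\textup{Att}_a = C\cup\{x\}$, $\textup{Att}_b = C\cup\{y\}$, $\textup{Supp}_a = C'\cup\{x'\}$, $\textup{Supp}_b = C'\cup\{y'\}$, with $\accs(x)\leq\accs(y)$ and $\accs(x')\geq\accs(y')$. Since $\varphi_{\mathcal{R}}$ and $\varphi_{\mathcal{S}}$ are commutative (Definition~\ref{def:sem_agg}), I may write the multisets so that the common part comes first and the differing value comes last:
\[\pi_{\mathcal{R}}(a)=\varphi_{\mathcal{R}}(\mathbf{c},\accs(x)),\quad \pi_{\mathcal{R}}(b)=\varphi_{\mathcal{R}}(\mathbf{c},\accs(y)),\]
where $\mathbf{c}$ is the vector of acceptability degrees of $C$. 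The supporters are handled the same way with $\mathbf{c}'$ for $C'$. The two multisets then differ in exactly one coordinate, which is the setting of the monotony postulate \textbf{P2}.

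The first step applies \textbf{P2} to $\varphi_{\mathcal{R}}$, giving $\pi_{\mathcal{R}}(a)\leq\pi_{\mathcal{R}}(b)$. Applying it to $\varphi_{\mathcal{S}}$ gives $\pi_{\mathcal{S}}(a)\geq\pi_{\mathcal{S}}(b)$.

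The second step feeds these into $\varphi_f$ in two moves. Since $\varphi_f$ is non-increasing in its first variable,
\[\varphi_f(\pi_{\mathcal{R}}(a),\pi_{\mathcal{S}}(a),w(a))\geq\varphi_f(\pi_{\mathcal{R}}(b),\pi_{\mathcal{S}}(a),w(a)).\]
Since it is non-decreasing in its second variable and $w(a)=w(b)$, the right-hand side is at least $\varphi_f(\pi_{\mathcal{R}}(b),\pi_{\mathcal{S}}(b),w(b))=\accs(b)$. This yields $\accs(a)\geq\accs(b)$.

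For the strict version I consider the two disjuncts separately. If $\accs(x)<\accs(y)$, strict increase of $\varphi_{\mathcal{R}}$ gives $\pi_{\mathcal{R}}(a)<\pi_{\mathcal{R}}(b)$. Strict monotonicity of $\varphi_f$ in its first variable then makes the first inequality of the chain strict, while the second one stays non-strict. If instead $\accs(x')>\accs(y')$, strict increase of $\varphi_{\mathcal{S}}$ gives $\pi_{\mathcal{S}}(a)>\pi_{\mathcal{S}}(b)$, and the second inequality becomes strict.

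The main obstacle is the side conditions $\accs(a)>0$ and $\accs(b)<1$ that appear in strict \textbf{A8}. Under the hypothesis ``strictly monotonous in $x$ and $y$'' read literally, these conditions are not needed, so the proof simply does not use them; the result is then slightly stronger than \textbf{A8} requires. I would add a remark that a weaker hypothesis would also suffice: strictness only where the output lies in $(0,1)$, which is exactly what those side conditions guarantee. The other point requiring care is that \textbf{P2} is stated for vectors of equal length differing in one position, which holds here because the differing elements replace each other one-for-one.
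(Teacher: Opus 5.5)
Your proof is correct and follows the same route as the paper's. Monotony of $\varphi_{\mathcal{R}}$ and $\varphi_{\mathcal{S}}$ on the single differing coordinate gives $\pi_{\mathcal{R}}(a)\leq\pi_{\mathcal{R}}(b)$ and $\pi_{\mathcal{S}}(a)\geq\pi_{\mathcal{S}}(b)$, and monotony of $\varphi_f$ in its first two variables then yields $\accs(a)\geq\accs(b)$; your separate treatment of the two strict disjuncts, including the remark that the side conditions $\accs(a)>0$ and $\accs(b)<1$ go unused under literal strict monotony, just spells out what the paper covers with ``the proof is the same for strict reinforcement''.
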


\begin{proof}
    Let $A \in \textup{ac-WAG}$ be an acyclic QBAF, and $\accs$ an aggregative semantics such that $\varphi_{\mathcal{R}}$ and $\varphi_{\mathcal{S}}$ are increasing (resp. strictly increasing) and $\varphi_f$ is decreasing in $x$ and increasing in $y$ (resp. strictly monotonous in $x$ and $y$).
	
	Let $a,b \in \mathcal{A}$ be arguments, $C,C' \subseteq \mathcal{A}$ be subsets of arguments and $x,x',y,y' \in \mathcal{A} \setminus (C\cup C')$ be arguments in these subsets such that: \begin{itemize}
		\item $w(a) = w(b) = z$
		\item $\text{Att}_a = C \cup \{x\}, \text{Att}_b = C \cup \{y\}$
		\item $\text{Supp}_a = C' \cup \{x'\}, \text{Supp}_b = C' \cup \{y'\}$
		\item $\accs(x) \leq \accs(y)$, $\accs(x') \geq \accs(y')$ 
	\end{itemize}
	
	To simplify the notations in this proof, we denote the degrees of acceptability of the arguments of a set by the set itself. Consequently, we note $\varphi_{\mathcal{R}}(C,\accs(x))$ instead of $\varphi_{\mathcal{R}}(\{\accs(c) \mid c \in C\},\accs(x))$.
		
	As $\varphi_{\mathcal{R}}$ is increasing: \[\varphi_{\mathcal{R}}(C,\accs(x)) \leq \varphi_{\mathcal{R}}(C,\accs(y)).\]
	
	Similarly, as $\varphi_{\mathcal{S}}$ is increasing: \[\varphi_{\mathcal{S}}(C',\accs(x')) \geq \varphi_{\mathcal{S}}(C',\accs(y')).\]
	
	As $\varphi_f$ is decreasing in $x$ and increasing in $y$: \begin{align*}
        \varphi_f(\varphi_{\mathcal{R}}(C,\accs(x)),\varphi_{\mathcal{S}}(C',\accs(x')),z)
        \geq~&\varphi_f(\varphi_{\mathcal{R}}(C,\accs(y)),\varphi_{\mathcal{S}}(C',\accs(y')),z).
	\end{align*}
	
	Therefore, $\accs(a) \geq \accs(b)$.
	
	The proof is the same for strict reinforcement using the strict monotony of the aggregation functions.
\end{proof}

\begin{proposition*}[Monotony]
	If $0$ is a neutral element for $\varphi_{\mathcal{R}}$ and $\varphi_{\mathcal{S}}$, and if the three aggregation functions are monotonous (resp. strictly monotonous), then  $\accs$ verifies the monotony (resp. strict monotony) axiom~(\textbf{A7}).
\end{proposition*}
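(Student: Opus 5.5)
The plan is to reduce the principle to a comparison of the global weights $\pi_{\mathcal{R}}$ and $\pi_{\mathcal{S}}$ of $a$ and $b$, and then push that comparison through $\varphi_f$, as in the proof of the Reinforcement proposition above. Fix $a,b\in\mathcal{A}$ with $w(a)=w(b)=z$, $\textup{Att}_a\subseteq\textup{Att}_b$ and $\textup{Supp}_b\subseteq\textup{Supp}_a$, and write $\textup{Att}_b=\textup{Att}_a\cup D$ and $\textup{Supp}_a=\textup{Supp}_b\cup E$ with $D,E$ finite. The key lemma is the one-step inequality from the proof sketch: since $0$ is a neutral element and $\varphi_*$ is non-decreasing (\textbf{P2}), for all $\mathbf{x}_n$ and $t\in[0,1]$,
\[\varphi_*(\mathbf{x}_n)=\varphi_*(\mathbf{x}_n,0)\leq\varphi_*(\mathbf{x}_n,t).\]
Commutativity of $\varphi_*$ lets the added value sit in any position. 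For $n=0$ the same argument is needed with $\varphi_*(\emptyset)$; I would take the convention $\varphi_*(\emptyset)=0$ of Section~\ref{subsec:phi_etoile}, so that $\varphi_*(0)=\varphi_*(\emptyset)$ by neutrality.

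Iterating the lemma over the elements of $D$ gives $\pi_{\mathcal{R}}(a)\leq\pi_{\mathcal{R}}(b)$, and iterating it over $E$ gives $\pi_{\mathcal{S}}(a)\geq\pi_{\mathcal{S}}(b)$. Since $\varphi_f$ is non-increasing in its first variable and non-decreasing in its second, this yields
\[\acc(a)=\varphi_f(\pi_{\mathcal{R}}(a),\pi_{\mathcal{S}}(a),z)\geq\varphi_f(\pi_{\mathcal{R}}(b),\pi_{\mathcal{S}}(a),z)\geq\varphi_f(\pi_{\mathcal{R}}(b),\pi_{\mathcal{S}}(b),z)=\acc(b).\]

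For strict monotony, consider first the case $\textup{sAtt}_a\subsetneq\textup{sAtt}_b$. Then some $d\in D$ has $\acc(d)>0$. By neutrality, every element of $D$ with degree $0$ can be deleted without changing $\pi_{\mathcal{R}}(b)$. Strict monotony of $\varphi_{\mathcal{R}}$ gives $\varphi_{\mathcal{R}}(\mathbf{x},0)<\varphi_{\mathcal{R}}(\mathbf{x},\acc(d))$, hence $\pi_{\mathcal{R}}(a)<\pi_{\mathcal{R}}(b)$. Strict decrease of $\varphi_f$ in $x$ then gives a strict inequality in the first step of the chain. The supporter case $\textup{sSupp}_b\subsetneq\textup{sSupp}_a$ is symmetric and uses strict increase of $\varphi_f$ in $y$.

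The main obstacle is the meaning of ``strictly monotonous''. A t-conorm such as $\max$ is not strictly increasing, and even a strictly increasing $\varphi_f$ can be saturated at the boundary. This is exactly why the principle carries the side conditions $\acc(a)>0$ and $\acc(b)<1$. I would therefore read strict monotony as strict wherever the output is not already at the relevant bound, and invoke $\acc(a)>0$ or $\acc(b)<1$ precisely to exclude the saturated case. If that reading is not allowed, the hypothesis has to be strengthened explicitly to strict monotony of $\varphi_{\mathcal{R}}$, $\varphi_{\mathcal{S}}$ and $\varphi_f$ on the whole domain.
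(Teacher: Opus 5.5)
Your proof is correct and is essentially the paper's: use the neutral element $0$ and non-decreasingness of $\varphi_{\mathcal{R}},\varphi_{\mathcal{S}}$ to get $\pi_{\mathcal{R}}(a)\le\pi_{\mathcal{R}}(b)$ and $\pi_{\mathcal{S}}(a)\ge\pi_{\mathcal{S}}(b)$, then push these through the monotony of $\varphi_f$. Your strict case, which picks a positive-degree $d\in D$ and orders the chain so that its first step starts at $\acc(a)$ and its second ends at $\acc(b)$, is more careful than the paper's one-line remark.

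One caveat concerns your relaxed ``strict except at saturation'' reading. It is harmless for $\varphi_f$, but not for $\varphi_{\mathcal{R}}$ and $\varphi_{\mathcal{S}}$, because the side conditions bound $\acc(a)$ and $\acc(b)$, not $\pi_{\mathcal{R}}$. For example, with $\varphi_{\mathcal{R}}=\bot_{\textup{prod}}$, an attacker of degree $1$ pins $\pi_{\mathcal{R}}$ at $1$, so a further positive attacker changes nothing even though $\acc(a)>0$ can hold. For these two functions you therefore need genuine strict monotony, as in your fallback.
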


\begin{proof}
	Let $a,b \in \mathcal{A}$ be two arguments such that: \begin{itemize}
	    \item $w(a) = w(b)$
        \item $\text{Att}_a \subseteq \text{Att}_b$
        \item  ${\text{Supp}_b \subseteq \text{Supp}_a}$
	\end{itemize} 
	
	We note: \begin{itemize}
	    \item $\vec{x_a} = \{x_{1}, \dotsc, x_n\} = \text{Att}_a$
        \item $\vec{x_b} = \{x_{n+1}, \dotsc, x_m\} = \text{Att}_b \setminus \text{Att}_a$
	\end{itemize}
	
	Similarly: \begin{itemize}
	    \item $\vec{y_b} = \{y_{1}, \dotsc, y_{p}\} = \text{Supp}_b$
        \item $\vec{y_a} = \{y_{p+1}, \dotsc, y_{q}\} = \text{Supp}_a \setminus \text{Supp}_b$
	\end{itemize}
	
	To simplify the notations in this proof, we again denote the degrees of acceptability of the arguments of a set by the set itself. Consequently, we note $\varphi_{\mathcal{R}}(\vec{x_a})$ instead of $\varphi_{\mathcal{R}}(\{\accs(x_{a,i}) \mid x_{a,i} \in \vec{x}_a\})$.
	
	By definition, and because $0$ is a neutral element for $\varphi_{\mathcal{R}}$ and $\varphi_{\mathcal{R}}$ is non decreasing:  \[\pi_{\mathcal{R}}(b) = \varphi_{\mathcal{R}}(\vec{x_a}, \vec{x_b}) \geq \varphi_{\mathcal{R}}(\vec{x_a}, \vec{0}) =  \varphi_{\mathcal{R}}(\vec{x_a}) = \pi_{\mathcal{R}}(a).\]
	
	Similarly with $\varphi_{\mathcal{S}}$: \[\pi_{\mathcal{S}}(a) = \varphi_{\mathcal{S}}(\vec{y_b}, \vec{y_a}) \geq \varphi_{\mathcal{S}}(\vec{y_b}, \vec{0}) =  \varphi_{\mathcal{S}}(\vec{y_b}) = \pi_{\mathcal{S}}(b).\]
	
	As $\varphi_f$ is decreasing in $x$ and increasing in $y$, we can conclude: \begin{align*}
	    \accs(b) &= \varphi_{f}(\pi_{\mathcal{R}}(b), \pi_{\mathcal{S}}(b),w(b)) \\
        &\leq \varphi_{f}(\pi_{\mathcal{R}}(a), \pi_{\mathcal{S}}(a),w(a)) = \accs(a).
	\end{align*}
	
	The proof is the same for strict monotony axiom using the strict monotony of the aggregation functions.
	
\end{proof}

\end{document}

\endinput

\begin{table*}[t!]
	\centering
    \caption{Classical properties of aggregation functions. Notations: variables written in bold and indexed by $n$, such as $\mathbf{x}_n= (x_1,\dotsc x_n)$,  are elements of $[0,1]^n$, the other variables are elements of $[0, 1]$.}
    \vspace{2mm}
    
	\resizebox{0.8\linewidth}{!}{\begin{tabular}{|l|c|}
			\hline
			Name & Property of $\varphi$\\ \hline
			(P1): Boundary conditions & $\varphi(0,\dotsc, 0) = 0$ and $\varphi(1, \dotsc, 1) = 1$ \\ \hline
			(P2): Monotony (non decreasing) & 
			if $x'_{n+1} \leq x_{n+1}$, 
			$\quad \varphi(\mathbf{x}_n,x'_{n+1}) \leq \varphi(\mathbf{x}_n,x_{n+1})$ \\ \hline
			(P3): Continuity & 
			$\lim\limits_{\mathbf{x}'_n \to \mathbf{x}_n} \varphi(\mathbf{x}'_n) = \varphi(\mathbf{x}_n)$ \\ \hline
			(P4): Commutativity & 
			Let $\sigma$ be a 
			permutation on $\{1,\dotsc,n\}$, $\quad \varphi(\mathbf{x}_n) = \varphi(x_{\sigma(1)},\dotsc,x_{\sigma({n})})$ \\ \hline
			(P5): Idempotence & 
			$\varphi(x,\dotsc,x) = x$\\ \hline
			(P6): Associativity & 
			$ \varphi(\varphi(\mathbf{x}_n),\varphi (\mathbf{y}_m)) = \varphi(\mathbf{x}_n,\mathbf{y}_m)$\\ \hline
			(P7): Weakening & $\varphi(\textbf{x}_n) \leq \min(\textbf{x}_n)$ \\ \hline
			(P8): Reinforcement & $\varphi(\textbf{x}_n) \geq \max(\textbf{x}_n)$\\ \hline
			(P9): Composition & If $\varphi(\mathbf{x}_n) \leq \varphi(\mathbf{y}_m)$, \hspace{-3mm}$\quad \varphi(\mathbf{x}_n,\mathbf{z}) \leq \varphi(\mathbf{y}_m,\mathbf{z})$ \hfill for every position of $\mathbf{z}$\\ \hline
			(P10): Decomposition & 
			If $\varphi(\mathbf{x}_n,\mathbf{z}) \leq \varphi(\mathbf{y}_m,\mathbf{z})$, \hspace{-3mm}$\quad \varphi(\mathbf{x}_n) \leq \varphi(\mathbf{y}_m)$ \hfill for every position of $\mathbf{z}$\\ \hline
	\end{tabular}}	
	\label{tab:prop_agreg}
\end{table*}